\crefname{section}{Sec.}{Secs.}
\Crefname{section}{Section}{Sections}
\Crefname{table}{Table}{Tables}
\crefname{table}{Tab.}{Tabs.}
\newcounter{daggerfootnote}
\begin{document}

\title{Global Convergence of MAML and Theory-Inspired Neural Architecture Search for Few-Shot Learning} 
\author{Haoxiang Wang\thanks{equal contribution} \quad Yite Wang\footnotemark[1] \quad Ruoyu Sun \quad Bo Li\\
University of Illinois Urbana-Champaign\\
{\tt\small \{hwang264,yitew2,ruoyus,lbo\}@illinois.edu}
}
\maketitle

\begin{abstract}
Model-agnostic meta-learning (MAML) and its variants have become popular approaches for few-shot learning. However, due to the non-convexity of deep neural nets (DNNs) and the bi-level formulation of MAML, the theoretical properties of MAML with DNNs remain largely unknown. In this paper, we first prove that MAML with over-parameterized DNNs is guaranteed to converge to global optima at a linear rate. Our convergence analysis indicates that MAML with over-parameterized DNNs is equivalent to kernel regression with a novel class of kernels, which we name as Meta Neural Tangent Kernels (MetaNTK). Then, we propose MetaNTK-NAS, a new training-free neural architecture search (NAS) method for few-shot learning that uses MetaNTK to rank and select architectures. Empirically, we compare our MetaNTK-NAS with previous NAS methods on two popular few-shot learning benchmarks, miniImageNet, and tieredImageNet. We show that the performance of MetaNTK-NAS is comparable or better than the state-of-the-art NAS method designed for few-shot learning while enjoying more than 100x speedup. We believe the efficiency of MetaNTK-NAS makes itself more practical for many real-world tasks. Our code is released at \url{github.com/YiteWang/MetaNTK-NAS}.
\end{abstract}
\vspace{-1em}
\section{Introduction}\label{sec:intro}
Meta-learning, or learning-to-learn (LTL) \cite{learningtolearn}, has received much attention due to its applicability in few-shot image classification \cite{few-shot-survey,hospedales2020metalearning}, meta reinforcement learning \cite{vanschoren2018meta,Finn:EECS-2018-105,hospedales2020metalearning}, and other domains such as natural language processing \cite{yu-etal-2018-diverse,bansal2019learning} and computational biology \cite{luo2019mitigating}. The primary motivation for meta-learning is to fast learn a new task from a small amount of data, with prior experience on similar but different tasks. There are a few meta learning approaches designed for few shot image classification, such as metric-based \cite{rnn1,snell2017prototypical}, model-based \cite{santoro2016meta,munkhdalai2017meta}, optimizer-based \cite{ravi2016optimization,li2016learning}.
Model-agnostic meta-learning (MAML) is a popular gradient-based 
meta-learning approach, due to its simplicity and good performance in many meta-learning tasks \cite{maml,Finn:EECS-2018-105}. 
MAML formulates a bi-level optimization problem, where the inner-level objective represents the adaption to a given task, and the outer-level objective is the meta-training loss.
There are many variants of MAML, \cite{reptile,adaptive-GBML,prob-maml,finn2019online,imaml},
and they are almost always applied together with deep neural networks (DNNs) in practice.

Even though MAML with DNNs is empirically successful, this approach still lacks a thorough theoretical understanding. For example, the most common practice is to use \textit{gradient descent} approach (e.g., SGD or Adam \cite{adam}) to train MAML with DNNs, and the optimization can usually obtain almost zero training loss and 100\% training accuracy (i.e., global convergence) with suitable hyper-parameters \cite{maml,howtotrainmaml}. However, prior theoretical works could not account for the global convergence of MAML trained with gradient descent on non-linear neural nets of more than two layers. Hence, a crucial question that remains unknown for MAML optimization is: 

\emph{Can MAML with DNNs converge to global minima?} 

This question motivates us to analyze the optimization properties of MAML with DNNs, and we provide a positive answer with rigorous theoretical analysis. Briefly, for over-parameterized DNNs, we analyze the optimization trajectory of MAML with square loss and prove that the training loss is guaranteed to converge to zero at a linear rate.
Additionally, in this convergence analysis, we find the DNN trained by MAML can be described by a kernel regression, with a novel class of kernels that we name as \textbf{\textit{Meta Neural Tangent Kernels}} (MetaNTK).

One may wonder whether our theory has any practical implications. 
Intuitively, our theory reveals that  MetaNTK is closely related to the performance of MAML.
To demonstrate the practical value of our theory, we provide a concrete use case of MetaNTK: MetaNTK can help us efficiently find neural net architecture for few-shot learning.

Most meta-learning algorithms adopt standard network structures such as ConvNets \cite{lecun2015deep}, ResNets \cite{resnet} and Wide ResNets \cite{wide-resnet} for few-shot image classification, the most popular task to benchmark meta-learning. However, these network structures were developed on supervised learning benchmarks such as CIFAR \cite{cifar}, and ImageNet \cite{imagenet}, and recently, it has been shown that these popular structures actually \textit{overfit} to the supervised learning task on these datasets \cite{recht2019imagenet}. This indicates that the popular network structures may not be optimal for tasks other than supervised learning, such as few-shot learning. Thus, one may naturally consider neural architecture search (NAS) \cite{NAS-RL,NAS-survey,DARTS} to automatically search for neural net architectures that are suitable for few-shot learning. To this end, prior works \cite{auto-meta,metaNAS,TNAS} designed NAS methods specific for few-shot learning, but they require substantial computational cost (e.g., the search cost of \cite{auto-meta} and \cite{metaNAS} on mini-ImageNet is 100 and 7 GPU days, respectively; the training of \cite{TNAS} takes 150 GPU days on miniImageNet), which makes them impractical for many real-world tasks and not environmental-friendly \cite{dhar2020carbon,xu2021survey}. Hence, a natural  question is:

\textit{Can we accelerate NAS for few-shot learning to have much lower or even negligible search cost (compared to training cost)?}

We provide an efficient solution to this quest, \textbf{\textit{MetaNTK-NAS}}, which is inspired by the MetaNTK we derive in our global convergence analysis of MAML. Briefly, we use the condition number of MetaNAK as an indicator for the trainability of networks under MAML. Since MetaNTK is directly computed at initialization, no training is needed in the search stage, leading to a surprisingly small search cost (e.g., less than 0.07 GPU day on mini-ImageNet).

Our main contributions are summarized below:
\vspace{-0.45em}
\begin{itemize}[leftmargin=*,align=left]
    \setlength\itemsep{0.01em}
    \item \textbf{Global Convergence and Induced Kernels of MAML\footnote{This part was also presented in a prior tech report of ours \cite{meta-ntk}.}}: 
    We prove that with over-parameterized DNNs (i.e., DNNs with a large number of neurons in each layer), MAML is guaranteed to \textit{converge to global optima with zero training loss} at a linear rate. The key to our proof is to develop bounds on the gradient of the MAML objective, and then analyze the optimization trajectory of DNN parameters trained under MAML. 
    Furthermore, we show that in the over-parameterization regime, the output of MAML-trained networks becomes equivalent to the output of a special kernel regression with a new class of kernels, Meta Neural Tangent Kernels (MetaNTK).
    \item \textbf{Theory-Inspired Efficient NAS for Few-Shot Learning:} 
We propose MetaNTK-NAS, a new NAS method for few-shot learning that takes advantage of MetaNTK. Briefly, it uses the condition number of the MetaNTK of each network as an indicator for its trainability under meta-learning. Empirically, our MetaNTK-NAS is comparable or better than MetaNAS \cite{metaNAS}, the state-of-the-art NAS method for few-shot learning, on both miniImageNet and tieredImageNet, while consuming 100x less cost in the search process.
\end{itemize}

\section{Related Works}

\textbf{Meta-Learning Optimization.} The MAML family (i.e., MAML \cite{maml} and its variants) is a popular approach for meta-learning. Several recent works theoretically analyze MAML or its variants in the case of \textit{convex} objectives \cite{finn2019online,provable-gbml,adaptive-GBML,hu2020biased,xu2020meta,bai2021important}.
However, neural nets are \textit{non-convex}, so these works do not account for common practices of MAML with neural nets.
On the other hand, \cite{maml_nonconvex,ji2020multistep,imaml,zhou2019metalearning} consider the non-convex setting, but they only provide convergence to \textit{stationary points}. Since \textit{stationary points} can have high training/test error, the convergence to them is not very satisfactory. In particular, \cite{wang2020global} proves the global convergence of MAML for a special class of \textit{two-layer} networks that has frozen last layers with binary weight values. The unrealistic setting of \cite{wang2020global} makes its results much weaker than our work, where our analysis is compatible with any depth and has no restriction on layer weight values. Notably, the MAML family shares similarities with multi-task learning from an optimization perspective \cite{wang2021bridging}.

\textbf{Neural Tangent Kernels.} Recently, there is a line of works studying the optimization of neural networks in the setting of supervised learning, e.g., \cite{AllenZhu2018ACT,du2018gradient,ntk,kawaguchi2019elimination,oymak2019moderate}, and many of them are restricted to two-layer networks. Notably, \cite{ntk} proves that gradient flow on infinitely wide neural nets of any depth is guaranteed to converge to global optima, while its training dynamics can be described by kernel regression with neural tangent kernels (NTK).
Further, \cite{lee2019wide} relaxes some assumptions of \cite{ntk}, and proves that gradient descent on finitely wide neural nets of any depth also enjoys global convergence as long as the width is large enough and the learning rate is sufficiently low. Notice that these works are tailored for supervised learning, thus it is unknown if global convergence is also guaranteed in other problems using neural networks. In this work, we leverage the tools of NTK from \cite{ntk,lee2019wide} to analyze MAML in the few-shot learning setting, and our analysis can be easily generalized to other variants of MAML such as \cite{raghu2019rapid,imaml,R2D2}.

\textbf{Neural Architecture Search.} Neural Architecture Search (NAS) is proposed to automate neural architecture discovery to replace cumbersome manual designs for various deep learning tasks. Early works successfully utilize reinforcement learning \cite{NAS-RL, baker2016designing} and evolutionary algorithm \cite{real2019aging} to find high-performance architectures. However, most of these methods are computationally expensive. To enable efficient architecture search, DARTS \cite{DARTS} proposed continuous relaxation of the architecture representation to allow search via gradient descent. Unfortunately, DARTS is hard to optimize and may suffer from performance degradation due to its weight-sharing strategy \cite{yu2019evaluating, wang2020rethinking}.
To further accelerate architecture search, \cite{mellor2021neural} proposed training-free NAS to evaluate randomly initialized architectures, thus fully eliminating neural network training during the search. Some following training-free methods propose to search with NTK \cite{KNAS, TE-NAS}, linear regions \cite{TE-NAS} and pruning-related criterion \cite{zeroproxy}.
On the other hand, there are a few works applying NAS to few-shot learning using some meta-learning algorithms. \cite{auto-meta} apply progressive neural architecture search to few-shot learning and \cite{TNAS, metaNAS} adopt DARTS-variants. But these approaches are very costly (e.g., \cite{auto-meta,TNAS} take more than 100 GPU days and \cite{metaNAS} takes over 1 GPU week), thus it remains unexplored how to \textit{efficiently} apply NAS to few-shot learning.

\section{Preliminaries}

\paragraph{Few-Shot Learning} Consider a few-shot learning problem with a set of \textit{training tasks} that contains $N$ \textit{supervised-learning tasks} $\{\task_i\}_{i=1}^{N}$. Each task is represented as
$$\task_i= (\xyxyi)\in \mathbb R^{n\times d} \times \mathbb R^{n  k} \times \mathbb R^{m\times d} \times \mathbb R^{m  k},$$ where $(\sX_i,\sY_i)$ represents
$n$ \textit{query} samples (i.e. test samples of $\task_i$) with corresponding labels, while $(\sX_i',\sY_i')$ represents $m$ \textit{support} samples (i.e. training samples of $\task_i$) with labels. Then, we denote $$\X=(\sX_i)_{i=1}^N,\Y=(\sY_i)_{i=1}^N,\X'=(\sX_i')_{i=1}^N,\Y'=(\sY_i')_{i=1}^N.$$
In few-shot learning, $\{\task_i\}_{i=1}^N$ are training tasks for meta-learners to train on (i.e., for meta-training). 
In the evaluation stage, an arbitrary test task $\task = (\xyxy)$ is picked, and the labeled support samples $(X',Y')$ are given to the trained meta-learner as input, then the meta-learner is asked to predict the labels of the query samples $X$ from $\task$.
Notice that this few-shot learning problem above can also be called a $n$-shot $k$-way learning problem. 
\vspace{-1em}
\paragraph{Neural Net Setup} Consider a neural network $f_\theta$ with $L$ hidden layers, where parameters $\theta \in \mathbb R^D$.
For $i\in[L]$, we use $l_i$ to denote the width of the $i$-th hidden layer. Without loss of generality, we consider all hidden layers have the same width\footnote{This same-width assumption is not a necessary condition. One can also define $l=\min{\{l_i\}_{i=1}^L}$ instead and all theoretical results in this paper still hold true.}, i.e., $l_1=l_2=\dots=l_L=l$. We consider the parameters $\theta$ are Gaussian initialized\footnote{Kaiming initialization \cite{resnet}is also a kind of Gaussian initialization.}, with details shown in Appendix \ref{supp:NTK-setup}.
\vspace{-1em}
\paragraph{MAML} The algorithm of MAML is shown in Algorithm \ref{alg:mamlsup}. For simplicity, it shows MAML with one inner-loop update step, while our theory is compatible with arbitrary steps of inner-loop update (e.g., Line 5 of Algorithm \ref{alg:mamlsup} can be modified to have multiple gradient descent steps). For convenience, we define a \textit{meta-output} function $F$ as the output of the model $f$ with adapted parameters. For MAML with one inner-loop step, the meta-output on arbitrary task $\task = (X,Y,X',Y')$ is
\begin{align}
    F_{\theta}(\xxy) = f_{\theta'}(X), \text{ s.t. } \theta' = \theta - \nabla_{\theta} \ell(f_\theta(X'),Y') \label{eq:meta-output-1-step}
\end{align}
where $f_{\theta'}(X) = (f_{\theta'}(x))_{x\in X}$ is the concatenation of model outputs on all samples in $X$. 

In this paper, we consider the square loss function $\ell(\hat y,y ) = \frac 1 2 \|\hat y - y\|_2^2$ for convenience. Then the MAML training loss is
\vspace{-.8em}
\begin{align}
    \loss(\theta) &= \frac{1}{2}\sum_{i\in[N]}\left\|F_{\theta}(\xxyi)-Y_i\right\|_2^2\nonumber\\ 
    &= \frac 1 2 \|F_{\theta}(\XXY) - \Y\|_2^2 \label{eq:MAML-obj}
\vspace{-1em}
\end{align}
where $F_\theta(\XXY) = \left(F_\theta(\xxyi)\right)_{i=1}^N $ is the 
concatenation of meta-outputs.

\vspace{-1em}
\paragraph{Second-Order Gradient (Hessian) in MAML} It is well known that gradient descent on the MAML objective \eqref{eq:MAML-obj} induces Hessian terms. For example, for MAML with one inner-loop step such as \eqref{eq:meta-output-1-step}, the gradient of objective \eqref{eq:MAML-obj} is
\begin{align}
    \nabla_\theta \loss (\theta) &= \sum_{i\in[N]} \nabla_{\theta}F_{\theta}(\xxyi)\left(F_{\theta}(\xxyi)-Y_i\right)\nonumber\\
    & = \sum_{i\in[N]} \left(I - \nabla_\theta^2 \ell(f_\theta(X'),Y')\right) \nabla_{\theta'} f_{\theta'}(X_i) \nonumber\\
    &\qquad \qquad  \cdot \left(F_{\theta}(\xxyi)-Y_i\right)\label{eq:hessian-term-demo} 
\end{align}
where $\nabla_\theta^2 \ell(f_\theta(X'),Y') \in \bR^{D\times D}$ is a Hessian term. Modern neural nets typically have millions of parameters, e.g., $D > 10^{7}$ in ResNet-12 \cite{resnet} that is commonly used in recent few-shot learning works \cite{metaOptNet,tian2020rethink}. Thus the Hessian terms usually have huge computation and memory cost.

\begin{algorithm}[t!]
\caption{MAML for Few-Shot Learning (version of one inner-loop step)}
\label{alg:mamlsup}
\begin{algorithmic}[1]
{\small
\REQUIRE $\{\task_i\}_{i=1}^N$: Training Tasks
\REQUIRE $\eta$, $\lambda$: Learning rate hyperparameters
\STATE Randomly initialize $\theta$
\WHILE{not done}
  \FORALL{$\task_i$} 
    \STATE Evaluate the loss of $f_\theta$ on support samples of $\task_i$: $ \ell(f_\theta(X_i'),Y_i')$
    \STATE Compute adapted parameters $\theta_i'$ with gradient descent: $\theta_i'=\theta-\lambda \nabla_\theta \ell(f_\theta(X_i'),Y_i')$
    \STATE Evaluate the loss of $f_{\theta_i'}$ on query samples of $\task_i$: $\ell(f_{\theta_i'}(X_i), Y_i)$
 \ENDFOR
 \STATE Update parameters with gradient descent:\\
    $\theta \leftarrow \theta - \eta \nabla_\theta \sum_{i = 1}^N  \ell ( f_{\theta_i'}(X_i),Y_i)$ 
\ENDWHILE
}
\end{algorithmic}
\end{algorithm}

\section{Theoretical Results}
\textbf{Notation.} For notational convenience, we denote the Jacobian of the meta-output on training data as $J(\theta)=\nabla_\theta F_{\theta}(\XXY)$, and define a kernel function, 
\begin{align}\label{eq:def-empirical-metantk}
\metantk_{\theta}(\cdot, \ast)\coloneqq \frac{1}{l}\nabla_\theta F_\theta(\cdot) \nabla_\theta F_\theta(\ast)^\top ~,
\end{align}
which we name as \textit{Meta Neural Tangent Kernel} function (MetaNTK). As the width $l$ approaches infinity, for Gaussian randomly initialized parameters $\theta_0$, the kernel function $\metantk_{\theta_0}(\cdot, \ast)$ becomes a deterministic function independent of $\theta_0$ (proved by Lemma \ref{lemma:kernel-convergence} in Appendix \ref{supp:global-convergence}), denoted as
\begin{align}
\metaNTK(\cdot, \ast) \coloneqq \lim_{l \rightarrow \infty} \metantk_{\theta_0}(\cdot, \ast) \label{eq:def-metaNTK-inf-lim}
\end{align}
For convenience, we denote $F_t(\cdot) \triangleq F_{\theta_t}(\cdot)$, $f_t(\cdot) \triangleq f_{\theta_t}(\cdot)$ and $\metantk_t(\cdot,\ast) \triangleq \metantk_{\theta_t}(\cdot,\ast)$. 
For any diagonalizable matrix $M$, we use $\lev(M)$ and $\Lev(M)$ to denote its least and largest eigenvalues.

\subsection{Global Convergence of MAML}\label{sec:global-convergence}

Suppose the neural network is sufficiently over-parameterized, i.e., the width of hidden layers, $l$, is large enough. Then, we prove that gradient descent on the MAML objective \eqref{eq:MAML-obj} is guaranteed to converge to global optima with zero training loss at a linear rate. The detailed setup, assumptions, and proof can be found in Appendix \ref{supp:global-convergence}. We provide a simplified theorem below.
\begin{theorem}[Global Convergence]\label{thm:global-convergence}
Define $\metaNTK = \lim_{l\rightarrow \infty} \frac{1}{l}J(\theta_0)J(\theta_0)^T$ and $\eta_0=\frac{2}{\Lev(\metaNTK) + \lev(\metaNTK)}$. For arbitrarily small $\delta > 0$, and there exist $R>0$, $l^* \in\mathbb N$, and $\lambda_0>0$, such that: for width $l\geq l^*$, running gradient descent with learning rates $\eta < \frac{\eta_0}{l}  $ and $\lambda < \frac{\lambda_0}{l}$ over random initialization, the following upper bound on the training loss holds true with probability at least $1 - \delta$:
\begin{align}
    \loss(\theta_t) &= \frac{1}{2}\|F_{\theta_t}(\XXY) - \Y \|_2^2 \nonumber\\
    &\leq \left(1 - \frac {\eta_0 \lev(\metaNTK)}{3}\right)^{2t} R \, .  
    \label{eq:convergence-loss}
\end{align}
\end{theorem}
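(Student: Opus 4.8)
The plan is to run the over-parameterization argument of \cite{ntk,lee2019wide}, but on the \emph{meta-output} $F_\theta$ in place of an ordinary network output, so that its Jacobian $J(\theta)=\nabla_\theta F_\theta(\XXY)$ carries the inner-loop Hessian terms already visible in \eqref{eq:hessian-term-demo}. The backbone is a single induction on $t$ maintaining two invariants: (i) the residual $r_t\coloneqq F_{\theta_t}(\XXY)-\Y$ contracts geometrically, and (ii) the iterate stays in a ball $B(\theta_0,\rho)$ on which the empirical kernel $\metantk_\theta=\tfrac1l J(\theta)J(\theta)^\top$ remains within a small relative neighborhood of its deterministic limit $\metaNTK$, keeping its spectrum trapped near $[\lev(\metaNTK),\Lev(\metaNTK)]$. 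Lemma~\ref{lemma:kernel-convergence} supplies the base case, namely $\lev(\metantk_{\theta_0})>0$ and concentration of $\|r_0\|$, each failing with probability at most $\delta$ over the Gaussian initialization. I would then fix $\rho$ as a constant of order $\|r_0\|/\lev(\metaNTK)$, set $R$ to a high-probability upper bound on $2\loss(\theta_0)=\|r_0\|^2$, and take $l^\ast$ large.

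First I would write the gradient-descent residual recursion. Since $\nabla_\theta\loss(\theta_t)=J(\theta_t)^\top r_t$, the fundamental theorem of calculus along $\theta_{t+1}=\theta_t-\eta\,J(\theta_t)^\top r_t$ gives $r_{t+1}=(I-\eta\,\widetilde J_t J(\theta_t)^\top)\,r_t$, where $\widetilde J_t$ is the meta-Jacobian averaged over the segment $[\theta_t,\theta_{t+1}]$. Writing $\widetilde J_t J(\theta_t)^\top=l\,\widetilde{\metantk}_t$ and using $\eta<\eta_0/l$ with invariant (ii), the effective kernel step obeys $\eta l<\eta_0$, so every eigenvalue of $I-\eta l\,\widetilde{\metantk}_t$ lies in $(-1,1)$ and, once the kernel deviation is small relative to $\lev(\metaNTK)/\Lev(\metaNTK)$, $\|I-\eta l\,\widetilde{\metantk}_t\|\le 1-\tfrac{\eta_0}{3}\lev(\metaNTK)$; the slack factor $\tfrac13$ is the budget absorbing this $O(\rho)$ kernel deviation and the gap between $\widetilde J_t$ and $J(\theta_t)$. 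Iterating gives $\|r_t\|\le(1-\tfrac{\eta_0}{3}\lev(\metaNTK))^t\|r_0\|$, and squaring through $\loss(\theta_t)=\tfrac12\|r_t\|^2$ yields the exponent $2t$ and the bound \eqref{eq:convergence-loss}. To close invariant (ii) I would bound the drift by $\|\theta_{t+1}-\theta_t\|\le\eta\|J(\theta_t)\|\,\|r_t\|\le\tfrac{\eta_0}{\sqrt l}\sqrt{\Lev(\metaNTK)}\,\|r_t\|$ and sum the geometric series: the total displacement is $O(\|r_0\|/\sqrt l)$, which is below $\rho$ once $l\ge l^\ast$, so $\theta_{t+1}\in B(\theta_0,\rho)$ and $\metantk_{\theta_{t+1}}$ stays well-conditioned.

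The genuine obstacle — and where MAML departs from the supervised NTK analysis — is controlling the inner-loop Hessian inside $J$. By the chain rule the meta-Jacobian factors as $J(\theta)=\bigl(I-\lambda\,\nabla_\theta^2\ell(f_\theta(X'),Y')\bigr)\,\nabla_{\theta'}f_{\theta'}(X)$, so the local stability of $\metantk_\theta$ and the very existence and positive definiteness of $\metaNTK$ both hinge on taming this factor. I would split $\nabla_\theta^2\ell$ into its Gauss--Newton part $\nabla_\theta f_\theta(X')^\top\nabla_\theta f_\theta(X')$, whose nonzero eigenvalues are $O(l)$ so that $\lambda$ times it stays $O(\lambda_0)$ and converges to a clean NTK-composed operator, and its residual-weighted network-Hessian part, which I would show has operator norm $O(1/\sqrt l)$ uniformly on $B(\theta_0,\rho)$ via the Gaussian-initialization bounds on $\|\nabla_\theta^2 f_\theta\|$ from \cite{lee2019wide}. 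The scaling $\lambda<\lambda_0/l$ is exactly what makes this work: it forces the inner update $\|\theta'-\theta\|=\lambda\|\nabla_\theta\ell\|=O(1/\sqrt l)$, so the adapted point $\theta'$ also stays in the ball and $\nabla_{\theta'}f_{\theta'}(X)$ is controlled, while the surviving Gauss--Newton contribution remains finite. I would isolate these Hessian estimates — together with the meta-output Lipschitz bound needed to control $\widetilde J_t-J(\theta_t)$ — as separate lemmas proved by the concentration machinery of \cite{ntk,lee2019wide}, and then feed them into the induction above; the rest is routine geometric bookkeeping.
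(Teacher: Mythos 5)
Your proposal is correct in outline and its outer loop is essentially the paper's own argument: the same induction on $t$ maintaining geometric contraction of the residual together with confinement of $\theta_t$ to an $O(l^{-1/2})$ ball, the same mean-value-theorem recursion $r_{t+1}=(I-\eta\,\widetilde J_t J(\theta_t)^\top)r_t$, the same three-way budget split of the contraction factor (the $\tfrac{1}{3}$ slack absorbing $\|I-\eta_0\metaNTK\|_{op}\le 1-\eta_0\lev(\metaNTK)$ plus the width-induced deviation $\|\metaNTK-\metantk_0\|$ plus the training-drift term $\eta\|J(\theta_0)J(\theta_0)^\top-J(\theta_t^\mu)J(\theta_t)^\top\|$), and the same supporting lemmas (your base-case concentration of $\|r_0\|$ is the paper's Lemma~\ref{lemma:bounded_init_loss}, not Lemma~\ref{lemma:kernel-convergence}, a minor misattribution). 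Where you genuinely diverge is in how you establish the key Jacobian-stability estimate (the paper's Lemma~\ref{lemma:local-Liphschitzness}). You factor the one-step meta-Jacobian as $J(\theta)=(I-\lambda\nabla_\theta^2\ell)\nabla_{\theta'}f_{\theta'}(X)$ and tame the Hessian by a Gauss--Newton splitting: the residual-weighted network-Hessian piece is killed by $\|\nabla_\theta^2 f_\theta\|_{op}=o(\sqrt l)$ (the paper gets this from \cite{hessian-ntk}, not \cite{lee2019wide}), while $\lambda<\lambda_0/l$ renders $\lambda$ times the $O(l)$-spectrum Gauss--Newton piece an $O(\lambda_0)$ operator and forces $\|\theta'-\theta\|=O(l^{-1/2})$; the scalings here all check out, and this is arguably more elementary than what the paper does. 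The paper instead linearizes the adapted output via the NTK closed form, performs an SVD of $\tfrac{1}{\sqrt l}\nabla_\theta f_\theta(X')$ to collapse the meta-Jacobian to the clean expression $\nabla_\theta f_\theta(X)\,e^{-\lambda\conjntk_\theta\tau}$, and then invokes perturbation bounds on matrix exponentials \cite{1977bounds,van1977sensitivity} to get Lipschitzness. What the paper's route buys is uniformity in the inner-loop horizon: the exponential $e^{-\lambda\conjntk_\theta\tau}$ encodes arbitrary $\tau$ (multi-step or continuous-time adaptation), which is the form of MetaNTK used throughout the paper (cf.\ the $\widetilde{\T}_{\NTK}^{\lambda}(\cdot,\tau)$ factors in Definition~\ref{def:metaNTK}), whereas your explicit factorization is specific to a single inner gradient step. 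For the theorem as stated with the one-step meta-output \eqref{eq:meta-output-1-step} your argument suffices; to recover the paper's full generality you would need to control products $\prod_i(I-\lambda H_{\theta_i})$ across inner steps, and showing that these perturbations do not compound is exactly the difficulty the matrix-exponential machinery is deployed to resolve — so you should either restrict your claim to one inner step or add that extension explicitly.
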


\paragraph{Main Proof Ideas} Here, we depict the big picture of our global convergence proof for MAML with DNNs. To prove the global convergence of MAML with DNNs, we first obtain Lemma \ref{lemma:local-Liphschitzness} (shown in Appendix \ref{supp:global-convergence}), which indicates that the Jacobian of the meta-output, $J$, changes locally in a small region under perturbations on initial network parameters. Then, we analyze the training dynamics of MAML and show that the parameter movement during training is confined in a small region. Hence, the Jacobian is stable across training, indicating the loss landscape is almost quadratic in the local neighborhood of initialization. Importantly, we find that for sufficiently large width, there definitely exists a global minimum with zero training loss inside the neighborhood of any parameter initialization $\theta_0$. As a result, the almost quadratic loss landscape guarantees that gradient descent with a sufficiently small learning rate will reach this global minimum in the neighborhood at a linear rate.

\paragraph{Challenges} Even though the big picture of our global convergence proof for MAML may look intuitive and simple, there exist several severe challenges in the analysis that do not appear in the supervised learning setting. Here we give one example of the challenges we deal with in the analysis: MAML is a bi-level optimization problem, and each (outer-loop) gradient descent step on its training objective \eqref{eq:MAML-obj} consists of (multiple) inner-loop gradient descent steps. Hence, the Jacobian (i.e., the gradient of the meta-output) consists of Hessian (i.e., second-order gradient) terms that do not exist in supervised learning models, and these Hessian terms appear in the form of matrix exponentials. In order to prove the local stability of the Jacobian (i.e., Lemma \ref{lemma:local-Liphschitzness}), we utilize non-trivial theoretical tools on matrix exponentials \cite{1977bounds,van1977sensitivity}
that can tackle this challenge.

\subsection{Equivalence between MAML and Kernels}
\textbf{Analytical Expression of Meta-Output.} In the setting of Theorem \ref{thm:global-convergence}, the training dynamics of the MAML can be described by a differential equation
\begin{align}\label{eq:train-dynamics-ODE}
\frac{d F_t(\XXY)}{d t}=- \eta  \, \metantk_0   (F_t(\XXY) - \Y)
\end{align}
where we denote $\metantk_0 \equiv \metantk_{\theta_0}((\XXY),(\XXY))$ and $F_t \equiv F_{\theta_t}$ for convenience. Notice that \eqref{eq:train-dynamics-ODE} is a first-order ODE. The solution of this ODE gives rise to the analytical expression of $F_t$ on arbitrary task $\task=(\xyxy)$,
\begin{align}  \label{eq:F_t:main_text}
F_t(\xxy) &=F_{0}(\xxy)\\
 +&\metantk_0(\xxy) \T^{\eta}_{\metantk_0}(t)\left(\Y-F_0(\XXY)\right)\nonumber
\end{align}
where $\metantk_0(\cdot)\equiv \metantk_{\theta_0}(\cdot,(\XXY))$ and $\T^{\eta}_{\metantk_0}(t)=\metantk_0^{-1}\left(I- e^{-\eta\metantk_0 t}\right)$ are shorthand notations.

Here, we provide a formal definition of NTK \cite{ntk,lee2019wide}, which will be used shortly.
\begin{definition}[NTK] For any neural net function $f: \bR^{k} \mapsto \bR$ with randomly initialized parameters $\theta_0$, its neural tangent kernel function is defined as $\ntk_{0}(\cdot, \ast) \equiv \ntk_{\theta_0}(\cdot, \ast) \coloneqq \nabla_{\theta_0}f_{\theta_0}(\cdot) \nabla_{\theta_0}f_{\theta_0}(\ast)^\top $, and it converges to a deterministic kernel as the width $l$ approaches infinity \cite{ntk,lee2019wide},
\begin{align}
    \NTK(\cdot,\ast) = \lim_{l\rightarrow \infty} \ntk_{0}
\end{align}
\end{definition}
Below, we provide an analytical expression of Meta Neural Tangent Kernel (MetaNTK) for MAML in the infinite width limit, indicating that the kernel function $\metaNTK$ defined in \eqref{eq:def-metaNTK-inf-lim} can be equivalently viewed as a composite kernel function built upon NTK function $\NTK(\cdot, \ast)$. The derivation of this expression can be found in Lemma \ref{lemma:kernel-convergence} in Appendix \ref{supp:global-convergence}.
\begin{definition}[MetaNTK in the Infinite Width Limit]\label{def:metaNTK}
As the width $l$ approaches infinity, MetaNTK can be expressed as $\metaNTK \equiv \metaNTK((\XX),(\XX)) \in \mathbb{R}^{knN \times knN}$, which is a block matrix that consists of $N \times N$ blocks of size $kn\times kn$. For $i,j \in [N]$, its $(i,j)$-th block is
\begin{align} \label{eq:MetaNTK_ij=kernel}
    [\metaNTK]_{ij}=\SingleTaskmetaNTK((\xxi),(\xxj)) \in \mathbb R^{kn \times kn} , 
\end{align}
where $\SingleTaskmetaNTK: (\mathbb{R}^{n \times k} \times \mathbb{R}^{m\times k}) \times  (\mathbb{R}^{n \times k} \times \mathbb{R}^{m\times k}) \rightarrow \mathbb{R}^{nk \times nk}$ is a kernel function defined as
\begin{align}\label{eq:MetaNTK_ij}
    &\qquad \SingleTaskmetaNTK((\cdot,\ast), (\bullet, \star)) \\
    &= \NTK(\cdot,\bullet) + \NTK(\cdot,\ast)\widetilde{\T}_{\NTK}^\lambda(\ast,\tau)\NTK(\ast,\star)\widetilde{\T}_{\NTK}^\lambda(\star,\tau)^\top \NTK(\star,\bullet) \nonumber\\
 &~ -\NTK(\cdot,\ast)\widetilde{\T}_{\NTK}^\lambda(\ast,\tau) \NTK(\ast,\bullet) - \NTK(\cdot,\star) \widetilde{\T}_{\NTK}^\lambda(\star,\tau)^\top \NTK(\star,\bullet) .\nonumber
    \end{align}
\end{definition}

The following theorem shows that as the width of neural nets approaches infinity, MAML becomes equivalent to a special kernel regression with the MetaNTK of \cref{def:metaNTK}. The proof is provided in Appendix \ref{supp:MetaNTK}.
\begin{theorem}[MAML as Kernel Regression]\label{thm:MetaNTK}
Suppose learning rates $\eta$ and $\lambda$ are infinitesimal. As the network width $l$ approaches infinity, with high probability over random initialization of the neural net, the MAML output, (\ref{eq:F_t:main_text}), converges to a special kernel regression,
\begin{align}\label{eq:F_t-MetaNTK}
&F_t(\xxy) = G_\NTK^{\tau}(\xxy) \\
&\quad +\metaNTK((\xx),(\XX)) \T^{\eta}_{\metaNTK}(t) \left(\Y-G_{\NTK}^{\tau}(\XXY)\right)\nonumber
\end{align}
where $G$ is a function defined below.
\begin{align}\label{eq:G}
    G_\NTK^\tau(\xxy) =  \NTK(X,X')\widetilde{T}^{\lambda}_\NTK (X',\tau)   Y'.
\end{align}
with $\widetilde{T}^{\lambda}_\NTK (\cdot,\tau) \triangleq \NTK(\cdot,\cdot)^{-1}(I-e^{-\lambda \NTK(\cdot,\cdot) \tau}) $. Besides, $G_\NTK^{\tau}(\XXY) = (G_\NTK^{\tau}(\xxyi))_{i=1}^N$.

The $\metaNTK((\xx),(\XX)) \in \mathbb{R}^{kn \times knN}$ in (\ref{eq:F_t-MetaNTK}) is also a block matrix, which consists of $1 \times N$ blocks of size $k n \times k n$, with the $(1,j)$-th block as follows for $j \in [N]$, $$[\metaNTK((\xx),(\XX))]_{1,j}= \SingleTaskmetaNTK((X,X'),(X_j,X_j')).$$ 
\end{theorem}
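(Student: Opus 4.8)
The plan is to obtain \eqref{eq:F_t-MetaNTK} by passing to the infinite-width limit $l\to\infty$ inside the finite-width closed form \eqref{eq:F_t:main_text}, which is already available to us: in the over-parameterized regime of \cref{thm:global-convergence}, the local stability of the Jacobian (\cref{lemma:local-Liphschitzness}) keeps the empirical kernel $\metantk_0$ effectively constant along the outer-loop trajectory, so that with infinitesimal $\eta$ the outer-loop dynamics linearize to the ODE \eqref{eq:train-dynamics-ODE} whose solution is exactly \eqref{eq:F_t:main_text}. The goal then reduces to showing that each of the three factors in \eqref{eq:F_t:main_text} converges, with high probability over the random initialization, to its counterpart in \eqref{eq:F_t-MetaNTK}: the empirical kernel $\metantk_0 \to \metaNTK$, the transformation $\T^{\eta}_{\metantk_0}(t)\to \T^{\eta}_{\metaNTK}(t)$, and the initial meta-output $F_0 \to G_\NTK^\tau$.

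The first convergence, $\metantk_0 \to \metaNTK$ (both for the Gram block $\metantk_0((\XXY),(\XXY))$ and for the cross block $\metantk_0((\xx),(\XX))$), is precisely the content of \cref{lemma:kernel-convergence}, so I would cite it directly. The second convergence then follows by continuity: the matrix map $K\mapsto \T^{\eta}_{K}(t)=K^{-1}(I-e^{-\eta K t})$ is a composition of the continuous maps $K\mapsto K^{-1}$ and $K\mapsto e^{-\eta K t}$, hence continuous on the cone of positive-definite matrices, and $\metaNTK\succ 0$ because $\lev(\metaNTK)>0$ under the assumption inherited from \cref{thm:global-convergence}; therefore $\T^{\eta}_{\metantk_0}(t)\to\T^{\eta}_{\metaNTK}(t)$.

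The substantive step is the third, $F_0 \to G_\NTK^\tau$, which I would establish by analyzing the inner-loop adaptation directly. By definition $F_0(\xxy)=f_{\theta_0'}(X)$, where $\theta_0'$ is produced by inner-loop gradient descent on the support loss $\tfrac12\|f_\theta(X')-Y'\|_2^2$ started from the random initialization $\theta_0$. Taking $\lambda$ infinitesimal turns this discrete descent into gradient flow, and in the infinite-width limit the inner-loop empirical NTK stays frozen at $\NTK$ throughout the adaptation, so the forward output obeys $\dot{f}_s(X)=-\lambda\,\NTK(X,X')\bigl(f_s(X')-Y'\bigr)$. Solving this linear flow over the inner-loop horizon $\tau$ gives $f_\tau(X)=f_0(X)+\NTK(X,X')\widetilde{T}^{\lambda}_\NTK(X',\tau)\bigl(Y'-f_0(X')\bigr)$ with $\widetilde{T}^{\lambda}_\NTK$ as in \eqref{eq:G}; since the network output at initialization vanishes in the infinite-width limit, so that $f_0(X),f_0(X')\to 0$, this collapses to $\NTK(X,X')\widetilde{T}^{\lambda}_\NTK(X',\tau)Y'=G_\NTK^\tau(\xxy)$. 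Applying the identical argument to each training task yields $F_0(\XXY)\to G_\NTK^\tau(\XXY)$.

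Finally, I would assemble the three limits through the continuous mapping theorem: sums and products of quantities converging in probability, composed with the continuous matrix operations above, converge to the sum and product of their limits, so the right-hand side of \eqref{eq:F_t:main_text} converges to that of \eqref{eq:F_t-MetaNTK}, and the stated $1\times N$ block structure of $\metaNTK((\xx),(\XX))$ follows directly from the block structure in \cref{def:metaNTK}. I expect the main obstacle to be the third step: rigorously justifying that the inner-loop NTK remains frozen along the whole adaptation, that the discrete inner-loop descent converges to the stated gradient flow as $\lambda\to0$, and that the initialization output $f_{\theta_0}$ drops out in the limit. The multi-step inner loop makes this delicate because the adaptation curvature — the Hessian terms flagged in the Challenges discussion — must be shown not to perturb the forward trajectory at leading order; crucially, however, for the \emph{value} $F_0$ only the forward dynamics are required, whereas the Hessian contributions enter only the kernel $\metaNTK$, which is already handled by \cref{lemma:kernel-convergence}.
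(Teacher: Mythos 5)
Your overall route is the same as the paper's proof in Appendix \ref{supp:MetaNTK}: start from the finite-width closed form \eqref{eq:F_t:main_text} and send each factor to its limit --- $\metantk_0 \to \metaNTK$ via \cref{lemma:kernel-convergence} (whose ``Extension'' paragraph also supplies the $1\times N$ block structure of the cross kernel), $F_0 \to G_\NTK^\tau$ via the linearized inner-loop flow, and continuity of the matrix map for $\T^{\eta}_{\metantk_0}(t)\to\T^{\eta}_{\metaNTK}(t)$. Your inner-loop solution $f_\tau(X) = f_0(X) + \NTK(X,X')\widetilde{T}^{\lambda}_\NTK(X',\tau)\left(Y'-f_0(X')\right)$ matches the paper's \eqref{eq:lemma-proof:kernel-convergence:F_0}, and your observation that only the forward dynamics matter for the value of $F_0$ while the Hessian contributions live in the kernel is consistent with how the paper splits the work between \cref{lemma:kernel-convergence} and the rest.

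The one genuine gap is your justification for dropping the initialization output: you assert that ``the network output at initialization vanishes in the infinite-width limit, so that $f_0(X), f_0(X')\to 0$.'' This is false under the paper's parameterization \eqref{eq:recurrence}: as $l\to\infty$, $f_{\theta_0}$ converges in distribution to a zero-mean Gaussian process with a \emph{nondegenerate} covariance determined by $\sigma_w,\sigma_b$ --- it is $O(1)$, not $o(1)$. The paper says exactly this at the start of Appendix \ref{supp:MetaNTK}, citing \cite{lee2019wide}, and the same fact underlies Lemma \ref{lemma:bounded_init_loss}, which only bounds $\|f_{\theta_0}(X)-Y\|_2$ by a constant rather than showing it vanishes. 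The paper eliminates the $f_0(X)$ and $f_0(X')$ terms not through the width limit but through the separate Assumption \ref{assum:small-init}: the initialization scales $\sigma_w,\sigma_b$ are taken small enough that $f_0(\cdot)\simeq 0$. Without invoking that assumption, your limit of $F_0$ is $G_\NTK^\tau$ plus a nonvanishing random Gaussian term (and the same extra term appears inside $\Y - F_0(\XXY)$), so \eqref{eq:F_t-MetaNTK} does not hold as stated. The fix is local: replace the vanishing claim with an appeal to \cref{assum:small-init}; everything else in your argument then goes through exactly as in the paper.
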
 

\textbf{Remarks.} The kernel $\metaNTK$ is what $\metantk_{0}$ converges to as the network width approaches infinity. However, $\metantk_{0} \equiv \metantk_0((\XXY),(\XXY))$ depends on $\Y$ and $\Y'$, while $\metaNTK \equiv \metaNTK((\XX),(\XX))$ does not, since the terms in $\metaNTK$ depending on $\Y$ or $\Y'$ all vanish as the width approaches infinity. Although (\ref{eq:F_t-MetaNTK}) is a sum of two kernel regression terms, it can still be viewed as a single special kernel regression relying on MetaNTK $\metaNTK$.  Notably, $\metaNTK$ can be seen as a \textit{composite} kernel built upon the \textit{base} kernel function $\NTK$. 

\section{MetaNTK-NAS}\label{sec:algo}

\begin{algorithm2e}[t!]
\small
	\textbf{Input:} Search step $k = 0$. A initial supernet $\mathcal{N}_0$ of stacked cells. Each cell has $E$ edges. Each edge has $|\mathcal{O}|$ operators.  \\
	\While {$\mathcal{N}_k$ is not a single-path network}{
	    \For {each operator $o_j$ in $\mathcal{N}_k$} {
	        $\Delta C_{k, o_j} = C_{\mathcal{N}_k} - C_{\mathcal{N}_k \backslash o_j}$\\ (The \underline{higher} $\Delta C_{t, o_j}$, the more likely we will prune $o_j$)\\
	        $\Delta R_{t, o_j} = R_{\mathcal{N}_k} - R_{\mathcal{N}_k \backslash o_j}$ \\
	        (The \underline{lower} $\Delta R_{t, o_j}$, the more likely we will prune $o_j$)\\
	    }
	    Get importance score of $C_{\mathcal{N}}$: $s_C(o_j) = $ index of $o_j$ in \underline{descendingly} sorted list $[\Delta C_{t, o_1}, ..., \Delta C_{t, o_{|\mathcal{N}_k|}}]$\\
	    Get importance score of $R_{\mathcal{N}}$: $s_R(o_j) = $ index of $o_j$ in \underline{ascendingly} sorted list $[\Delta R_{t, o_1}, ..., \Delta R_{t, o_{|\mathcal{N}_k|}}]$\\
	    Get total importance score $s(o_j) = s_C(o_j) + s_R(o_j)$\\
	    $\mathcal{N}_{k+1} \gets \mathcal{N}_k$\\
	    \For {each edge $e_i$, $i=1,...,E$} {
	        $j^* = \argmin_j \{s(o_j): o_j \in e_i\}$\\
	        (Find the operator with the \underline{least importance} score on each edge.)\\
	        $\mathcal{N}_{k+1} = \mathcal{N}_{k+1} \backslash o_{j^*}$
	    }
	    $k \gets k + 1$
	}
	\Return{Single-path network $\mathcal{N}_k$.}
	\caption{MetaNTK-NAS: Training-Free NAS for Few-Shot Learning.}
	\label{algo:metaNTK-NAS}
\end{algorithm2e}

Recently, there has been a series of \textit{training-free NAS} works that reduce the search cost of NAS by directly searching over untrained candidate structures \cite{mellor2021neural,TE-NAS,xu2021knas,ming_zennas_iccv2021}. The key of these works is to measure certain properties or metrics of untrained networks that are correlated with the final training/test accuracy. Among them, TE-NAS \cite{TE-NAS} utilizes two theory-motivated metrics of untrained networks to achieve training-free NAS: (\textit{i}) the condition number of the neural tangent kernel (NTK), which can be seen as an indicator for the trainability of the network under supervised learning;  (\textit{ii}) the number of linear regions in the input space, which implies the expressivity (i.e., representation power) of the network. Empirical studies of \cite{TE-NAS} confirm that these two metrics of untrained networks are correlated with the test accuracy of trained networks with the same structures.

Since few-shot learning is quite different from supervised learning, metrics specific to supervised learning may not be suitable for few-shot learning. For example, NTK is derived in the supervised learning setting, thus the use of condition number of NTK in TE-NAS \cite{TE-NAS} might be ineffective in few-shot learning. 
A natural candidate for few-shot learning is the MetaNTK we derive in \cref{thm:MetaNTK}, which can be viewed as the counterpart of NTK in meta-learning, thus it is a natural idea to replace NTK with MetaNTK in TE-NAS to accommodate the few-shot learning task, which is the core of our proposed MetaNTK-NAS.

Briefly speaking, the main idea of our MetaNTK-NAS is to compute an importance score for each untrained candidate structure by the condition number of MetaNTK and the number of linear regions in the input space, then search for a structure that maximizes this score. 
\begin{figure}[t]
  \centering
   \includegraphics[width=0.9\linewidth]{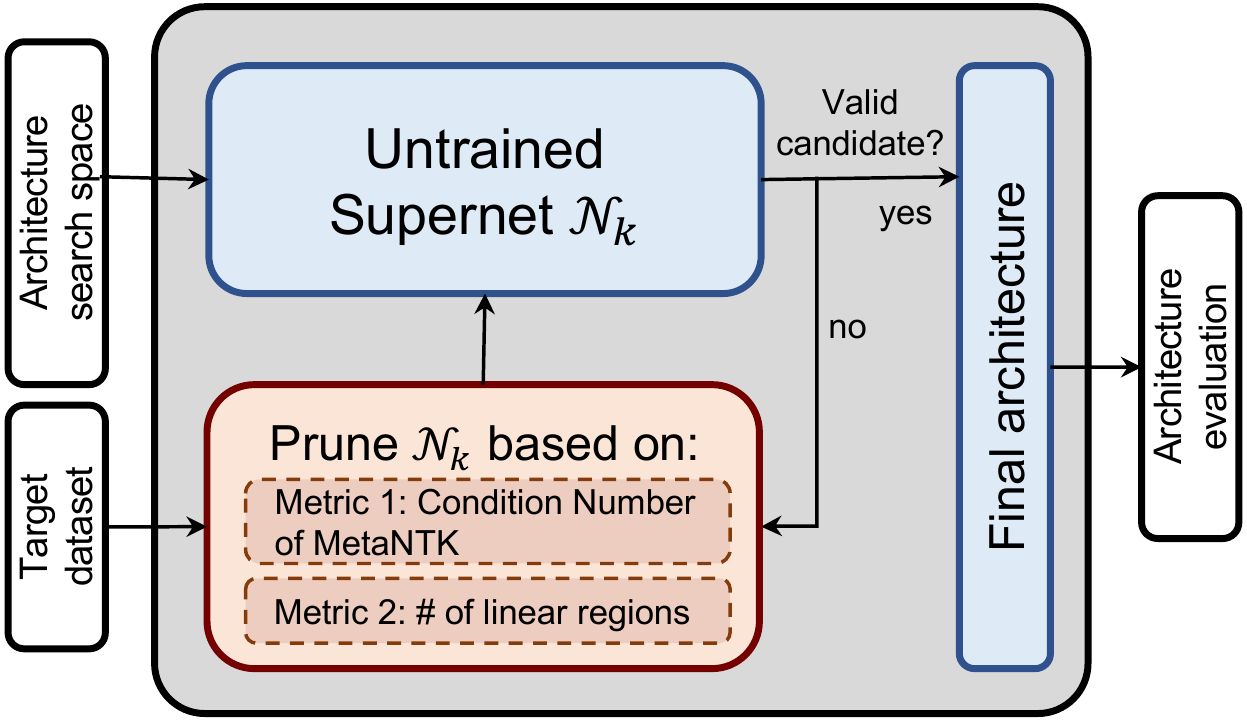}
   \caption{Illustration of our MetaNTK-NAS (\cref{algo:metaNTK-NAS}). }
   \label{fig:illustration}
   \vspace{-1em}
\end{figure}

Specifically, for each candidate structure $\mathcal{N}$ with parameters $\theta_0$ randomly initialized by Kaiming Initialization \cite{resnet}, we sample a batch of training tasks $\{(\xyxyi)\}$ from the training set, where $X_i\in \bR^{n\times k},X_i'\in \bR^{m\times k}$ are $n$ query samples and $m$ support samples, respectively. Then, we compute the MetaNTK of the network based on the analytical expression derived in \cref{eq:MetaNTK_ij=kernel,eq:MetaNTK_ij}. Notice that Eq. \eqref{eq:MetaNTK_ij} demonstrates the MetaNTK $\metaNTK$ is a composite kernel built upon the NTK kernel function in the infinite width limit. We assume this holds approximately true for finitely wide networks, and use the formula \eqref{eq:MetaNTK_ij} to build MetaNTK $\metantk_{\theta_0}$ from NTK kernel function $\ntk_{\theta_0}$. This approach has a computational benefit: if we compute MetaNTK following the definition in \eqref{eq:def-empirical-metantk}, that will involve second-order gradients; however, building MetaNTK from NTK following \eqref{eq:MetaNTK_ij} could get rid of this burden since $\ntk_{\theta_0}$ is computed purely from first-order gradients. Also, $\metantk_{\theta_0}$ is positive definite\footnote{Since NTK is positive definite \cite{ntk} and we build MetaNTK from NTK in a way that preserves the positive definiteness.}, thus its eigenvalues are all positive, and we can define its condition number to be
\begin{align}
    C_{\mathcal N} \coloneqq \frac{\sigma_{\max} (\metaNTK_{\theta_0})}{\sigma_{\min}( \metaNTK_{\theta_0})}
\end{align}
We compute another metric following Definition 1 of \cite{TE-NAS},
\begin{align}
    R_{\mathcal N} \coloneqq \text{number of linear regions}
\end{align}
The condition number $C_{\mathcal N}$ indicates the trainability\footnote{The connection between the condition number of NTK and trainability is discussed in Sec. 3.1.1 of \cite{TE-NAS}} of the network structure (the lower, the better) under meta-learning, and $R_{\mathcal N}$ stands for the representation power of the structure (the higher, the better). Our \cref{algo:metaNTK-NAS} is designed to minimize $C_{\mathcal N}$ and maximize $R_{\mathcal N}$ over candidate structures in the search space, following the algorithm design of TE-NAS \cite{TE-NAS}. An illustration of \cref{algo:metaNTK-NAS} is provided in \cref{fig:illustration}.

\section{Experiments}\label{sec:exp}

\subsection{Experiment Setup}
We conduct experiments on two popular few-shot image classification datasets, mini-ImageNet and tiered-ImageNet, which both are subsets of ImageNet \cite{imagenet}. And our experiments consist of three stages: search, train and evaluate. Here we give an overview of the datasets.
\vspace{-.1em}
\begin{itemize}[leftmargin=*,align=left]
\item \textit{mini-ImageNet} \cite{matching-net}: It contains 60,000 RGB images of 84x84 pixels extracted from ImageNet \cite{imagenet}. It includes 100 classes (each with 600 images) that are split into 64 training classes, 16 validation classes and 20 test classes.

\vspace{-1.0em}
\item \textit{tiered-ImageNet} \cite{ren2018metalearning}: This dataset contains 779,165 RGB images of 84x84 pixels extracted from ImageNet \cite{imagenet}. It includes 608 classes that are split into 351 training, 97 validation and 160 test classes.
\end{itemize}
\vspace{-2em}
\paragraph{Search Space} Following the NAS literature \cite{DARTS, TE-NAS, metaNAS}, we search for a normal cell and a reduction cell as the building block of the final architecture. Both cells have three intermediate nodes. MetaNAS \cite{metaNAS} uses a modified version of standard DARTS search space. We use the same search space as MetaNAS for a fair comparison. Specifically, the set of candidate operations include \textit{MaxPool3x3, AvgPool3x3, SkipConnect, Conv1x5-5x1, Conv3x3, SepConv3x3} and \textit{DilatedConv3x3}.
\vspace{-1.2em}
\paragraph{Implementation Details}The neural network we use is obtained by stacking 5 or 8 searched cells together. Cells located at the 1/3 and 2/3 of the total depth of the network\footnote{The network has an output layer in addition to cells, e.g., a 5-cells network has $5+1=6$ layers in total.} are reduction cells, where we decrease the spatial resolution and double the number of channels\footnote{This is different from the original setup of MetaNAS \cite{metaNAS}, where they fix number of channels throughout the whole network.}. We set the initial number of channels to 48\footnote{We use the same number of initial channels and number of cells during search and evaluation when computing MetaNTK, which is different from TE-NAS \cite{TE-NAS}. TE-NAS uses a small proxy network to compute NTK. 
}. To search for the candidate cells, we start with a supernetwork $\mathcal{N}_0$ composed of all possible edges and operations. We follow TE-NAS \cite{TE-NAS} to prune one operator on each edge by its importance per round. The importance of each operator is measured by the change of condition number of MetaNTK $C_{\mathcal{N}}$ and the number of linear regions $R_{\mathcal{N}}$ before and after being pruned. We repeat the process until the supernetwork $\mathcal{N}_k$ becomes a single-path network. We summarize our algorithm in \cref{algo:metaNTK-NAS}.

To evaluate the architecture searched by MetaNTK-NAS, we train the network on the same dataset where the search is conducted. For the training, we keep the same number of cells and the initial number of channels used in the search stage. As for the training recipe, we use RFS \cite{tian2020rethink} (without their additional knowledge distillation trick), an FSL method in the fashion of pre-training+finetuning, since it is an efficient first-order optimization method. RFS greatly reduces the training time, and GPU memory compared to higher-order optimization methods such as MAML \cite{maml}. To compare our searched cells with MetaNAS, we evaluate cells found by MetaNAS\footnote{We evaluate the cells used in their large-scale experiments.} with the same RFS training-evaluate pipeline and present as MetaNTK{\small (retrained)} in Table \ref{tab:benchmark}.
\vspace{-1.3em}
\paragraph{Hyper-parameters}During the search phase, for computing NTK, MetaNTK, and the number of linear regions, we adopt data augmentation used in TE-NAS \cite{TE-NAS}. In addition to the MAML-induced MetaNTK (MAML-kernel) we derive in Theorem \ref{thm:MetaNTK}, we also implement another MetaNTK induced by ANIL (ANIL-kernel)\cite{raghu2019rapid}, a simplified version of MAML. We use MAML-kernel and ANIL-kernel for the 5-cells and 8-cells experiments, respectively. More details about hyperparameters such as batch size, dropout rate \cite{srivastava2014dropout} and normalization \cite{batchnorm,groupnorm} can be found in Appendix \ref{supp:exp}.

\begin{table*}[t!]
    \begin{center}
    \vspace{-2em}
    \resizebox{1\linewidth}{!}
    {
    \begin{tabular}{@{}llc@{}c@{}c@{}c@{}cccc@{}c@{}cccc@{}}
    \\
    \hline
    \toprule
    &&& & \multicolumn{4}{c}{\textbf{mini-ImageNet 5-way}} & &\multicolumn{4}{c}{\textbf{tiered-ImageNet 5-way}} \\
    \cmidrule{5-8} \cmidrule{10-13}
    \textbf{Model} &\textbf{Arch.} &\textbf{~\#Cells}&\textbf{Train} &\textbf{\#Param.}&\textbf{~Search Cost} & \textbf{1-shot} & \textbf{5-shot} & &\textbf{\#Param.}&\textbf{~Search Cost}&  \textbf{1-shot} & \textbf{5-shot}  \\
    \midrule
    MAML\cite{maml}& Conv4 & -& MAML& 30k & -&48.70$\pm$1.84 &63.11$\pm$0.92 & &30k&-& 51.67$\pm$1.81 & 70.30$\pm$1.75 \\
    ANIL \cite{raghu2019rapid}& Conv4 & -& ANIL & 30k&-& 48.0$\pm$0.7 & 62.2$\pm$0.5 & &-&-& - & - \\
    MetaOptNet\cite{metaOptNet} & ResNet-12 & - & {\small MetaOptNet} & 12.5M & - & 62.64$\pm$0.61 & 78.63$\pm$0.46 && 12.7M & -& 65.99$\pm$0.72 & 81.56$\pm$0.53 \\
    
    RFS\cite{tian2020rethink} & ResNet-12 & - & RFS & 12.5M & - & 62.02$\pm$0.63 & 79.64$\pm$0.44 && 12.7M & -& 69.74$\pm$0.72 & 84.41$\pm$0.55 \\
    AutoMeta\cite{auto-meta} & Cells & - & Reptile  & 100k & 2688 hr& 57.6$\pm$0.2 & 74.7$\pm$0.2 && - & - & - &-\\
    T-NAS++\cite{TNAS} & Cells & 2 & FOMAML  & 27k & 48 hr& 54.11$\pm$1.35 & 69.59$\pm$0.85 && - & - & - &-\\
    MetaNAS\cite{metaNAS} & Cells & 5 & Reptile &  1.1M & 168 hr& 63.1$\pm$0.3 & 79.5$\pm$0.2 && - & - & - &-\\
    \midrule
    \midrule
    MetaNAS{\small (retrained)}\textcolor{red}{$^\dagger$} & Cells & 5 & RFS & 2.01M &  168 hr & \textbf{64.24$\pm$0.11} & 79.75$\pm$0.13 & & 2.17M & 168 hr & 70.16$\pm$0.09 & 84.99$\pm$0.22 \\
    MetaNTK-NAS & Cells & 5 & RFS & 1.77M & 1.54 hr & 63.88$\pm$0.81 & \textbf{80.07$\pm$0.45} & &2.22M &2.20 hr & \textbf{71.12$\pm$0.49} & \textbf{85.71$\pm$0.22} \\
    \midrule
    MetaNAS{\small (retrained)}\textcolor{red}{$^\dagger$} & Cells & 8 & RFS & 3.53M & 168 hr & 63.88$\pm$0.23 & 79.88$\pm$0.14 &  & 3.70M & 168 hr &\textbf{72.32$\pm$0.02}&\textbf{86.48$\pm$0.06} \\
    MetaNTK-NAS & Cells & 8 & RFS & 3.21M & 1.92 hr & \textbf{64.26$\pm$0.14} & \textbf{80.35$\pm$0.12} &  & 4.78M & 2.73 hr &\textbf{72.37$\pm$0.79} &\textbf{86.43$\pm$0.53} \\
    \bottomrule
    \hline
    \end{tabular}
    }
    \caption{
    \textbf{Comparison on few-shot image classification benchmarks.} Average few-shot test classification accuracy (\%) $\pm$ standard deviation. The first 4 rows are few-shot learning algorithms on standard networks, and the following 3 rows (AutoMeta, T-NAS++, MetaNAS) are prior NAS methods designed for few-shot learning. The last 4 rows are from our experiments: we stack cells presented by MetaNAS \cite{metaNAS} and searched by our MetaNTK-NAS in the same way (stacks of 5 cells or 8 cells), and train both of them with RFS \cite{tian2020rethink}. \\
    \textcolor{red}{$\dagger$} (\textit{i}) The authors of MetaNAS \cite{metaNAS} only present one cell structure that they manually select over multiple structures in the search process on miniImageNet, and train it for 3 runs. Thus this structure can be seen as the \textit{best} structure they obtain. In contrast, we run the search-train-evaluate pipeline of MetaNTK-NAS for 3 independent runs and take average of the test accuracy. (\textit{ii}) \cite{metaNAS} does not run on tieredImageNet, thus we stack the cells they searched on miniImageNet and train the structure on tieredImageNet using RFS. We believe MetaNAS cells work relatively well on tiered-ImageNet because mini- and tiered-ImageNet are both subsets of ImageNet, sharing lots of similarities. \\}
    \label{tab:benchmark}
    \vspace{-32pt}
    \end{center}
\end{table*}
\vspace{-1.3em}
\paragraph{Optimization Setup} Following \cite{tian2020rethink}, we adopt SGD optimizer with a momentum of 0.9 and a weight decay of $0.0005$. We train all models for 100 epochs and 60 epochs on miniImageNet and tieredImagenet, respectively. For miniImageNet, the learning rate is 0.02 initially, and it is decayed by 10x at epoch 60 and 80. For tieredImageNet, the learning rate is 0.01 initially, and it is decayed by 10x at epochs 30, 40, 50. 
\vspace{-1.3em}
\paragraph{Model Selection.} We always take the model checkpoints at the end of training for evaluation.
\vspace{-1.3em}
\paragraph{Evaluation on Test Tasks} Following \cite{tian2020rethink}, for any test task, we remain hidden layers intact and finetune the linear output layer of each network on labeled support samples, and then evaluate its prediction accuracy on the query samples as the test accuracy. Similar to \cite{tian2020rethink}, we mostly use $\ell_2$ regularized cross-entropy loss to finetune the linear layer, while we also use $\ell_2$ regularized hinge loss in some 1-shot cases. More details regarding the setup and hyperparameters of the evaluation stage can be found in Appendix \ref{supp:exp}.
\vspace{-2em}
\paragraph{Code} Our code is written in PyTorch \cite{pytorch}. For the search stage of NAS, we build our code upon the released codebase of \cite{TE-NAS}. Opacus \cite{opacus} is used to compute per-sample-gradients efficiently to further construct MetaNTK. For the training and evaluation stages, we adopt the code of \cite{tian2020rethink}.
\vspace{-1em}
\paragraph{Hardware} Most of our experiments were run on NVIDIA V100s, and the rest were run on NVIDIA RTX 3090s. Each experiment is run on a single GPU at a time. The search cost of MetaNTK-NAS is benchmarked on V100s.

\begin{figure*}[t]
    \vspace{-2em}
    \small
        \centering
        \begin{subfigure}[b]{.5\textwidth}  
            \centering
            \includegraphics[width=.95\linewidth]{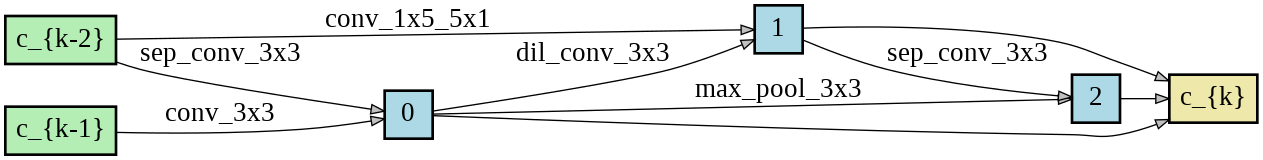}
            \caption%
            {\small Normal cell (5 cells, miniImageNet).} 
            \label{fig:s1}
        \end{subfigure}
        \begin{subfigure}[b]{.480\textwidth}  
            \centering 
            \includegraphics[width=.95\linewidth]{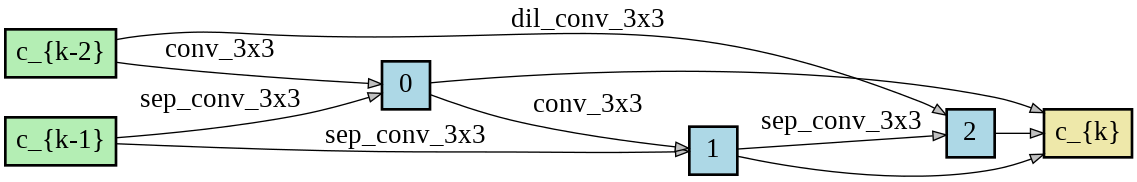} 
            \caption[]%
            {{\small Reduction cell (5 cells, miniImageNet).}}    
            \label{fig:s2}
        \end{subfigure}
        
                \begin{subfigure}[b]{.5\textwidth}  
            \centering
            \includegraphics[width=.95\linewidth]{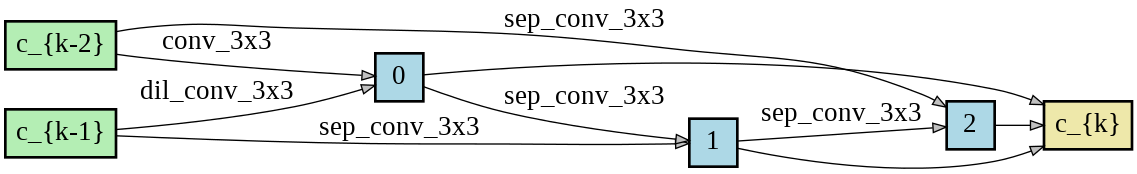}
            \caption[]%
            {{\small Normal cell (5 cells, tieredImageNet).}}    
            \label{fig:s1}
        \end{subfigure}
        \begin{subfigure}[b]{.480\textwidth}  
            \centering 
            \includegraphics[width=.95\linewidth]{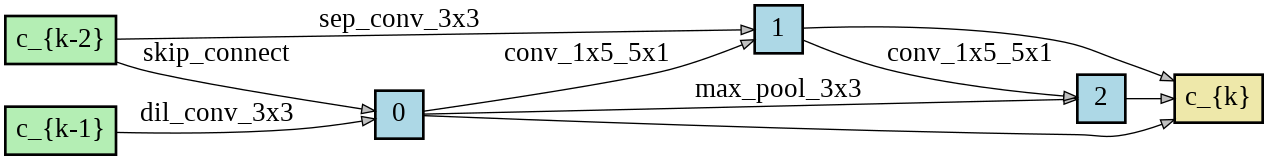} 
            \caption[]%
            {{\small Reduction cell (5 cells, tieredImageNet).}}    
            \label{fig:s2}
        \end{subfigure}
        
        \caption{Examples of normal and reduction cells found by MetaNTK-NAS that are used for the evaluation. Green, blue and yellow boxes denote outputs of the previous cells, intermediate nodes and output of the current cell, respectively. See \cite{DARTS} for more details.} 
        \label{fig: normal_cells}
\end{figure*}

\subsection{Experiment Results}
\begin{table*}[ht!]
\centering
\small
\resizebox{0.85\linewidth}{!}{

\setlength{\tabcolsep}{5.5pt}
\begin{tabular}{c|ccc|cc|cc}

\hline
\toprule

& & & \textbf{Linear} & \multicolumn{2}{c}{\textbf{1-shot}} & \multicolumn{2}{c}{\textbf{5-shot}}\\
\textbf{Model} & \textbf{MetaNTK} & \textbf{NTK} & \textbf{Region} & 5 Cells & 8 Cells & 5 Cells & 8 Cells  \\

\midrule
MetaNTK-NAS & \checkmark & &\checkmark                    
 & \textbf{63.88 $\pm$ 0.81} & \textbf{64.26 $\pm$ 0.14} & \textbf{80.07 $\pm$ 0.45} & \textbf{80.35 $\pm$ 0.12} \\
TE-NAS\cite{TE-NAS}& & \checkmark &  \checkmark
& 62.51 $\pm$ 0.42 & 63.51 $\pm$ 0.16 & 79.02 $\pm$ 0.35 & 79.86 $\pm$ 0.42 \\
Linear Region & &  & \checkmark
& 62.96 $\pm$ 1.05 & 63.99 $\pm$ 0.76 & 79.13 $\pm$ 0.96 & 79.88 $\pm$ 0.51 \\
Random Cell & &  &
& 62.55 $\pm$ 1.15 & 62.76 $\pm$ 0.83  & 78.90 $\pm$ 0.58 & 79.18 $\pm$ 0.72 \\
\bottomrule

\end{tabular}

}

\caption{\textbf{Ablation study on mini-ImageNet.} The row ``Linear Region'' stands for NAS with only the number of linear regions. The row ``Random Cell'' evaluates stacks of randomly sampled candidate cells from the search space.}
\label{tab:ablation}
\vspace{-1.5em}
\end{table*}
\paragraph{Empirical Results} Each experiment of MetaNTK-NAS and MetaNAS(retrained) is repeated over at least 3 runs of different random seeds. We evaluate each trained model on 1000 test tasks randomly sampled from the test set, and report the mean and standard deviation of the test accuracy.
\vspace{-2em}
\paragraph{Performance Comparison} In Table \ref{tab:benchmark}, we compare MetaNTK-NAS with multiple meta-learning algorithms on miniImageNet and tieredImageNet under 5-way few-shot classification setting. Among these algorithms, MetaOptNet is the state-of-the-art gradient-based meta-learning algorithm, and RFS is an efficient few-shot learning algorithm, while AutoMeta, T-NAS++, and MetaNAS are existing NAS algorithms specifically designed for few-shot learning. Compared to MetaOptNet and RFS, which use ResNet-12 as backbone architecture, our method outperforms them with a much fewer number of parameters (2.5x$\sim$6x fewer). Besides, our MetaNTK-NAS achieves comparable or better performance than MetaNAS, the state-of-the-art NAS algorithm for few-shot learning across various settings on the two datasets (cf. the last 4 rows of \cref{tab:benchmark}). Examples of searched cells are also shown in \cref{fig: normal_cells}.
\vspace{-1.1em}

\textbf{Efficiency Comparison} While achieving competitive performance, our method enjoys \textit{100x} faster search speed (1.54 GPU hours search cost compared to 168 GPU hours search cost of MetaNAS on miniImageNet for 5-cells structures). Since MetaNAS cells are obtained by training supernetwork in the small-scale regime of 100k parameters on miniImageNet, we believe that comparing our search cost of 5-cells setup on miniImageNet to theirs is fair. We also believe MetaNTK-NAS is likely to have an even larger improvement in search speed compared to MetaNAS on tieredImageNet\footnote{The experiments of the original MetaNAS paper \cite{metaNAS} is limited to miniImageNet and Omniglot (a much smaller dataset).} or larger datasets. Therefore, we can conclude that \textit{MetaNTK-NAS is comparable to or better than the state-of-the-art of NAS methods for few-shot learning with 100x faster search speed.}

\vspace{-0.1em}
\subsection{Ablation Experiments}
In this section, we conduct ablation studies to analyze the effects of different ingredients used in MetaNTK-NAS. We compare the following setups with different combinations of components: (a) we conduct a search with TE-NAS \cite{TE-NAS}, which searches with NTK and number of linear regions\footnote{We use the same number of training samples to compute MetaNTK and NTK for a fair comparison.}; (b) we conduct a search using only the number of linear regions (no NTK or meta-NTK is used); (c) we conduct a random search, where we randomly sample candidate cells from the search space.

\cref{tab:ablation} shows the results of our ablation study on miniImageNet. The improvement of MetaNTK-NAS over the rest methods indicates the usefulness of the MetaNTK condition number in NAS for few-shot learning. 
Notice that the NTK condition number seems to have adverse effects on the performance. It indicates that NTK, which is derived in supervised learning, does not fit few-shot learning well.

\vspace{-.4em}
\section*{Conclusion}
In this paper, we first focus on the \textit{optimization} properties of \textit{model-agnostic meta-learning} (MAML) equipped with deep neural networks (DNNs), and prove the \textit{global convergence} of MAML with over-parameterized deep neural nets. Based on the convergence analysis, we prove that in the infinite width limit of DNNs, MAML converges to a kernel regression with a new class of kernels, which we name as \textit{Meta Neural Tangent Kernel} (MetaNTK). 
Inspired by recent works that apply Neural Tangent Kernel (NTK) to NAS for supervised learning \cite{TE-NAS,KNAS}, we propose MetaNTK-NAS, a new NAS method for few-shot learning based on our derived MetaNTK. Empirically, we compare MetaNTK-NAS with prior works, and observe that the performance of MetaNTK-NAS is comparable or better than the state-of-the-art NAS methods for few-shot learning on miniImageNet and tieredImageNet, while enjoying 100x less search cost.

\vspace{-.5em}
\section*{Acknowledgement}
\vspace{-.5em}
This work is partially supported by NSF grant No.1910100 and NSF CNS 20-46726 CAR. This work utilizes resources supported by NSF Major Research Instrumentation program, grant No.1725729\cite{HAL}.

\newpage
\bibliographystyle{ieee_fullname}
\bibliography{egbib}
\newpage
\onecolumn
\appendix
\section*{Overview of the Appendix}
The appendix is divided into the following sections, 
\begin{itemize}
    \item Sec. \ref{supp:NTK-setup}: Describes the neural network setup and parameterization.
    \item Sec. \ref{supp:global-convergence}: Presents theoretical results and proof regarding the \textbf{global convergence} of gradient-based meta-learning.
    \item Sec. \ref{supp:GBML-output}: Derives the expression of MAML output.
    \item Sec. \ref{supp:MetaNTK}: Derives the equivalence between MAML and kernel regressions.
    \item Sec. \ref{supp:exp}: Presents more details of \textbf{experiments} in Sec. \ref{sec:exp}.
\end{itemize}

\section{Neural Network Setup}
\label{supp:NTK-setup}

In this paper, we consider a fully-connected feed-forward network with $L$ hidden
layers. Each hidden layer has width $l_{i}$, for $i = 1, ..., L$. The readout layer (i.e. output layer) has width $l_{L+1} = k$. At each layer $i$, for arbitrary input $x\in \mathbb R^{d}$, we denote the pre-activation and post-activation functions by $h^i(x), z^i(x)\in\mathbb R^{l_i}$. The relations between layers in this network are
\begin{align}
\label{eq:recurrence}
\begin{cases}
    h^{i+1}&=z^i W^{i+1} + b^{i+1}
    \\
    z^{i+1}&=\activation \left(h^{i+1}\right) 
    \end{cases}
    \,\, \textrm{and} 
    \,\,
    \begin{cases}
  W^i_{\mu,\nu}& =  \omega_{\mu \nu}^i \sim \mathcal{N} (0, \frac {\sigma_\omega} {\sqrt{l_i}} )
    \\
    b_\nu^i &= \beta_\nu^i \sim \mathcal{N} (0,\sigma_b  )
\end{cases}
,
\end{align}
where $W^{i+1}\in \mathbb R^{l_i\times l_{i+1}}$ and $b^{i+1}\in\mathbb R^{l_{i+1}}$ are the weight and bias of the layer, $\omega_{\mu \nu}^l$ and $ b_\nu^l $ are trainable variables drawn i.i.d. from zero-mean Gaussian distributions at initialization (i.e., $\frac{\sws}{l_i}$ and $\sbs$ are variances for weight and bias, and $\activation$ is a point-wise activation function.

\section{Proof of Global Convergence for Gradient-Based Meta-Learning with Deep Neural Networks}
In this section, we will prove the global convergence for gradient-based meta-learning with over-parameterized neural nets. To prove the global convergence theorem, we introduce several key lemmas first, i.e., Lemma \ref{lemma:local-Liphschitzness}, \ref{lemma:bounded_init_loss}, \ref{lemma:kernel-convergence}. Specifically, the subsections of this section are formulated as follows.
\begin{itemize}
    \item Sec. \ref{supp:global-convergence:lemma-proof}: Present several helper lemmas with proof.
    \item Sec. \ref{supp:global-convergence:proof-liphscitzness}: Provides the proof of Lemma \ref{lemma:local-Liphschitzness}.
    \item Sec. \ref{supp:lemma-proof:bounded_init_loss}: Provides the proof of Lemma \ref{lemma:bounded_init_loss}.
    \item Sec. \ref{supp:global-convergence:kernel-convergence}: Provides the proof of Lemma \ref{lemma:kernel-convergence}.
    \item Sec. \ref{supp:global-convergence:theorem-proof}: Proves the global convergence theorem for MAML, i.e., Theorem \ref{thm:global-convergence:restated} (restated version of Theorem \ref{thm:global-convergence}).
\end{itemize}
\label{supp:global-convergence}

Notice that in this section, we consider the standard parameterization scheme of neural networks shown in (\ref{eq:recurrence}).

The global convergence theorem, Theorem \ref{thm:global-convergence}, depends on several assumptions and lemmas. The assumptions are listed below. After that, we present the lemmas and the global convergence theorem, with proofs in Appendix \ref{supp:global-convergence:lemma-proof},\ref{supp:lemma-proof:bounded_init_loss},\ref{supp:global-convergence:lemma-proof:kernel-convergence} and \ref{supp:global-convergence:theorem-proof}. For Corollary \ref{corr:GBML-output}, we append its proof to Appendix \ref{supp:GBML-output}.

\begin{assumption}[Bounded Input Norm]\label{assum:input-norm<=1}
$\forall \sX\in \X$, for any sample $\sx \in \sX$, $\|\sx\|_2 \leq 1$. Similarly, $\forall \sX' \in \X'$, for any sample $\sx' \in \sX'$, $\|\sx'\|_2\leq 1$. (This is equivalent to a input normalization operation, which is common in data preprocessing.)
\end{assumption}

\begin{assumption}[Non-Degeneracy]\label{assum:non-degeneracy}
The meta-training set $(\X,\Y)$ and the meta-test set $(\X',\Y')$ are both contained in some compact set. Also, $\X$ and $\X'$ are both non-degenerate, i.e. $\forall \sX, \widetilde{\sX} \in \X$, $\sX\neq \widetilde{\sX}$, and $\forall \sX', \widetilde{\sX}' \in \X'$, $\sX'\neq \widetilde{\sX}'$.
\end{assumption}

\begin{assumption}[Same Width for Hidden Layers]\label{assum:same-width} 
All hidden layers share the same width, $l$, i.e., $l_1=l_2=\dots=l_L = l$. 

\end{assumption}

\begin{assumption}[Full-Rank]\label{assum:full-rank} The kernel $\metaNTK$ defined in Lemma \ref{lemma:kernel-convergence} is full-rank.
\end{assumption}

These assumptions are common, and one can find similar counterparts of them in the literature for supervised learning \cite{lee2019wide,CNTK}. In particular, notice that Assumption \ref{assum:same-width} is just for simplicity purpose without loss generality. In fact, one can directly set $l = \min_{i\in[L]} l_i$ as the minimum width across hidden layers, and all theoretical results in this paper still hold true \cite{lee2019wide}.

As defined in the main text, $\theta$ is used to represent the neural net parameters. For convenience, we define some short-hand notations:
\begin{align}
    f_t(\cdot) &= f_{\theta_t}(\cdot)\\
    F_t(\cdot) &= F_{\theta_t}(\cdot)\\
    f(\theta) &= f_{\theta}(\X) = ((f_{\theta}(\sX_i))_{i=1}^{N} \\
    F(\theta) &= F_{\theta}(\XXY) =((F_{\theta}(\xxyi))_{i=1}^{N}\\
    g(\theta) &=  F_{\theta}(\XXY) - \Y \\
    J(\theta) &= \nabla_{\theta} F(\theta) = \nabla_\theta F_\theta(\XXY)
\end{align}
and
\begin{align}
    \mc L (\theta_t)&=\ell(F(\theta_t),\Y)=\frac 1 2 \|g(\theta_t)\|_2^2 \label{eq:supp:train_loss}\\
    \metantk_t &= \frac{1}{l} \nabla F_{\theta_t}(\XXY) \nabla F_{\theta_t}(\XXY) = \frac{1}{l} J(\theta) J(\theta)^\top \label{eq:supp:emp_MNK}
\end{align}
where we use the $\ell_2$ loss function $\ell (\hat{y},y)=\frac 1 2 \|\hat y - y\|^2_2$ in the definition of training loss $\loss(\theta_t)$ in (\ref{eq:supp:train_loss}), and the $\metantk_t$ in (\ref{eq:supp:emp_MNK}) is based on the definition\footnote{There is a typo in the definition of $\metantk_{\theta}(\cdot,\star)$ in Sec. \ref{sec:global-convergence}: a missing factor $\frac{1}{l}$. The correct definition should be $\metantk_{\theta}(\cdot,\star)= \frac{1}{l}\nabla_\theta F_\theta(\cdot)\nabla_\theta F_\theta (\star)^\top$. Similarly, the definition of $\metaNTK$ in Theorem \ref{thm:global-convergence} also missis this factor: the correct version is $\metaNTK = \frac{1}{l}\lim_{l \rightarrow \infty} J(\theta_0) J(\theta_0)^\top$} of $\metantk_\theta(\cdot,\star)$ in Sec. \ref{sec:global-convergence}.

Below, Lemma \ref{lemma:local-Liphschitzness} proves the Jacobian $J$ is locally Lipschitz, Lemma \ref{lemma:bounded_init_loss} proves the training loss at initialization is bounded, and Lemma \ref{lemma:kernel-convergence} proves $\metantk_0$ converges in probability to a deterministic kernel matrix with bounded positive eigenvalues. Finally, with these lemmas, we can prove the global convergence of MAML in Theorem \ref{thm:global-convergence:restated}.

\begin{lemma}[{\bf Local Lipschitzness of Jacobian}]
\label{lemma:local-Liphschitzness} 
For arbitraily small $\delta > 0$, then there exists $K>0$ and $l^*>0$ such that: $\forall ~C>0$ and $l > l^*$, the following inequalities hold true with probability at least $1-\delta$ over random initialization,
\begin{align}\label{eq:jacobian-lip}
 \forall \theta, \, \Bar \theta \in B(\theta_0, C l^{-\frac 1 2}),  \begin{cases}  
    \frac 1 {\sqrt l}\|J(\theta) - J(\Bar \theta)\|_{F} &\leq K\|\theta - \Bar \theta\|_2
    \\
    \\
    \frac 1 {\sqrt l} \|J(\theta)\|_{F} & \leq K 
    \end{cases}
\end{align}
where $B$ is a neighborhood defined as
\begin{align}
\label{def:B:supp}
    B(\theta_0, R) := \{\theta: \|\theta-\theta_0\|_2 < R\}.   
\end{align}
\end{lemma}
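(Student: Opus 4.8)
The plan is to express the meta-output Jacobian $J(\theta)$ entirely in terms of the base network $f_\theta$ and then import the local Lipschitzness and boundedness estimates that are already known for $f_\theta$ in the supervised NTK theory \cite{ntk,lee2019wide}. For the one-step meta-output $F_\theta(X,X',Y')=f_{\theta'}(X)$ with $\theta'=\theta-\lambda\nabla_\theta\ell(f_\theta(X'),Y')$, the chain rule gives the factorization
\begin{align}
J(\theta)=A(\theta)\,\nabla_{\theta'}f_{\theta'}(X),\qquad A(\theta):=\nabla_\theta\theta'=I-\lambda\,\nabla_\theta^2\ell(f_\theta(X'),Y'),
\end{align}
and for $\tau$ inner-loop steps $A(\theta)$ becomes a product of such near-identity matrices, which in the gradient-flow limit is a matrix exponential of the form $\exp(-\lambda\tau\,\nabla_\theta^2\ell)$. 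Hence the lemma reduces to controlling, uniformly over $\theta,\bar\theta\in B(\theta_0,Cl^{-1/2})$, the base Jacobian $\nabla_\theta f$ and the adaptation sensitivity $A(\theta)$.

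First I would invoke the standard over-parameterization estimates for the base network: with probability at least $1-\delta$ over initialization, within the ball one has $\tfrac{1}{\sqrt l}\|\nabla_\theta f(\theta)\|_F\le K_0$ and $\tfrac{1}{\sqrt l}\|\nabla_\theta f(\theta)-\nabla_\theta f(\bar\theta)\|_F\le K_0\|\theta-\bar\theta\|_2$, together with the smallness and Lipschitzness of the Hessian $\nabla_\theta^2 f$ (both of order $l^{-1/2}$ after normalization). Next I would control $A(\theta)$ by expanding the square-loss Hessian as $\nabla_\theta^2\ell=\nabla_\theta f(X')\nabla_\theta f(X')^\top+\sum_{x\in X'}(f_\theta(x)-y)\,\nabla_\theta^2 f_\theta(x)$. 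The residual $f_\theta(X')-Y'$ stays $O(1)$ on the ball by the bounded-initial-output estimate (the content of Lemma \ref{lemma:bounded_init_loss}), while $\|\nabla_\theta f(X')\nabla_\theta f(X')^\top\|_{\mathrm{op}}=O(l)$; multiplying by $\lambda=\lambda_0/l$ keeps $A(\theta)=I-O(\lambda_0)$ bounded in operator norm. Boundedness of $J$ then follows at once, $\tfrac{1}{\sqrt l}\|J(\theta)\|_F\le\|A(\theta)\|_{\mathrm{op}}\cdot\tfrac{1}{\sqrt l}\|\nabla_{\theta'}f_{\theta'}(X)\|_F\le K$, noting that the adapted $\theta'$ stays in a slightly enlarged ball since $\|\theta'-\theta\|_2=\lambda\|\nabla_\theta\ell\|_2=O(l^{-1/2})$.

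For the Lipschitz bound I would split
\begin{align}
J(\theta)-J(\bar\theta)=\big[A(\theta)-A(\bar\theta)\big]\nabla_{\theta'}f_{\theta'}(X)+A(\bar\theta)\big[\nabla_{\theta'}f_{\theta'}(X)-\nabla_{\bar\theta'}f_{\bar\theta'}(X)\big].
\end{align}
The second term is handled by the Lipschitzness of the base Jacobian together with the Lipschitzness of the adaptation map $\theta\mapsto\theta'$, which yields $\|\theta'-\bar\theta'\|_2\le L_A\|\theta-\bar\theta\|_2$. The first term requires the Lipschitzness of $A(\theta)$, i.e.\ of the loss Hessian $\nabla_\theta^2\ell$, in $\theta$.

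The main obstacle, and the place where the genuinely new difficulty beyond supervised learning appears, is establishing this Lipschitz dependence of $A$. Estimating $\|\nabla_\theta^2\ell(\theta)-\nabla_\theta^2\ell(\bar\theta)\|_{\mathrm{op}}$ brings in a Lipschitz constant for $\nabla_\theta^2 f$ (effectively a third-order network derivative) and the perturbation of the residual term, all of which I would bound using the over-parameterization estimates. For multiple inner steps, where $A(\theta)=\exp(M(\theta))$ with $M(\theta)=-\lambda\tau\,\nabla_\theta^2\ell$, I would reduce the matrix-exponential perturbation to the Hessian perturbation via the integral identity $\exp(M)-\exp(\bar M)=\int_0^1\exp(sM)\,(M-\bar M)\,\exp\!\big((1-s)\bar M\big)\,ds$ and the exponential-sensitivity bounds of \cite{1977bounds,van1977sensitivity}, giving $\|\exp(M)-\exp(\bar M)\|_{\mathrm{op}}\le c\,\|M-\bar M\|_{\mathrm{op}}$. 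Throughout, the scaling $\lambda=\lambda_0/l$ is essential: it cancels the $O(l)$ operator norm of the first-order Hessian part and suppresses every higher-order contribution, so that the resulting constant $K$ can be taken independent of $C$ (the radius $Cl^{-1/2}$ shrinks fast enough that for $l\ge l^*$ all estimates hold uniformly over the ball).
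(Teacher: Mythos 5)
Your plan is, at its core, the same proof as the paper's: both factor the meta-Jacobian into a base-network Jacobian times a matrix-exponential adaptation factor, both import the first-order estimates $\frac{1}{\sqrt l}\|\nabla_\theta f_\theta\|_F\leq K_0$ and $\frac{1}{\sqrt l}\|\nabla_\theta f_\theta(X)-\nabla_{\bar\theta}f_{\bar\theta}(X)\|_F\leq K_1\|\theta-\bar\theta\|_2$ from \cite{lee2019wide}, and both control the difference of exponentials with the perturbation bounds of \cite{1977bounds,van1977sensitivity}. The one substantive divergence is \emph{which matrix sits in the exponent}, and it is exactly the place where your argument has a genuine soft spot. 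You keep the full loss Hessian there, $A(\theta)=\exp(-\lambda\tau\,\nabla^2_\theta\ell)$ with $\nabla^2_\theta\ell=\nabla_\theta f_\theta(X')^\top\nabla_\theta f_\theta(X')+\sum_{x'}(f_\theta(x')-y')\,\nabla^2_\theta f_\theta(x')$, so your Lipschitz step for $A$ demands a Lipschitz constant for $\nabla^2_\theta f$ in $\theta$ --- a third-order derivative estimate that you correctly identify as the main obstacle but then defer to unspecified ``over-parameterization estimates.'' That estimate is not in the toolkit this lemma actually rests on: \cite{lee2019wide} gives first-order boundedness and Lipschitzness, and \cite{hessian-ntk} gives only the \emph{smallness} $\|\frac{1}{\sqrt l}\nabla^2_\theta f_\theta\|_{op}\rightarrow 0$, not its Lipschitz continuity. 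The paper is structured precisely to avoid this: it starts from the closed-form linearized inner-loop solution $F_\theta=f_\theta(X)-\ntk_\theta(X,X')\ntk_\theta^{-1}(I-e^{-\lambda\ntk_\theta\tau})(f_\theta(X')-Y')$, uses Lemmas \ref{lemma:helper-zero-F-norm} and \ref{lemma:helper-local-Liphschitzness} (built on the Hessian-smallness bound) to show that \emph{every} term carrying $\nabla^2_\theta f$ has vanishing Frobenius norm and discards them before the Lipschitz step, and then collapses the remainder via an SVD identity to $J(\theta)=\nabla_\theta f_\theta(X)\,e^{-\lambda\conjntk_\theta\tau}$, whose exponent $\conjntk_\theta=\frac{1}{l}\nabla_\theta f_\theta(X')^\top\nabla_\theta f_\theta(X')$ is purely first-order, so its Lipschitzness follows from $K_0,K_1$ alone. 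To close your version without third-order control, do the analogous move: split the residual-weighted piece $\sum_{x'}(f_\theta(x')-y')\nabla^2_\theta f_\theta(x')$ out of the exponent and absorb its contribution into the same vanishing-error bookkeeping the paper uses for its $\mathcal O(\varepsilon)$ term, leaving only $\exp(-\lambda\tau\,\nabla_\theta f_\theta(X')^\top\nabla_\theta f_\theta(X'))$ to be compared at $\theta$ and $\bar\theta$ --- at that point your argument and the paper's proof coincide, including the symmetrized splitting of $J(\theta)-J(\bar\theta)$ into (difference of Jacobians)$\times$(sum of exponentials) plus (sum of Jacobians)$\times$(difference of exponentials).

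Two minor points. First, with the convention $J(\theta)\in\mathbb R^{knN\times D}$, the chain rule places the adaptation sensitivity on the right, $J(\theta)=\nabla_{\theta'}f_{\theta'}(X)\,A(\theta)$, not the left as you wrote; this matters when you pass norms through the product (the paper's form $\nabla_\theta f_\theta(X)\,e^{-\lambda\conjntk_\theta\tau}$ has the same ordering). Second, your observation that the adapted point satisfies $\|\theta'-\theta\|_2=\mathcal O(l^{-1/2})$, so that $\theta'$ stays in a slightly enlarged ball and $\nabla_{\theta'}f_{\theta'}(X)$ inherits the $K_0,K_1$ bounds, is sound and is implicitly what licenses the paper's evaluation of all quantities at $\theta$ rather than $\theta'$; combined with the residual bound of Lemma \ref{lemma:bounded_init_loss} and the scaling $\lambda\lesssim\lambda_0/l$ that you invoke to keep $\lambda\|\nabla_\theta f(X')^\top\nabla_\theta f(X')\|_{op}=\mathcal O(\lambda_0)$, these ingredients match the paper's one for one.
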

 \begin{proof}
 See Appendix \ref{supp:global-convergence:lemma-proof}.
 \end{proof}

 \begin{lemma}[\textbf{Bounded Initial Loss}] \label{lemma:bounded_init_loss}
For arbitrarily small $\delta_0 > 0$, there are constants $R_0>0$ and $l^*>0$ such that as long as the width $l > l^*$, with probability at least $(1-\delta_0)$ over random initialization,
\begin{align}
    \|g(\theta_0)\|_2 &= \|F_{\theta_0}(\XXY) - \Y\|_2\leq R_0,
\end{align}
which is also equivalent to
\begin{align*}
    \loss(\theta_0) = \frac 1 2 \|g(\theta_0)\|^2_2 \leq \frac 1 2 R_0^2.
\end{align*}
\end{lemma}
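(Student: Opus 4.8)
The plan is to show that the MAML meta-output $F_{\theta_0}(\XXY)$ at random initialization has bounded norm with high probability, which immediately controls $\|g(\theta_0)\|_2 = \|F_{\theta_0}(\XXY) - \Y\|_2$ by the triangle inequality, since $\Y$ is a fixed finite label vector. The quantity $F_{\theta_0}(\xxyi) = f_{\theta_0'}(X_i)$ is the ordinary network output evaluated at the once-adapted parameters $\theta_0' = \theta_0 - \lambda\,\nabla_\theta \ell(f_{\theta_0}(X_i'),Y_i')$, so the core task is to bound the output of the network at a point that is a small perturbation of the initialization.

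First I would recall the standard NTK fact (e.g. from \cite{lee2019wide}) that at Gaussian initialization the raw network output $f_{\theta_0}(X)$ has entries that are $O(1)$ with high probability for inputs of bounded norm (guaranteed by Assumption \ref{assum:input-norm<=1} and the non-degeneracy/compactness in Assumption \ref{assum:non-degeneracy}); concretely, for any $\delta_0>0$ there is a constant so that $\|f_{\theta_0}(X_i)\|_2$ and $\|f_{\theta_0}(X_i')\|_2$ are bounded uniformly over $i\in[N]$ with probability at least $1-\delta_0/2$. Next I would control the inner-loop displacement: since $\theta_0' - \theta_0 = -\lambda\,\nabla_\theta\ell(f_{\theta_0}(X_i'),Y_i') = -\lambda\,\nabla_\theta f_{\theta_0}(X_i')\,(f_{\theta_0}(X_i')-Y_i')$, Lemma \ref{lemma:local-Liphschitzness} gives $\tfrac{1}{\sqrt l}\|\nabla_\theta f_{\theta_0}(X_i')\|_F \le K$, so that $\|\theta_0'-\theta_0\|_2 \le \lambda\sqrt l\,K\,\|f_{\theta_0}(X_i')-Y_i'\|_2$. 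With the learning-rate scaling $\lambda < \lambda_0/l$ used throughout (so $\lambda\sqrt l \to 0$), this shows $\theta_0'$ lies inside a ball $B(\theta_0, C l^{-1/2})$ of the form required by Lemma \ref{lemma:local-Liphschitzness} for a suitable constant $C$ depending on $\lambda_0$, $K$, and the initial-output bound.

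With $\theta_0'$ confined to that ball, I would then bound $\|f_{\theta_0'}(X_i)\|_2$ by writing $f_{\theta_0'}(X_i) = f_{\theta_0}(X_i) + \int_0^1 \nabla_\theta f_{\theta_0 + s(\theta_0'-\theta_0)}(X_i)^\top(\theta_0'-\theta_0)\,ds$ and using the uniform Jacobian bound of Lemma \ref{lemma:local-Liphschitzness} on the integrand together with the displacement bound from the previous step. This yields $\|f_{\theta_0'}(X_i)\|_2 \le \|f_{\theta_0}(X_i)\|_2 + K\sqrt l\,\|\theta_0'-\theta_0\|_2$, and since $\sqrt l\,\|\theta_0'-\theta_0\|_2 = O(\lambda l)$ stays bounded under $\lambda<\lambda_0/l$, each meta-output block is bounded by an absolute constant with high probability. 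Summing over the $N$ tasks (a fixed finite number) and intersecting the two high-probability events via a union bound gives $\|F_{\theta_0}(\XXY)\|_2 \le R_0'$ with probability at least $1-\delta_0$; adding $\|\Y\|_2$ yields the claimed constant $R_0$, and squaring gives the loss bound $\loss(\theta_0)\le \tfrac12 R_0^2$.

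The main obstacle I anticipate is keeping the powers of $l$ straight throughout. The subtlety is that the gradient norm $\|\nabla_\theta f_{\theta_0}(X_i')\|_F$ scales like $\sqrt l$ (not $O(1)$) under the standard parameterization of \eqref{eq:recurrence}, so the inner-loop step size and the Jacobian factor in the Taylor expansion each contribute factors of $\sqrt l$ that must cancel precisely against the $\lambda < \lambda_0/l$ scaling to leave an $O(1)$ displacement in function space. Verifying this cancellation carefully — and confirming that the resulting radius $C l^{-1/2}$ is exactly of the form Lemma \ref{lemma:local-Liphschitzness} requires so that its two bounds apply simultaneously to $\theta_0'$ — is the delicate part; the rest is routine triangle-inequality and union-bound bookkeeping over the finitely many tasks.
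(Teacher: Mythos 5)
Your proof is correct for the one-inner-step form of the meta-output (Eq.~\eqref{eq:meta-output-1-step}), and it takes a genuinely different route from the paper's. The paper never works in parameter space: it invokes the linearized, kernel-regression expression for the adapted output,
\begin{align*}
F_{\theta_0}(\xxy) = f_{\theta_0}(X) + \ntk_0(X,X')\,\ntk_0^{-1}\bigl(I - e^{-\lambda \ntk_0 \tau}\bigr)\bigl(f_{\theta_0}(X')-Y'\bigr),
\end{align*}
bounds $\|f_{\theta_0}(X)-Y\|_2$ and $\|f_{\theta_0}(X')-Y'\|_2$ via the Gaussian limit of the network output at initialization (as you also do), and then bounds the operator norm of $\ntk_0(X,X')\ntk_0^{-1}(I-e^{-\lambda\ntk_0\tau})$ using positive definiteness and the eigenvalue bounds of the limiting kernel $\NTK$. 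Your direct perturbation argument --- one-step displacement $\|\theta_0'-\theta_0\|_2 \le \lambda \sqrt{l}\, K \|f_{\theta_0}(X_i')-Y_i'\|_2 = O(l^{-1/2})$ under $\lambda<\lambda_0/l$, followed by the fundamental-theorem-of-calculus bound along the segment inside $B(\theta_0, C l^{-1/2})$ --- is more elementary and self-contained: it requires neither the validity of the linearization nor any spectral facts about $\NTK$, and your power-of-$l$ bookkeeping is exactly right (the two factors of $\sqrt{l}$ combine to $\lambda l \le \lambda_0$). What the paper's route buys is uniformity in the adaptation time: the appendix applies this lemma with $\tau$-step (indeed continuous-time) inner loops, and the factor $I-e^{-\lambda\ntk_0\tau}$ has operator norm bounded by $1$ for \emph{every} $\tau$, including the $\lambda\tau=\infty$ regime used in the experiments; your displacement bound iterated over $\tau$ inner steps degrades as $O(\tau\, l^{-1/2})$, which is fine for any fixed number of steps (after a routine induction keeping all iterates in the ball and the residual bounded) but does not cover $\tau\to\infty$. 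Two small nits: the bound $\frac{1}{\sqrt{l}}\|\nabla_\theta f_\theta(\cdot)\|_F \le K$ you invoke is Lemma 1 of \cite{lee2019wide} (Eq.~\eqref{eq:lemma:jacobian:1} inside the paper's proof of Lemma~\ref{lemma:local-Liphschitzness}), not Lemma~\ref{lemma:local-Liphschitzness} itself, which bounds the Jacobian of the meta-output $F$ rather than of $f$; and you should state that the constant $R_0$ you produce absorbs $\|\Y\|_2$ and the factor $\sqrt{N}$ from stacking the $N$ task blocks, matching the paper's final step.
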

\begin{proof}
See Appendix \ref{supp:lemma-proof:bounded_init_loss}.
\end{proof}
\begin{lemma}[\textbf{Kernel Convergence}]\label{lemma:kernel-convergence}
Suppose the learning rates $\eta$ and $\lambda$ suffiently small. As the network width $l$ approaches infinity, $\metantk_0 = J(\theta_0) J(\theta_0)^\top$ converges in probability to a deterministic kernel matrix $\metaNTK$ (i.e., $\metaNTK=\lim_{l \rightarrow \infty} \metantk_0$), which is independent of $\theta_0$ and can be analytically calculated. Furthermore, the eigenvalues of $\metaNTK$ is bounded as, $0 < \lev (\metaNTK) \leq \Lev (\metaNTK) < \infty$.
\end{lemma}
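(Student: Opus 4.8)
The plan is to identify the infinite-width limit of $\metantk_0=\frac1l J(\theta_0)J(\theta_0)^\top$ by first writing the meta-output Jacobian $J(\theta_0)$ explicitly via the chain rule through the inner-loop adaptation, and then passing each factor to its limit. Recall that for task $i$ the meta-output is $F_\theta(\xxyi)=f_{\theta_i'}(X_i)$ with $\theta_i'=\theta-\lambda\nabla_\theta\ell(f_\theta(X_i'),Y_i')$ (and more generally the inner-loop gradient-flow trajectory of total time $\tau$). Differentiating and using the square loss gives
$$\nabla_\theta F_\theta(\xxyi)=\frac{\partial\theta_i'}{\partial\theta}\,\nabla_{\theta_i'}f_{\theta_i'}(X_i),\quad \frac{\partial\theta_i'}{\partial\theta}=I-\lambda\Big(\nabla_\theta f_\theta(X_i')\nabla_\theta f_\theta(X_i')^\top+\sum_{a}\big(f_\theta(X_i')-Y_i'\big)_a\,\nabla_\theta^2 f_\theta(X_i')_a\Big).$$
Thus $J(\theta_0)$ splits into the base-network Jacobians $\nabla_{\theta_0}f_{\theta_0}(X_i),\nabla_{\theta_0}f_{\theta_0}(X_i')$ on query and support data, and an inner-loop sensitivity factor built from first-order Jacobian products and a residual-weighted network Hessian.

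Second, I would pass to the limit $l\to\infty$ factor by factor. The classical NTK results (Jacot et al.; Lee et al.\ 2019) give that $\ntk_0(\cdot,\ast)=\frac1l\nabla_{\theta_0}f_{\theta_0}(\cdot)\nabla_{\theta_0}f_{\theta_0}(\ast)^\top$ converges in probability to the deterministic, $\theta_0$-independent kernel $\NTK(\cdot,\ast)$, with entries bounded uniformly on the compact data domain (Assumption \ref{assum:non-degeneracy}). The residual-weighted Hessian term in $\partial\theta_i'/\partial\theta$ has operator norm $O(l^{-1/2})$ — the same vanishing-curvature mechanism underlying Lemma \ref{lemma:local-Liphschitzness} — while the residual is $O(1)$ and $\lambda=O(1/l)$ keeps $\lambda\,\nabla f\nabla f^\top=O(1)$, so the Hessian contribution is subdominant and drops out. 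Moreover, since the adapted parameters stay within an $O(l^{-1/2})$-ball of initialization (Lemma \ref{lemma:local-Liphschitzness}), the base Jacobian at $\theta_i'$ may be replaced by the one at $\theta_0$ up to vanishing error. The inner-loop sensitivity, once contracted with the Jacobians into output space, is captured by the operator $\widetilde{\T}^\lambda_\NTK(X_i',\tau)$; applying the continuous mapping theorem to the resulting products, inverses, and matrix exponentials (all continuous where $\NTK(X_i',X_i')$ is invertible, which holds by NTK positive-definiteness on the non-degenerate inputs) yields that $\metantk_0$ converges in probability to a deterministic, $\theta_0$-independent limit $\metaNTK$. A direct expansion of $\frac1l(\nabla_\theta F_i)(\nabla_\theta F_j)^\top$ reproduces exactly the four terms of \eqref{eq:MetaNTK_ij}, confirming the composite-kernel form of Definition \ref{def:metaNTK}.

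I expect the main obstacle to be the rigorous control of the inner-loop sensitivity factor, where the Hessian enters through matrix exponentials for the gradient-flow inner loop. Showing these exponential factors converge requires uniform operator-norm bounds on the Hessian along the whole inner-loop trajectory together with perturbation estimates for matrix exponentials; this is precisely where the non-elementary tools of \cite{1977bounds,van1977sensitivity} are needed, and it is the step with no analogue in the supervised NTK analysis. Some care is also required to interchange the inner-loop integration with the width limit, which I would justify using the uniform-in-$l$ high-probability bounds of Lemma \ref{lemma:local-Liphschitzness}.

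Finally, for the eigenvalue bounds, the upper bound $\Lev(\metaNTK)<\infty$ follows because every factor entering $\metaNTK$ — namely $\NTK$, its inverse on the support blocks, and $\widetilde{\T}^\lambda_\NTK$ — is bounded in operator norm uniformly on the compact data domain, so the finite composite is bounded. For the lower bound, $\metantk_0=\frac1l J(\theta_0)J(\theta_0)^\top$ is positive semidefinite for every finite $l$ as a Gram matrix, hence its limit $\metaNTK$ is positive semidefinite; combined with the full-rank hypothesis (Assumption \ref{assum:full-rank}) this upgrades to positive definiteness, giving $\lev(\metaNTK)>0$.
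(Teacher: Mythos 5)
Your proposal is correct and follows essentially the same route as the paper's proof: the paper likewise reduces the meta-output Jacobian at initialization to $\nabla_{\theta_0} f_0(X)-\ntk_0(X,X')\,\widetilde{\T}_{\ntk_0}^{\lambda}(X',\tau)\,\nabla_{\theta_0} f_0(X')$ (its helper Lemma~\ref{lemma:helper-local-Liphschitzness} is precisely your ``residual-weighted Hessian is subdominant and drops out'' step, established via the vanishing operator norm of $\frac{1}{\sqrt l}\nabla^2_\theta f_\theta$), expands $\frac{1}{l}\nabla F_0(\cdot)\nabla F_0(\ast)^\top$ into the same four-term composite expression, and passes to the limit blockwise using NTK convergence in probability together with continuity of the matrix operations. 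Your eigenvalue argument also matches the paper's: positive semidefiniteness of the Gram matrix $\metantk_0$ carried to the limit plus Assumption~\ref{assum:full-rank} gives $\lev(\metaNTK)>0$, and boundedness of the NTK factors (eigenvalues of $\NTK$ being $\cO(L)$) gives $\Lev(\metaNTK)<\infty$.
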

\begin{proof}
See Appendix \ref{supp:global-convergence:lemma-proof:kernel-convergence}.
\end{proof}

Note the update rule of gradient descent on $\theta_t$ with learning rate $\eta$ can be expressed as
\begin{align}\label{eq:gd&jacobian}
    \theta_{t+1} = \theta_t - \eta J(\theta_t)^{\top}g(\theta_t).
\end{align}
The following theorem proves the global convergence of MAML under the update rule of gradient descent.
\begin{theorem}[\textbf{Global Convergence} \textit{(Theorem \ref{thm:global-convergence} restated)}]\label{thm:global-convergence:restated}
Denote $\lev=\lev(\metaNTK)$ and $\Lev=\Lev(\metaNTK)$. For any $\delta_0 > 0$ and $\eta_0 < \frac{2}{\Lev + \lev}$,
there exist $\lss>0$, $\Lambda \in\mathbb N$, $K>1$, and $\lambda_0>0$, such that: for width $l\geq \Lambda$, running gradient descent with learning rates $\eta = \frac {\eta_0}{l}$ and $\lambda < \frac{\lambda_0}{l}$ over random initialization, the following inequalities hold true with probability at least $(1 - \delta_0)$:
\begin{align}
     \sum_{j=1}^{t}\|\theta_j - \theta_{j-1}\|_2  &\leq  \frac {3K \lss}{\lev}  l^{-\frac 1 2} \label{eq:convergence-parameters:supp}\\
    \sup_{t} \|  \metantk_0 - \metantk_t\|_F  &\leq \frac {6K^3\lss}{\mins}  l^{-\frac 1 2}
    \label{eq:convergence-metantk:supp}
\end{align}
and
\begin{align}
g(\theta_t) = \|F(\theta_t) - \Y \|_2 \leq  \left(1 - \frac {\eta_0 \mins}{3}\right)^t \lss\,,
\end{align}
which leads to
\begin{align}
\loss (\theta_t) = \frac{1}{2}\|F(\theta_t) - \Y \|_2^2 \leq  \left(1 - \frac {\eta_0 \mins}{3}\right)^{2t} \frac{\lss^2}{2}\,,
    \label{eq:convergence-loss:supp}
\end{align}
indicating the training loss converges to zero at a linear rate.
\end{theorem}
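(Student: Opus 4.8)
The plan is to establish all four conclusions simultaneously by a single induction on $t$, in the spirit of the finite-width NTK analysis of \cite{lee2019wide} but carried out for the meta-output Jacobian $J(\theta)=\nabla_\theta F(\theta)$. The driver is the gradient-descent recursion \eqref{eq:gd&jacobian}, $\theta_{t+1}=\theta_t-\eta J(\theta_t)^\top g(\theta_t)$, fed by the three lemmas already in hand: Lemma \ref{lemma:bounded_init_loss} seeds the induction by giving $\|g(\theta_0)\|_2\le R_0$ (so I set $\lss\ge R_0$), Lemma \ref{lemma:local-Liphschitzness} controls how fast $J$ moves, and Lemma \ref{lemma:kernel-convergence} lets me replace the empirical $\metantk_0$ by the deterministic $\metaNTK$ up to an $o(1)$ error with eigenvalues pinned in $[\mins,\Lev]$. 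First I would fix the radius constant in Lemma \ref{lemma:local-Liphschitzness} to be $C=\tfrac{3K\lss}{\mins}$, so that the two Lipschitz estimates are valid on the whole ball $B(\theta_0,Cl^{-1/2})$, and then show inductively that every iterate remains in this ball, which is what makes those estimates applicable at each step.

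For the inductive step, assume $\|g(\theta_s)\|_2\le(1-\tfrac{\eta_0\mins}{3})^{s}\lss$ for all $s\le t$ and $\theta_0,\dots,\theta_t\in B(\theta_0,Cl^{-1/2})$. Bounding one displacement with $\frac{1}{\sqrt l}\|J(\theta_t)\|_F\le K$ gives $\|\theta_{t+1}-\theta_t\|_2=\eta\|J(\theta_t)^\top g(\theta_t)\|_2\le \tfrac{\eta_0 K}{\sqrt l}(1-\tfrac{\eta_0\mins}{3})^{t}\lss$, and summing the geometric series yields \eqref{eq:convergence-parameters:supp}, since $\tfrac{\eta_0 K}{\sqrt l}\cdot\tfrac{3}{\eta_0\mins}=\tfrac{3K}{\mins}l^{-1/2}$; in particular the total movement never exceeds $Cl^{-1/2}$, so $\theta_{t+1}\in B(\theta_0,Cl^{-1/2})$ and the ball membership is maintained. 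The kernel bound \eqref{eq:convergence-metantk:supp} then drops out by writing $\metantk_t-\metantk_0=\tfrac1l\bigl[(J(\theta_t)-J(\theta_0))J(\theta_t)^\top+J(\theta_0)(J(\theta_t)-J(\theta_0))^\top\bigr]$ and applying Lemma \ref{lemma:local-Liphschitzness}, which produces the factor $2K^2$ times the movement bound, i.e. $\tfrac{6K^3\lss}{\mins}l^{-1/2}$.

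The core is the contraction of the residual. By the integral mean value theorem $F(\theta_{t+1})-F(\theta_t)=\widetilde J_t(\theta_{t+1}-\theta_t)$ with $\widetilde J_t=\int_0^1 J(\theta_t+s(\theta_{t+1}-\theta_t))\,ds$, so the recursion becomes $g(\theta_{t+1})=(I-\eta\,\widetilde J_t J(\theta_t)^\top)g(\theta_t)$ and hence $\|g(\theta_{t+1})\|_2\le\|I-\eta\,\widetilde J_t J(\theta_t)^\top\|_{\mathrm{op}}\,\|g(\theta_t)\|_2$. I would then write $\eta\,\widetilde J_t J(\theta_t)^\top=\eta_0\metantk_0+E$, where $E$ gathers the two differences $\widetilde J_t-J(\theta_0)$ and $J(\theta_t)-J(\theta_0)$; the Lipschitz estimate together with the movement bound forces $\|E\|_{\mathrm{op}}=O(l^{-1/2})$. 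Because $0<\eta_0<\tfrac{2}{\Lev+\mins}$ one checks that every eigenvalue of $I-\eta_0\metaNTK$ lies in $[-(1-\eta_0\mins),\,1-\eta_0\mins]$, so $\|I-\eta_0\metaNTK\|_{\mathrm{op}}=1-\eta_0\mins$; combining this with Lemma \ref{lemma:kernel-convergence} (which makes $\|\metantk_0-\metaNTK\|_{\mathrm{op}}=o(1)$ for $l\ge\Lambda$) and the $O(l^{-1/2})$ bound on $E$, all error terms can be absorbed into the slack $\tfrac{2\eta_0\mins}{3}$, giving $\|I-\eta\,\widetilde J_t J(\theta_t)^\top\|_{\mathrm{op}}\le1-\tfrac{\eta_0\mins}{3}$. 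This advances the induction on $g$, and squaring gives \eqref{eq:convergence-loss:supp}.

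The main obstacle I expect is the coupled, self-referential structure of the induction: the contraction estimate for $g$ is only legitimate while $\theta_t$ stays in the Lipschitz ball, yet the proof that $\theta_t$ stays in the ball relies on the very geometric decay of $g$ being established. The two must be closed together, with $C$, $\lss$, $K$, $\Lambda$, $\lambda_0$ chosen so that the $O(l^{-1/2})$ errors are genuinely dominated by the $\tfrac{\eta_0\mins}{3}$ slack for all $l\ge\Lambda$ — the deliberately loose factor of $3$ in the rate is exactly what creates this room, and the three lemmas' high-probability events are intersected by a union bound to obtain the overall probability $1-\delta_0$. A second, MAML-specific difficulty is concealed inside Lemma \ref{lemma:local-Liphschitzness}: since $J$ is the Jacobian of the \emph{meta}-output and the inner loop injects Hessian terms that surface as matrix exponentials, establishing the Lipschitz and norm bounds does not reduce to the supervised case; one also needs the constraint $\lambda<\lambda_0/l$ to keep those matrix-exponential factors uniformly bounded so that $\widetilde J_t$ and $J(\theta_t)$ behave like ordinary NTK Jacobians throughout.
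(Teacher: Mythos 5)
Your proposal is correct and follows essentially the same route as the paper's proof: the same induction that simultaneously maintains the geometric decay of $\|g(\theta_t)\|_2$ and the confinement of iterates in $B(\theta_0, C l^{-1/2})$ with $C=\tfrac{3K\lss}{\mins}$, the same one-step decomposition $I-\eta \widetilde J_t J(\theta_t)^\top = (I-\eta_0\metaNTK) + \eta_0(\metaNTK-\metantk_0) + \eta\bigl(J(\theta_0)J(\theta_0)^\top - \widetilde J_t J(\theta_t)^\top\bigr)$ with the three error terms absorbed into the slack $\tfrac{\eta_0\mins}{3}$ via Lemmas \ref{lemma:local-Liphschitzness}, \ref{lemma:bounded_init_loss}, \ref{lemma:kernel-convergence} and a union bound, and the same $2K^2$-Lipschitz splitting for \eqref{eq:convergence-metantk:supp}. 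The only cosmetic difference is your integral mean-value operator $\widetilde J_t=\int_0^1 J(\theta_t+s(\theta_{t+1}-\theta_t))\,ds$ in place of the paper's single interpolation point $\theta_t^\mu$ — in fact the more rigorous form for vector-valued $g$ — and the estimates it yields are identical.
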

\begin{proof}
See Appendix \ref{supp:global-convergence:theorem-proof}.
\end{proof}

In the results of Theorem \ref{thm:global-convergence:restated} above, (\ref{eq:convergence-parameters:supp}) considers the optimization trajectory of network parameters, and show the parameters move locally during training. (\ref{eq:convergence-metantk:supp}) indicates the kernel matrix $\metantk_t$ changes slowly. Finally, (\ref{eq:convergence-loss:supp}) demonstrates that the training loss of MAML decays exponentially to zero as the training time evolves, indicating convergence to global optima at a linear rate.

\subsection{Helper Lemmas}
\label{supp:global-convergence:lemma-proof}

\begin{lemma}[]\label{lemma:helper-zero-F-norm}
As the width $l\rightarrow \infty$, for any vector $\mathbf{a} \in \bR^{m\times 1}$ that $\|\mathbf{a}\|_F \leq C$ with some constant $C>0$, we have 
\begin{align}\label{eq:NTK-grad-vec-Frob}
\|\nabla_\theta \ntk_\theta(x,X') \cdot \mathbf{a}\|_F \rightarrow 0
\end{align}
where $\theta$ is randomly intialized parameters.
\end{lemma}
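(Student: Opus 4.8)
The plan is to differentiate the empirical kernel through the product rule, which exposes the parameter-Hessian of the scalar network, and then to use the fact that at Gaussian initialization this Hessian is asymptotically negligible relative to the gradient. Writing $\ntk_\theta(x,X') = \nabla_\theta f_\theta(x)\,\nabla_\theta f_\theta(X')^\top \in \mathbb{R}^{1\times m}$, whose $j$-th entry is $\langle \nabla_\theta f_\theta(x), \nabla_\theta f_\theta(x_j')\rangle$ over the $m$ support points $x_j'\in X'$, one more differentiation in $\theta$ followed by contraction with $\mathbf{a}=(a_j)_{j=1}^m$ gives
\begin{align}
\nabla_\theta \ntk_\theta(x,X')\cdot \mathbf{a} = \sum_{j=1}^m a_j\Big( \nabla_\theta^2 f_\theta(x)\,\nabla_\theta f_\theta(x_j') + \nabla_\theta^2 f_\theta(x_j')\,\nabla_\theta f_\theta(x)\Big),
\end{align}
where $\nabla_\theta^2 f_\theta(\cdot)\in\mathbb{R}^{D\times D}$ is the parameter-Hessian of the network output.

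By the triangle inequality, submultiplicativity of the operator norm, and $\sum_j |a_j|\le \sqrt m\,\|\mathbf{a}\|_2\le \sqrt m\,C$, the claim reduces to controlling a product of a Hessian operator norm and a gradient norm,
\begin{align}
\big\|\nabla_\theta \ntk_\theta(x,X')\cdot \mathbf{a}\big\|_F \le 2\sqrt m\,C \cdot \max_{z\in\{x\}\cup X'}\!\big\|\nabla_\theta^2 f_{\theta_0}(z)\big\|_{\mathrm{op}} \cdot \max_{z}\big\|\nabla_\theta f_{\theta_0}(z)\big\|_2 .
\end{align}
The gradient factor is of the usual size, $\|\nabla_\theta f_{\theta_0}(z)\|_2 = O(\sqrt l)$ with high probability, which already follows from the $f$-analogue of the second inequality in Lemma~\ref{lemma:local-Liphschitzness}. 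The decisive input is a sharper second-order estimate, $\|\nabla_\theta^2 f_{\theta_0}(z)\|_{\mathrm{op}} = \tilde{O}(1)$ (up to polylog and depth-dependent constants), holding with high probability on the fixed, finite point set $\{x\}\cup X'$. Substituting these and recalling the $\tfrac1l$ normalization of the kernel used throughout the convergence analysis, the right-hand side is $O(l^{-1/2})$, hence vanishes as $l\to\infty$; since the point set is fixed, no union bound over infinitely many inputs is needed.

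The main obstacle is exactly this Hessian bound. Read as a bound on the derivative of the $\tfrac{1}{\sqrt l}$-scaled Jacobian, the local Lipschitzness of Lemma~\ref{lemma:local-Liphschitzness} only yields $\|\nabla_\theta^2 f_{\theta_0}\|_{\mathrm{op}}=O(\sqrt l)$, which would make $\nabla_\theta\ntk$ merely $O(1)$ and is not enough to conclude vanishing. The required improvement to $\tilde{O}(1)$ reflects the ``transition to linearity'' of wide nets: propagating the Hessian through the layer recurrence \eqref{eq:recurrence} and exploiting the independence and concentration of the Gaussian weights across the wide layers, the cross-layer Hessian blocks carry an extra $l^{-1/2}$ and do not inherit the naive $\sqrt l$ growth. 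I would establish this by induction on depth over the pre- and post-activation Hessians in \eqref{eq:recurrence}, mirroring the second-order estimates already used inside the proof of Lemma~\ref{lemma:local-Liphschitzness}, together with standard Gaussian concentration to fix the high-probability constants. This lemma is precisely what forces the label-dependent terms to vanish in the infinite-width MetaNTK of Theorem~\ref{thm:MetaNTK}, since $\mathbf{a}$ is later instantiated as a bounded residual vector.
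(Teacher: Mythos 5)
Your proof follows essentially the same route as the paper's: differentiate $\ntk_\theta(x,X')$ by the product rule to expose the two Hessian-times-gradient terms, bound each by operator-norm/Frobenius-norm submultiplicativity together with $\|\frac{1}{\sqrt l}\nabla_\theta f_\theta(\cdot)\|_F \leq constant$ from \cite{lee2019wide}, and conclude from the smallness of the parameter-Hessian at initialization. The only difference is that the paper simply imports the Hessian estimates (namely $\|\frac{1}{\sqrt l}\nabla^2_\theta f_\theta(x)\|_{op} \rightarrow 0$ and the contracted bound $\|\frac{1}{\sqrt l}\nabla^2_\theta f_\theta(X')^\top \cdot \mathbf{a}\|_{op} \rightarrow 0$) from the cited reference \cite{hessian-ntk} --- the ``transition to linearity'' result you correctly identify as the decisive input, though it is not actually contained in the proof of Lemma \ref{lemma:local-Liphschitzness} --- rather than re-deriving them by induction over the layer recurrence, and it keeps $\mathbf{a}$ contracted inside that cited operator-norm bound instead of pulling out the factor $\sum_j |a_j| \leq \sqrt{m}\,C$.
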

\begin{proof}
Notice that
\begin{align}
\ntk_\theta(x,X') = \frac{1}{l}\underbrace{\nabla_\theta f_\theta(\overbrace{x}^{\in\bR^{ d}})}_{\in \bR^{ 1\times D}}\cdot \underbrace{\nabla_\theta f_\theta(\overbrace{X'}^{\in\bR^{m\times d}})^\top}_{\in \bR^{D \times m}} \in \bR^{ 1\times m}
\end{align}
with gradient as
\begin{align}\label{eq:helper-zero-F-norm:NTK-grad}
\nabla_\theta \ntk_\theta(x,X') &= \frac{1}{l}\underbrace{\nabla^2_\theta f_\theta(x)}_{\in \bR^{ 1 \times D \times D}} \cdot\underbrace{\nabla_\theta f_\theta(X')^\top}_{\in \bR^{D\times m}} + \frac{1}{l}\underbrace{ \nabla_\theta f_\theta(x)}_{\in \bR^{1\times D}} \cdot \underbrace{\nabla^2_\theta f_\theta(X')^\top}_{\in \bR^{D\times m \times D}} \in \bR^{1 \times m \times D}
\end{align}
where we apply a \textit{dot product} in the \textit{first two dimensions} of 3-tensors and matrices to obtain matrices.

Then, it is obvious that our goal is to bound the Frobenius Norm of
\begin{align}\label{eq:helper-zero-F-norm:NTK-grad:tensor-contraction}
    \nabla_\theta \ntk_\theta(x,X') \cdot \mathbf{a} = \left(\frac{1}{l}\nabla^2_\theta f_\theta(x) \cdot\nabla_\theta f_\theta(X')^\top\right) \cdot \mathbf{a} + \left(\frac{1}{l}\nabla_\theta f_\theta(x) \cdot\nabla^2_\theta f_\theta(X')^\top\right) \cdot \mathbf{a}
\end{align}

Below, we prove that as the width $l\rightarrow \infty$, the first and second terms of \eqref{eq:helper-zero-F-norm:NTK-grad:tensor-contraction} both have vanishing Frobenius norms, which finally leads to the proof of \eqref{eq:NTK-grad-vec-Frob}.
\begin{itemize}
    \item \textit{First Term of (\ref{eq:helper-zero-F-norm:NTK-grad:tensor-contraction})}. Obviously, reshaping $\nabla^2_\theta f_\theta(x) \in \bR^{1\times D\times D}$ as a $\bR^{D \times D}$ matrix does not change the Frobenius norm $\|\frac{1}{l}\nabla^2_\theta f_\theta(x) \cdot\underbrace{\nabla_\theta f_\theta(X')^\top}_{\in \bR^{D\times m}}\|_F$ (in other words, $\|\frac{1}{l}\underbrace{\nabla^2_\theta f_\theta(x)}_{\in \bR^{ {1 \times D \times D}}} \cdot\underbrace{\nabla_\theta f_\theta(X')^\top}_{\in \bR^{D\times m}}\|_F = \|\frac{1}{l}\underbrace{\nabla^2_\theta f_\theta(x)}_{\in \bR^{D \times D}} \cdot\underbrace{\nabla_\theta f_\theta(X')^\top}_{\in \bR^{D\times m}}\|_F$). 
    
    By combining the following three facts,
    \begin{enumerate}
        \item $\|\frac{1}{\sqrt{l}}\underbrace{\nabla^2_\theta f_\theta(x)}_{\in \bR^{D \times D}} \|_{op} \rightarrow 0 $ indicated by \cite{hessian-ntk},
        \item the matrix algebraic fact $\|HB\|_F \leq \|H\|_{op} \|B\|_F$,
        \item the bound $\|\frac{1}{\sqrt{l}} \nabla_\theta f_\theta(\cdot)\|_F < constant$ from \cite{lee2019wide},
    \end{enumerate}
    one can easily show that the first term of (\ref{eq:helper-zero-F-norm:NTK-grad}) has vanishing Frobenius norm, i.e., 
    \begin{align}\label{eq:NTK-grad:first-term-Frob}
    \|\frac{1}{l}\nabla^2_\theta f_\theta(x) \cdot\nabla_\theta f_\theta(X')^\top\|_F \rightarrow 0
    \end{align}
    Then, obviously,
    \begin{align}\label{eq:NTK-grad:first-term-vec-Frob}
        \|\left(\frac{1}{l}\nabla^2_\theta f_\theta(x) \cdot\nabla_\theta f_\theta(X')^\top\right) \cdot \mathbf{a}\|_F \leq \|\frac{1}{l}\nabla^2_\theta f_\theta(x) \cdot\nabla_\theta f_\theta(X')^\top\|_F \|\mathbf{a}\|_F \rightarrow 0
    \end{align}
    \item \textit{Second Term of (\ref{eq:helper-zero-F-norm:NTK-grad:tensor-contraction}).} From \cite{hessian-ntk}, we know that
    \begin{align}\label{eq:NTK-grad:Hessian-vec-op-norm}
    \|\underbrace{\frac{1}{\sqrt{l}} \nabla^2_\theta f_\theta(X')^\top}_{\in \bR^{D \times m \times D}}\cdot \underbrace{\mathbf{a}}_{\in \bR^{m \times 1}} \|_{op} \rightarrow 0
    \end{align}
    Then, similar to the derivation of (\ref{eq:NTK-grad:first-term-Frob}), we have
    \begin{align}\label{eq:NTK-grad:second-term-vec-Frob}
        \|\left(\frac{1}{l}\nabla_\theta f_\theta(x) \cdot\nabla^2_\theta f_\theta(X')^\top\right) \cdot \mathbf{a}\|_F \leq \overbrace{\|\frac{1}{\sqrt{l}}\nabla_\theta f_\theta(x)\|_F}^{\leq constant} \cdot\overbrace{\|\nabla^2_\theta f_\theta(X')^\top\cdot\mathbf{a}\|_{op}}^{\rightarrow 0} \rightarrow 0
    \end{align}
    
    \item Finally, combining (\ref{eq:NTK-grad:first-term-vec-Frob}) and (\ref{eq:NTK-grad:second-term-vec-Frob}), we obtain (\ref{eq:NTK-grad-vec-Frob}) by
    \begin{align}
        \|\nabla_\theta \ntk_\theta(x,X') \cdot \mathbf{a}\|_F&\leq \|\left(\frac{1}{l}\nabla^2_\theta f_\theta(x) \cdot\nabla_\theta f_\theta(X')^\top\right) \cdot \mathbf{a}\|_F \nonumber\\ 
        &~+\|\left(\frac{1}{l}\nabla_\theta f_\theta(x) \cdot\nabla^2_\theta f_\theta(X')^\top\right) \cdot \mathbf{a}\|_F\nonumber\\
        &\rightarrow 0
    \end{align}
\end{itemize}
\end{proof}

\begin{lemma}
\label{lemma:helper-local-Liphschitzness}
Given any task $\task = (\xyxy)$ and randomly initialized parameters $\theta$, as the width $l \rightarrow \infty$, for any $x \in X$, where $x\in \bR^{d}$ and $X\in \bR^{n\times d}$, we have
\begin{align}\label{eq:lemma:helper-local-Liphschitzness}
    \|\nabla_\theta \left(\ntk_{\theta}(x,X')\ntk_{\theta}^{-1} (I - e^{-\lambda \ntk_{\theta} \tau})  \right) (f_\theta(\sX') - Y')\|_F \rightarrow 0~,
\end{align}
and furthermore,
\begin{align}\label{eq:lemma:helper-local-Liphschitzness:stack-of-x}
     \|\nabla_\theta \left(\ntk_{\theta}(X,X')\ntk_{\theta}^{-1} (I - e^{-\lambda \ntk_{\theta} \tau})  \right)  (f_\theta(\sX') - Y')\|_F \rightarrow 0~.
\end{align}
\end{lemma}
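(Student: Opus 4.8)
The plan is to expand the gradient with the product rule and reduce every resulting piece to the situation already settled by Lemma~\ref{lemma:helper-zero-F-norm}. The first thing to notice is that $\nabla_\theta$ acts only on the matrix $M(\theta) \coloneqq \ntk_\theta(x,X')\,\ntk_\theta^{-1}(I - e^{-\lambda\ntk_\theta\tau})$ and not on the residual $v(\theta) \coloneqq f_\theta(X') - Y'$, so the object to bound is $(\nabla_\theta M)\,v$. Writing $A = \ntk_\theta(x,X')$, $B = \ntk_\theta^{-1}$ (with $\ntk_\theta \equiv \ntk_\theta(X',X')$), and $C = I - e^{-\lambda\ntk_\theta\tau}$, the product rule gives
\begin{align}
(\nabla_\theta M)\,v = (\nabla_\theta A)\,BCv + A\,(\nabla_\theta B)\,Cv + AB\,(\nabla_\theta C)\,v .
\end{align}
My strategy is to show that each of these three terms isolates a factor of the form $(\nabla_\theta \ntk_\theta(\cdot,X'))\cdot\mathbf{a}$ with a \emph{bounded} vector $\mathbf{a}$, so that Lemma~\ref{lemma:helper-zero-F-norm} forces it to vanish, while all remaining factors have bounded operator norm.

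Before treating the three terms I would record the uniform bounds that make the remaining factors harmless. By Lemma~\ref{lemma:kernel-convergence} the Gram matrix $\ntk_\theta$ converges in probability to the positive-definite $\NTK$, whose eigenvalues are bounded away from $0$ and $\infty$; hence, with high probability, $\|B\|_{op}$ and $\|A\|_{op}$ are bounded. Since $\ntk_\theta \succeq 0$, the matrix $C$ has eigenvalues $1 - e^{-\lambda\mu_i\tau}\in[0,1)$, so $\|C\|_{op}\le 1$, and likewise $\|e^{-s\lambda\ntk_\theta\tau}\|_{op}\le 1$ for all $s\ge 0$. Finally, $\|v\|_2 = \|f_\theta(X') - Y'\|_2$ is bounded with high probability by the argument behind Lemma~\ref{lemma:bounded_init_loss}, as $Y'$ is fixed and $f_\theta(X')$ stays bounded at initialization.

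With these bounds, the first term is immediate: $\mathbf{a} \coloneqq BCv$ is bounded, so $(\nabla_\theta A)\,BCv = (\nabla_\theta\ntk_\theta(x,X'))\cdot\mathbf{a}\to 0$ by Lemma~\ref{lemma:helper-zero-F-norm}. For the second term I use $\nabla_\theta\ntk_\theta^{-1} = -\ntk_\theta^{-1}(\nabla_\theta\ntk_\theta)\ntk_\theta^{-1}$, so that $A(\nabla_\theta B)Cv = -A\ntk_\theta^{-1}\big[(\nabla_\theta\ntk_\theta)\,\mathbf{a}'\big]$ with $\mathbf{a}' \coloneqq \ntk_\theta^{-1}Cv$ bounded; the bracket vanishes by the row-wise application of Lemma~\ref{lemma:helper-zero-F-norm} to $\ntk_\theta(X',X')$, and the bounded prefactor $A\ntk_\theta^{-1}$ preserves this. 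The third term is the crux, as it needs the gradient of a matrix exponential; here I would invoke the Duhamel/Fréchet-derivative identity
\begin{align}
\nabla_\theta e^{-\lambda\tau\ntk_\theta} = -\lambda\tau\int_0^1 e^{-s\lambda\tau\ntk_\theta}\,(\nabla_\theta\ntk_\theta)\,e^{-(1-s)\lambda\tau\ntk_\theta}\,ds .
\end{align}
Contracting on the right with $v$ turns the innermost factor into the bounded vector $\mathbf{b}_s \coloneqq e^{-(1-s)\lambda\tau\ntk_\theta}v$, so $(\nabla_\theta\ntk_\theta)\,\mathbf{b}_s\to 0$ by Lemma~\ref{lemma:helper-zero-F-norm}; the outer factors $e^{-s\lambda\tau\ntk_\theta}$ and $AB$ have bounded operator norm, and integrating over the finite interval $s\in[0,1]$ keeps the Frobenius norm vanishing. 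Summing the three vanishing contributions yields \eqref{eq:lemma:helper-local-Liphschitzness}.

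The stacked statement \eqref{eq:lemma:helper-local-Liphschitzness:stack-of-x} then follows at once: replacing the single sample $x$ by the finite collection $X\in\bR^{n\times d}$ only stacks $n$ copies of the single-sample bound, and a finite sum of vanishing Frobenius norms still vanishes. The main obstacle I anticipate is the third term, where one must verify that the Duhamel representation interacts correctly with the three-tensor structure of $\nabla_\theta\ntk_\theta$, i.e.\ that contracting the free sample index with $v$ really leaves the $\theta$-derivative index free to be estimated by Lemma~\ref{lemma:helper-zero-F-norm}, uniformly in the integration variable $s$. Everything else is bookkeeping resting on the two ingredients ``gradient-of-NTK-times-bounded-vector vanishes'' and ``the auxiliary kernel factors are uniformly bounded''.
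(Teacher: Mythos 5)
Your proposal follows the paper's own proof essentially step for step: the same product-rule split into the three terms $(\nabla_\theta A)BCv$, $A(\nabla_\theta B)Cv$, $AB(\nabla_\theta C)v$, the same preliminary boundedness facts for $\ntk_{\theta}(x,X')$, $\ntk_{\theta}^{-1}$, $I - e^{-\lambda \ntk_{\theta} \tau}$, and $f_\theta(X')-Y'$, the same reduction of each term to Lemma~\ref{lemma:helper-zero-F-norm} (using $\nabla_\theta \ntk_\theta^{-1} = -\ntk_\theta^{-1}(\nabla_\theta\ntk_\theta)\ntk_\theta^{-1}$ for the middle term), and the same finite-stacking argument for \eqref{eq:lemma:helper-local-Liphschitzness:stack-of-x}. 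The only divergence is in the third term, and it is in your favor: the paper differentiates the exponential via $\nabla_\theta (I - e^{-\lambda \ntk_\theta \tau }) = \lambda \tau\, e^{-\lambda \ntk_\theta \tau}\cdot \nabla_\theta \ntk_\theta$, which tacitly assumes $\nabla_\theta\ntk_\theta$ commutes with $\ntk_\theta$, whereas your Duhamel representation is the exact formula and delivers the same conclusion since both sandwiching exponentials have operator norm at most one uniformly in $s\in[0,1]$.
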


\begin{proof}[Proof of Lemma \ref{lemma:helper-local-Liphschitzness}]

~\\
\textbf{Overview.} In this proof, we consider the expression
\begin{align}
    &\quad \nabla_\theta \left(\ntk_{\theta}(x,X')\ntk_{\theta}^{-1} (I - e^{-\lambda \ntk_{\theta} \tau})  \right)  (f_\theta(\sX') - Y') \label{eq:lemma:helper-local-Liphschitzness:grad-all}\\
    &= ~~~\nabla_\theta \left(\ntk_{\theta}(x,X')\right)\ntk_{\theta}^{-1} (I - e^{-\lambda \ntk_{\theta} \tau})  (f_\theta(\sX') - Y') \label{eq:lemma:helper-local-Liphschitzness:grad-1}\\
    &\quad +\ntk_{\theta}(x,X')\left(\nabla_\theta \ntk_{\theta}^{-1}\right) (I - e^{-\lambda \ntk_{\theta} \tau})  (f_\theta(\sX') - Y')\label{eq:lemma:helper-local-Liphschitzness:grad-2}\\
    &\quad +\ntk_{\theta}(x,X') \ntk_{\theta}^{-1} \left(\nabla_\theta (I - e^{-\lambda \ntk_{\theta} \tau}) \right)  (f_\theta(\sX') - Y')\label{eq:lemma:helper-local-Liphschitzness:grad-3},
\end{align}
and we prove the terms of (\ref{eq:lemma:helper-local-Liphschitzness:grad-1}), (\ref{eq:lemma:helper-local-Liphschitzness:grad-2}) and (\ref{eq:lemma:helper-local-Liphschitzness:grad-3}) all have vanishing Frobenius norm. Thus, (\ref{eq:lemma:helper-local-Liphschitzness:grad-all}) also has vanishing Frobenius norm in the infinite width limit, which is exactly the statement of (\ref{eq:lemma:helper-local-Liphschitzness}). This indicates that \eqref{eq:lemma:helper-local-Liphschitzness:stack-of-x} also has a vanishing Frobenius norm, since $\ntk_\theta(X,X')$ can be seen as a stack of $n$ copies of $\ntk_\theta(x,X')$, where $n$ is a finite constant.

\paragraph{Step I.}Each factor of \eqref{eq:lemma:helper-local-Liphschitzness:grad-all} has bounded Frobenius norm.
\begin{itemize}
    \item $\|\ntk_{\theta}(x,X')\|_F$. It has been shown that $\|\frac{1}{\sqrt l} \nabla_\theta f(\cdot)\|_F \leq constant$ in \cite{lee2019wide}, thus we have $\|\ntk_{\theta}(x,X')\|_F = \| \frac{1}{l} \nabla_\theta f(x) \nabla_\theta f(X')^\top\|_F\leq \|\frac{1}{\sqrt l} \nabla_\theta f(x)\|_F\|\frac{1}{\sqrt l} \nabla_\theta f(X')\|_F\leq constant$.
    \item $\|\ntk_{\theta}^{-1}\|_F$. It has been shown that $\ntk_\theta$ is positive definite with positive least eigenvalue \cite{ntk,CNTK}, thus $\|\ntk_\theta^{-1}\|_F \leq constant$.
    \item $\|I - e^{-\lambda \ntk_{\theta} \tau}\|_F$. \cite{cao2019generalization} shows that largest eigenvalues of $\ntk_\theta$ are of $O(L)$, and we know $\ntk_\theta$ is positive definite \cite{ntk,CNTK}, thus it is obvious the eigenvalues of $I - e^{-\lambda \ntk_{\theta} \tau}$ fall in the set $\{z~|~0<z<1\}$. Therefore, certainly we have $\|I - e^{-\lambda \ntk_{\theta} \tau}\|_F \leq constant$.
    \item $\| f_\theta(\sX') - Y'\|_F$. \cite{lee2019wide} shows that $\| f_\theta(\sX') - Y'\|_2 \leq constant$, which indicates that $\| f_\theta(\sX') - Y'\|_F \leq constant$.
\end{itemize}
In conclusion, we have shown
\begin{align}\label{eq:lemma:helper-local-Liphschitzness:factors-bounded-frob}
\|\ntk_{\theta}(x,X')\|_F,\|\ntk_{\theta}^{-1}\|_F,\|I - e^{-\lambda \ntk_{\theta} \tau}\|_F, \|f_\theta(\sX')-Y'\|_F \leq constant
\end{align}

\paragraph{Step II.} Bound (\ref{eq:lemma:helper-local-Liphschitzness:grad-1}).

 Without loss of generality, let us consider the neural net output dimension $k=1$ in this proof, i.e., $f_\theta: \mathbb{R}^d \mapsto \mathbb{R}$. (Note: with $k>1$, the only difference is that $\nabla_{\theta} f(X') \in \bR^{mk\times D}$, which has no impact on the proof). Then, we have
\begin{align}
\ntk_\theta(x,X') = \frac{1}{l}\underbrace{\nabla_\theta f_\theta(\overbrace{x}^{\in\bR^{ d}})}_{\in \bR^{ 1\times D}}\cdot \underbrace{\nabla_\theta f_\theta(\overbrace{X'}^{\in\bR^{m\times d}})^\top}_{\in \bR^{D \times m}} \in \bR^{ 1\times m}
\end{align}
with gradient as
\begin{align}\label{eq:NTK-grad}
\nabla_\theta \ntk_\theta(x,X') &= \frac{1}{l}\underbrace{\nabla^2_\theta f_\theta(x)}_{\in \bR^{ 1 \times D \times D}} \cdot\underbrace{\nabla_\theta f_\theta(X')^\top}_{\in \bR^{D\times m}} + \frac{1}{l}\underbrace{ \nabla_\theta f_\theta(x)}_{\in \bR^{1\times D}} \cdot \underbrace{\nabla^2_\theta f_\theta(X')^\top}_{\in \bR^{D\times m \times D}} \in \bR^{1 \times m \times D}
\end{align}
where we apply a \textit{dot product} in the \textit{first two dimensions} of 3-tensors and matrices to obtain matrices.

Based on (\ref{eq:lemma:helper-local-Liphschitzness:factors-bounded-frob}), we know that $$\|\overbrace{\ntk_{\theta}^{-1} (I - e^{-\lambda \ntk_{\theta} \tau})  (f_\theta(\sX') - Y')}^{\in \bR^{m \times 1}}\|_F\leq \|\ntk_{\theta}^{-1} \|_F \|I - e^{-\lambda \ntk_{\theta} \tau}\|_F  \|f_\theta(\sX') - Y'\|_F \leq constant~.$$ Then, applying (\ref{eq:NTK-grad-vec-Frob}), we have
\begin{align}\label{eq:lemma:helper-local-Liphschitzness:term-1}
\|\nabla_\theta \left(\ntk_{\theta}(x,X')\right)\cdot \ntk_{\theta}^{-1} (I - e^{-\lambda \ntk_{\theta} \tau})  (f_\theta(\sX') - Y') \|_F \rightarrow 0
\end{align}

\paragraph{Step III.} Bound (\ref{eq:lemma:helper-local-Liphschitzness:grad-2}) and (\ref{eq:lemma:helper-local-Liphschitzness:grad-3})

\begin{itemize}
    \item Bound (\ref{eq:lemma:helper-local-Liphschitzness:grad-2}): $\ntk_{\theta}(x,X')\left(\nabla_\theta \ntk_{\theta}^{-1}\right) (I - e^{-\lambda \ntk_{\theta} \tau}) (f_\theta(\sX') - Y')$.
    
    Clearly, $\underbrace{\nabla_\theta \ntk_\theta^{-1}}_{m \times m \times D} = -\underbrace{\ntk_\theta^{-1}}_{\in \bR^{m \times m}} \cdot \underbrace{(\nabla_\theta \ntk_\theta)}_{\in \bR^{m \times m \times D}} \cdot \underbrace{\ntk_\theta^{-1}}_{\bR^{m \times m}}$, where we apply a dot product in the first two dimensions of the 3-tensor and matrices. 
    
    Note that $\nabla_\theta \ntk_\theta = \sqrt{1}{l} \nabla_\theta^2 f_\theta (X') \cdot \nabla_\theta f_\theta (X')^\top+ \sqrt{1}{l}\nabla_\theta f_\theta (X') \cdot \nabla_\theta^2 f_\theta (X')^\top$. Obviously, by (\ref{eq:NTK-grad:Hessian-vec-op-norm}) and (\ref{eq:lemma:helper-local-Liphschitzness:factors-bounded-frob}), we can easily prove that 
    \begin{align}\label{eq:lemma:helper-local-Liphschitzness:term-2}
        \|\ntk_{\theta}(x,X')\left(\nabla_\theta \ntk_{\theta}^{-1}\right) (I - e^{-\lambda \ntk_{\theta} \tau}) (f_\theta(\sX') - Y')\|_F \rightarrow 0
    \end{align}
    \item Bound (\ref{eq:lemma:helper-local-Liphschitzness:grad-3}): $\ntk_{\theta}(x,X') \ntk_{\theta}^{-1} \left(\nabla_\theta (I - e^{-\lambda \ntk_{\theta} \tau}) \right) (f_\theta(\sX') - Y')$
    
    Since $\underbrace{\nabla_\theta (I - e^{-\lambda \ntk_\theta \tau })}_{\in \bR^{m \times m \times D}} = \lambda \tau \cdot \underbrace{e^{-\lambda \ntk_\theta \tau}}_{\in \bR^{m \times m}}\cdot \underbrace{\nabla_\theta \ntk_\theta}_{\in \bR^{m \times m \times D}}$, we can easily obtain the following result by (\ref{eq:NTK-grad:Hessian-vec-op-norm}) and (\ref{eq:lemma:helper-local-Liphschitzness:factors-bounded-frob}),
    \begin{align}\label{eq:lemma:helper-local-Liphschitzness:term-3}
         \|\ntk_{\theta}(x,X') \ntk_{\theta}^{-1} \left(\nabla_\theta (I - e^{-\lambda \ntk_{\theta} \tau}) \right) (f_\theta(\sX') - Y')\|_F \rightarrow 0
    \end{align}

\end{itemize}
\paragraph{Step IV.}Final result: prove (\ref{eq:lemma:helper-local-Liphschitzness:grad-all}) and \eqref{eq:lemma:helper-local-Liphschitzness:stack-of-x}.

Combining (\ref{eq:lemma:helper-local-Liphschitzness:term-1}), (\ref{eq:lemma:helper-local-Liphschitzness:term-2}) and (\ref{eq:lemma:helper-local-Liphschitzness:term-3}), we can prove \eqref{eq:lemma:helper-local-Liphschitzness:grad-all}
\begin{align}
&\quad \|\nabla_\theta \left(\ntk_{\theta}(x,X')\ntk_{\theta}^{-1} (I - e^{-\lambda \ntk_{\theta} \tau})  \right) (f_\theta(\sX') - Y')|_F \\
&\leq \|\nabla_\theta \left(\ntk_{\theta}(x,X')\right)\ntk_{\theta}^{-1} (I - e^{-\lambda \ntk_{\theta} \tau}) (f_\theta(\sX') - Y') \|_F \nonumber\\
&+  \|\ntk_{\theta}(x,X')\left(\nabla_\theta \ntk_{\theta}^{-1}\right) (I - e^{-\lambda \ntk_{\theta} \tau})(f_\theta(\sX') - Y')\|_F \nonumber\\
&+\|\ntk_{\theta}(x,X') \ntk_{\theta}^{-1} \left(\nabla_\theta (I - e^{-\lambda \ntk_{\theta} \tau}) \right) (f_\theta(\sX') - Y')\|_F\nonumber \\
&\rightarrow 0 
\end{align}

Then, since $\ntk_\theta(X,X')$ can be seen as a stack of $n$ copies of $\ntk_\theta(x,X')$, where $n$ is a finite constant, we can easily prove \eqref{eq:lemma:helper-local-Liphschitzness:stack-of-x} by
\begin{align}
    &\quad \|\nabla_\theta \left(\ntk_{\theta}(X,X')\ntk_{\theta}^{-1} (I - e^{-\lambda \ntk_{\theta} \tau})  \right) (f_\theta(\sX') - Y')\|_F\\ 
    &\leq \sum_{i\in [n]} \|\nabla_\theta \left(\ntk_{\theta}(x_i,X')\ntk_{\theta}^{-1} (I - e^{-\lambda \ntk_{\theta} \tau})  \right) (f_\theta(\sX') - Y')\|_F \nonumber\\
    &\rightarrow 0
\end{align}
where we denote $X = (x_i)_{i=1}^n$.
\end{proof}
\subsection{Proof of Lemma \ref{lemma:local-Liphschitzness}}\label{supp:global-convergence:proof-liphscitzness}
\begin{proof}[Proof of Lemma \ref{lemma:local-Liphschitzness}]
Consider an arbitrary task $\task = (\xyxy)$. Given sufficiently large width $l$, for any parameters in the neighborhood of the initialization, i.e., $\theta \in \B(\theta_0, C l^{-1/2})$, based on \cite{lee2019wide}, we know the meta-output can be decomposed into a terms of $f_\theta$,
\begin{align}\label{eq:lemma:jacobian:0}
    F_\theta(\xxy) = f_\theta(X) - \ntk_{\theta}(X,X')\ntk_{\theta}^{-1} (I - e^{-\lambda \ntk_{\theta} \tau})(f_\theta(X')-Y'),
\end{align}
where $\ntk_\theta(X,X') = \frac{1}{l}\nabla_\theta f_\theta(X) \nabla_\theta f_\theta(X')^\top$, and $\ntk_{\theta}\equiv \ntk_\theta(X',X')$ for convenience.

Then, we consider $\nabla_\theta F_\theta(\xxy)$, the gradient of $F_\theta(\xxy)$ in (\ref{eq:lemma:jacobian:0}),
\begin{align}\label{eq:lemma:jacobian:full-F-grad}
    \nabla_{\theta} F_\theta(\xxy)
    &= \nabla_{\theta}f_\theta(\sX) - \ntk_{\theta}(X,X')\ntk_{\theta}^{-1} (I - e^{-\lambda \ntk_{\theta} \tau}) \nabla_{\theta}f_\theta(\sX') \nonumber\\
    &\quad - \nabla_\theta \left(\ntk_{\theta}(X,X')\ntk_{\theta}^{-1} (I - e^{-\lambda \ntk_{\theta} \tau})  \right) (f_\theta(\sX') - Y')
\end{align}

By Lemma \ref{lemma:helper-local-Liphschitzness}, we know the last term of \eqref{eq:lemma:jacobian:full-F-grad} has a vanishing Frobenius norm as the width increases to infinity. Thus, for any $\varepsilon > 0$ and $0<\delta<1$, there exists $l^* >0 $ s.t. for width $l> l^*$, with probability at least $1-\delta$, the last term of \eqref{eq:lemma:jacobian:full-F-grad} is of $\cO(\varepsilon)$, i.e.,
\begin{align}\label{eq:lemma:jacobian:F-grad}
    \nabla_{\theta} F_\theta(\xxy)
    = \nabla_{\theta}f_\theta(\sX) - \ntk_{\theta}(X,X')\ntk_{\theta}^{-1} (I - e^{-\lambda \ntk_{\theta} \tau}) \nabla_{\theta}f_\theta(\sX') + \cO(\varepsilon)
\end{align}
Since $\cO(\varepsilon)$ is of a negligible order, we do not carry it in the remaining proof steps for simplicity, and it does not affect the correctness of the derivations (since the bounds of this Lemma are probabilistic).

Now, let us consider the SVD decomposition on $\frac{1}{\sqrt l}\nabla_\theta f_\theta(X') \in \mathbb R^{km \times D}$, where $X' \in \mathbb{R}^{k \times m}$ and $\theta \in \mathbb{R}^D$. such that $\frac{1}{\sqrt l}\nabla_\theta f_\theta(X') = U \Sigma V^\top$, where $U\in \mathbb R^{km \times km},V\in \mathbb R^{D \times km}$ are orthonormal matrices while $\Sigma \in \mathbb R^{km \times km}$ is a diagonal matrix. Note that we take $km \leq D$ here since the width is sufficiently wide.

Then, since $\ntk_\theta = \frac{1}{l}\nabla_\theta f_\theta(X') \nabla_\theta f_\theta(X')^\top= U \Sigma V^\top V \Sigma U^\top = U \Sigma^2 U^\top$, we have $\ntk^{-1}_\theta = U \Sigma^{-2} U^\top$. Also, by Taylor expansion, we have 
\begin{align}\label{eq:lemma:jacobian:taylor}
I - e^{-\lambda\ntk_\theta \tau} = I - \sum_{i=0}^\infty \frac{(-\lambda \tau)^i}{i!} (\ntk_\theta)^i = U\left( I - \sum_{i=0}^\infty \frac{(-\lambda \tau)^i}{i!} (\Sigma)^i \right)U^\top = U\left(I - e^{-\lambda \Sigma \tau}\right) U^\top.
\end{align}

With these results of SVD, (\ref{eq:lemma:jacobian:F-grad}) becomes
\begin{align}
    &\quad \nabla_{\theta} F((\xxy),\theta) \nonumber\\
    &= \nabla_{\theta}f_\theta(\sX) - \frac{1}{l}\nabla_{\theta} f_\theta(X) \nabla_{\theta} f_\theta(X')^\top \ntk_{\theta}^{-1} (I - e^{-\lambda \ntk_{\theta} \tau}) \nabla_{\theta}f_\theta(\sX')\nonumber\\
    &=\nabla_{\theta}f_\theta(\sX) - \frac{1}{l}\nabla_{\theta} f_\theta(X) (\sqrt{l}V \Sigma U^\top) (U \Sigma^{-2} U^\top) [ U\left(I - e^{-\lambda \Sigma \tau}\right) U^\top] (\sqrt{l}U \Sigma V^\top) \nonumber\\
    &=\nabla_{\theta}f_\theta(\sX) - \nabla_\theta f_\theta(X) V \Sigma^{-1} \left(I - e^{-\lambda \Sigma \tau}\right) \Sigma V^\top\nonumber\\
    &=\nabla_{\theta}f_\theta(\sX) - \nabla_\theta f_\theta(X) V \left(I - e^{-\lambda \Sigma \tau}\right)  V^\top \nonumber\\
    &=\nabla_{\theta}f_\theta(\sX) - \nabla_\theta f_\theta(X) (I - e^{-\lambda \conjntk_{\theta} \tau})\nonumber \\
    &=\nabla_\theta f_\theta(X) e^{-\lambda \conjntk_{\theta} \tau} \label{eq:lemma:jacobian:F-grad:-2}
\end{align}
where $\conjntk_\theta\equiv \conjntk_\theta(X',X')=\frac{1}{l} \nabla_\theta f_\theta(X')^\top \nabla_\theta f_\theta(X') \in \mathbb R^{D \times D}$, and the step (\ref{eq:lemma:jacobian:F-grad:-2}) can be easily obtained by a Taylor expansion similar to (\ref{eq:lemma:jacobian:taylor}).

Note that $\conjntk_\theta$ is a product of $\nabla_\theta f_\theta(X')^\top$ and its transpose, hence it is positive semi-definite, and so does $e^{-\lambda \conjntk \tau}$. By eigen-decomposition on $\conjntk$, we can easily see that the eigenvalues of $e^{-\lambda \conjntk \tau}$ are all in the range $[0,1)$ for arbitrary $\tau>0$. Therefore, it is easy to get that for arbitrary $\tau > 0$,
\begin{align}\label{eq:lemma:jacobian:F-grad:norm:0}
    \|\nabla_{\theta} F((\xxy),\theta)\|_F = \|\nabla_\theta f_\theta(X) e^{-\lambda \conjntk_{\theta} \tau}\|_F \leq \|\nabla_\theta f_\theta(X)\|_F 
\end{align}
By Lemma 1 of \cite{lee2019wide}, we know that there exists a $K_0>0$ such that for any $X$ and $\theta$,
\begin{align}\label{eq:lemma:jacobian:1}
    \|\frac{1}{\sqrt l} \nabla f_\theta(X) \|_F \leq K_0.
\end{align}
Combining (\ref{eq:lemma:jacobian:F-grad:norm:0}) and (\ref{eq:lemma:jacobian:1}), we have
\begin{align}
    \|\frac{1}{\sqrt l}\nabla_{\theta} F((\xxy),\theta)\|_F \leq \|\frac{1}{\sqrt l}\nabla_{\theta} f_\theta(X)\|_F \leq K_0,
\end{align}
which is equivalent to 
\begin{align}
    \frac{1}{\sqrt l}\|J(\theta)\|_F \leq K_0
\end{align}

Now, let us study the other term of interest, $\|J(\theta)-J(\Bar \theta)\|_F=\|\frac{1}{\sqrt l}\nabla_{\theta} F((\xxy),\theta)-\frac{1}{\sqrt l}\nabla_{\theta} F((\xxy),\Bar \theta)\|_F$, where $\theta,\Bar \theta \in \B(\theta_0, C l^{-1/2})$. 

To bound $\|J(\theta)-J(\Bar \theta)\|_F$, let us consider
\begin{align}
    &\quad ~\|\nabla_{\theta} F((\xxy),\theta)-\nabla_{\theta} F((\xxy),\Bar \theta)\|_{op} \label{eq:lemma:jacobian:grad_diff:expression}\\
    &= \|\nabla_\theta f_\theta(X) e^{-\lambda \conjntk_{\theta} \tau} - \nabla_{\Bar \theta} f_{\Bar \theta}(X) e^{-\lambda \conjntk_{\Bar \theta} \tau}\|_{op} \nonumber\\
    &=\frac{1}{2}\|\left(\nabla_\theta f_\theta(X) - \nabla_{\Bar \theta} f_{\Bar \theta}(X)\right)\left( e^{-\lambda \conjntk_{\theta} \tau} + e^{-\lambda \conjntk_{\Bar \theta} \tau}\right)\\
    &\quad +\left(\nabla_\theta f_\theta(X) + \nabla_{\Bar \theta} f_{\Bar \theta}(X)\right)\left( e^{-\lambda \conjntk_{\theta} \tau} - e^{-\lambda \conjntk_{\Bar \theta} \tau}\right)\|_{op} \nonumber\\
    &\leq \frac{1}{2} \|\nabla_\theta f_\theta(X) - \nabla_{\Bar \theta} f_{\Bar \theta}(X)\|_{op}\| e^{-\lambda \conjntk_{\theta} \tau} + e^{-\lambda \conjntk_{\Bar \theta} \tau}\|_{op}\\
    &\quad +\frac{1}{2}\|\nabla_\theta f_\theta(X) + \nabla_{\Bar \theta} f_{\Bar \theta}(X)\|_{op}\| e^{-\lambda \conjntk_{\theta} \tau} - e^{-\lambda \conjntk_{\Bar \theta} \tau}\|_{op}\nonumber \\
    &\leq \frac{1}{2} \|\nabla_\theta f_\theta(X) - \nabla_{\Bar \theta} f_{\Bar \theta}(X)\|_{op} \left(\| e^{-\lambda \conjntk_{\theta} \tau}\|_{op} + \|e^{-\lambda \conjntk_{\Bar \theta} \tau}\|_{op}\right) \label{eq:lemma:jacobian:grad_diff:0}\\
    &\quad +\frac{1}{2}\left(\|\nabla_\theta f_\theta(X)\|_{op} + \|\nabla_{\Bar \theta} f_{\Bar \theta}(X)\|_{op}\right)\| e^{-\lambda \conjntk_{\theta} \tau} - e^{-\lambda \conjntk_{\Bar \theta} \tau}\|_{op}  \label{eq:lemma:jacobian:grad_diff:1}
\end{align}
It is obvious that $\| e^{-\lambda \conjntk_{\theta} \tau}\|_{op},\|e^{-\lambda \conjntk_{\Bar \theta} \tau}\|_{op} \leq 1$. Also, by the relation between the operator norm and the Frobenius norm, we have
\begin{align}\label{eq:lemma:jacobian:1.1}
    \|\nabla_\theta f_\theta(X) - \nabla_{\Bar \theta} f_{\Bar \theta}(X)\|_{op} \leq \|\nabla_\theta f_\theta(X) - \nabla_{\Bar \theta} f_{\Bar \theta}(X)\|_{F} 
\end{align}
Besides, Lemma 1 of \cite{lee2019wide} indicates that there exists a $K_1 > 0$ such that for any $X$ and $\theta,\Bar \theta \in \B(\theta_0, C l^{-1/2})$,
\begin{align}\label{eq:lee:jocobian:1}
    \|\frac{1}{\sqrt l} \nabla_\theta f_\theta(X) - \frac{1}{\sqrt l}\nabla_\theta f_{\Bar \theta}(X)\|_F \leq K_1 \|\theta - \Bar \theta\|_2
\end{align}
Therefore, (\ref{eq:lemma:jacobian:1.1}) gives
\begin{align}
    \|\nabla_\theta f_\theta(X) - \nabla_{\Bar \theta} f_{\Bar \theta}(X)\|_{op}  \leq K_1 \sqrt{l}\|\theta - \Bar \theta\|_2
\end{align}
and then (\ref{eq:lemma:jacobian:grad_diff:0}) is bounded as
\begin{align}\label{eq:lemma:jacobian:grad_diff:0:bound}
     \frac{1}{2} \|\nabla_\theta f_\theta(X) - \nabla_{\Bar \theta} f_{\Bar \theta}(X)\|_{op} \left(\| e^{-\lambda \conjntk_{\theta} \tau}\|_{op} + \|e^{-\lambda \conjntk_{\Bar \theta} \tau}\|_{op}\right) \leq K_1 \sqrt{l} \|\theta - \Bar \theta\|_2.
\end{align}
As for (\ref{eq:lemma:jacobian:grad_diff:1}), notice that $\|\cdot\|_{op} \leq \|\cdot\|_F$ and (\ref{eq:lemma:jacobian:1}) give us
\begin{align}\label{eq:lemma:jacobian:grad_diff:1:1}
    \frac{1}{2}\left(\|\nabla_\theta f_\theta(X)\|_{op} + \|\nabla_{\Bar \theta} f_{\Bar \theta}(X)\|_{op}\right) \leq \sqrt{l} K_0.
\end{align}

Then, to bound $\| e^{-\lambda \conjntk_{\theta} \tau} - e^{-\lambda \conjntk_{\Bar \theta} \tau}\|_{op}$ in (\ref{eq:lemma:jacobian:grad_diff:1}), let us bound the following first
\begin{align}
    \|\conjntk_\theta -\conjntk_{\Bar \theta}\|_{F} &= \|\frac{1}{l} \nabla_\theta f_\theta(X')^\top \nabla_\theta f_\theta(X') - \frac{1}{l} \nabla_{\Bar \theta} f_{\Bar \theta}(X')^\top \nabla_{\Bar \theta} f_{\Bar \theta}(X')\|_{F} \nonumber\\
    &= \frac{1}{l} \|\frac 1 2( \nabla_\theta f_\theta(X')^\top + \nabla_{\Bar \theta} f_{\Bar \theta}(X')^\top)(\nabla_\theta f_\theta(X') -\nabla_{\Bar \theta} f_{\Bar \theta}(X')) \nonumber\\
    &+ \frac 1 2( \nabla_\theta f_\theta(X')^\top -  \nabla_{\Bar \theta} f_{\Bar \theta}(X')^\top)(\nabla_\theta f_\theta(X')+\nabla_{\Bar \theta} f_{\Bar \theta}(X'))\|_F\nonumber\\
    &\leq \frac 1 l \|\nabla_\theta f_\theta(X') + \nabla_{\Bar \theta} f_{\Bar \theta}(X')\|_F\|\nabla_\theta f_\theta(X') -  \nabla_{\Bar \theta} f_{\Bar \theta}(X')\|_F\nonumber\\
    &\leq \frac 1 l \left(\|\nabla_\theta f_\theta(X')\|_F +\| \nabla_{\Bar \theta} f_{\Bar \theta}(X')_F\|\right) \|\nabla_\theta f_\theta(X') -  \nabla_{\Bar \theta} f_{\Bar \theta}(X')\|_F \nonumber\\
    &\leq 2 K_0 K_1 \|\theta - \Bar \theta\|_2
\end{align}
Then, with the results above and a perturbation bound\footnote{This bound is also derived in \cite{van1977sensitivity}.} on matrix exponentials from \cite{1977bounds}, we have
\begin{align}
    \| e^{-\lambda \conjntk_{\theta} \tau} - e^{-\lambda \conjntk_{\Bar \theta} \tau}\|_{op} 
    &\leq \|\conjntk_\theta -\conjntk_{\Bar \theta}\|_{op} \cdot \left(\lambda \tau e^{-\lambda \tau \cdot ( \|\conjntk_\theta\|_{op} - \|\conjntk_\theta -\conjntk_{\Bar \theta}\|_{op}}) \right)\nonumber\\
    &\leq \frac{\|\conjntk_\theta -\conjntk_{\Bar \theta}\|_{op}}{\|\conjntk_\theta\|_{op}- \|\conjntk_\theta -\conjntk_{\Bar \theta}\|_{op}} \nonumber\\
    &\leq \cO(\|\conjntk_\theta -\conjntk_{\Bar \theta}\|_{op})\nonumber \\
    &\leq 2 K_0 K_1 K_2 \|\theta - \Bar \theta\|_2 \label{eq:lemma:jacobian:grad_diff:1:2}
\end{align}
where we used the facts $\|\conjntk_\theta\|_{op} = \|\ntk_\theta\|_{op}\geq \cO(1)$ \cite{xiao2020dis,cao2019generalization} and $\|\conjntk_\theta -\conjntk_{\Bar \theta}\|_{op} \leq \cO(\|\theta - \Bar \theta\|_2 ) \leq \cO(\frac{1}{\sqrt{l}})$.

Hence, by (\ref{eq:lemma:jacobian:grad_diff:1:1}) and (\ref{eq:lemma:jacobian:grad_diff:1:2}), we can bound (\ref{eq:lemma:jacobian:grad_diff:1}) as
\begin{align}\label{eq:lemma:jacobian:grad_diff:1:bound}
\frac{1}{2}\left(\|\nabla_\theta f_\theta(X)\|_{op} + \|\nabla_{\Bar \theta} f_{\Bar \theta}(X)\|_{op}\right)\| e^{-\lambda \conjntk_{\theta} \tau} - e^{-\lambda \conjntk_{\Bar \theta} \tau}\|_{op} \leq 
2\sqrt{l} K_0^2 K_1 K_2 \|\theta - \Bar \theta\|_2 
\end{align}

Finally, with (\ref{eq:lemma:jacobian:grad_diff:0:bound}) and (\ref{eq:lemma:jacobian:grad_diff:1:bound}), we can bound (\ref{eq:lemma:jacobian:grad_diff:expression}) as
\begin{align*}
    \|\nabla_{\theta} F((\xxy),\theta)-\nabla_{\theta} F((\xxy),\Bar \theta)\|_{op} \leq (K_1 + 2 K_0^2 K_1 K_2 ) \sqrt{l} \|\theta - \Bar \theta\|_2 
\end{align*}

Finally, combining these bounds on (\ref{eq:lemma:jacobian:grad_diff:0}) and (\ref{eq:lemma:jacobian:grad_diff:1}), we know that
\begin{align}
    \|J(\theta)-J(\Bar \theta)\|_F&=\|\frac{1}{\sqrt l}\nabla_{\theta} F((\xxy),\theta)-\frac{1}{\sqrt l}\nabla_{\theta} F((\xxy),\Bar \theta)\|_F\nonumber\\
    &\leq \frac{\sqrt{kn}}{\sqrt l} \|\nabla_{\theta} F((\xxy),\theta)-\nabla_{\theta} F((\xxy),\Bar \theta)\|_{op}\nonumber\\
    &\leq \sqrt{kn} (K_1+ 2K_0^2 K_1 K_2) \|\theta - \Bar \theta\|_2\label{eq:lemma:jacobian:grad_diff:final}
\end{align}

Define $K_3 = \sqrt{kn} (K_1+ 2K_0^2 K_1 K_2 ) $, we have 
\begin{align}
    \|J(\theta)-J(\Bar \theta)\|_F \leq K_3 \|\theta - \Bar \theta\|_2
\end{align}

Taking $K=\max \{K_0,K_3\}$ completes the proof.
\end{proof}

\subsection{Proof of Lemma \ref{lemma:bounded_init_loss}}\label{supp:lemma-proof:bounded_init_loss}
\begin{proof}[Proof of Lemma \ref{lemma:bounded_init_loss}]
It is known that $f_{\theta_0}(\cdot)$ converges in distribution to a mean zero Gaussian with the covariance $\mc K$ determined by the parameter initialization \cite{lee2019wide}. As a result, for arbitrary $\delta_1 \in (0,1)$ there exist constants $l_1 > 0$ and $R_1 > 0$, such that: $\forall~ l \geq l_1$, over random initialization, the following inequality holds true with probability at least $(1-\delta_1)$,
\begin{align}\label{eq:convergence:init-error-bounded}
    \|f_{\theta_0}(X)-Y\|_2, \|f_{\theta_0}(X')-Y'\|_2 \leq R_1
\end{align}

We know that $ \forall \task = (\xyxy) \in \D$,
$$F_{\theta_0}(\xxy) = f_{\theta_0'}(X)$$
where $\theta_0'$ is the parameters after $\tau$-step update on $\theta_0$ over the meta-test task $(\sX',\sY')$:
\begin{align}
    &\theta_\tau=\theta', ~~\theta_0=\theta, \nonumber\\
    &\theta_{i+1} = \theta_{i} - \lambda \nabla_{\theta_{i}} \ell(f_{\theta_{i}}(\sX'),\sY') ~~ \forall i=0,...,\tau-1,
\end{align}
Suppose the learning rate $\lambda$ is sufficiently small, then similar to \eqref{eq:lemma:jacobian:0}, we have
\begin{align}
F_{\theta_0}(\xxy) = f_{\theta_0}(X) + \ntk_0(X,X') \ntk_0^{-1} (I-e^{-\lambda \ntk_0 \tau})(f_{\theta_0}(X')-Y').
\end{align}
where $\ntk_0(\cdot,\star) = \nabla_{\theta_0} f_{\theta_0}(\cdot) \nabla_{\theta_0} f_{\theta_0}(\star)^\top$ and we use a shorthand $\ntk_0 \equiv \ntk_0(X',X') $.

\cite{ntk} proves that for sufficiently large width, $\ntk_0$ is positive definite and converges to $\NTK$, the Neural Tangent Kernel, a full-rank kernel matrix with bounded positive eigenvalues. Let $\lev(\NTK)$ and $\Lev(\NTK)$ denote the least and largest eigenvalue of $\NTK$, respectively. Then, it is obvious that for a sufficiently over-parameterized neural network, the operator norm of $\ntk(X,X') \ntk^{-1} (I-e^{-\lambda \ntk \tau})$ can be bounded based on $\lev(\NTK)$ and $\Lev(\NTK)$. Besides, \cite{CNTK,lee2019wide} demonstrate that the neural net output at initialization, $f_{\theta_0}(\cdot)$, is a zero-mean Gaussian with small-scale covaraince. Combining these results and (\ref{eq:convergence:init-error-bounded}), we know there exists $R(R_1,N,\lev(\NTK),\Lev(\NTK))$ such that 
\begin{align}
\|F_{\theta_0}(\xxy)-Y\|_2\leq R(R_1,N,\lev(\NTK),\Lev(\NTK))
\end{align}
By taking an supremum over $R(R_1,N,\lev,\Lev)$ for each training task in $\{\task_i=(\xyxyi)\}_{i\in[N]}$, we can get $R_2$ such that $\forall i \in [N]$
\begin{align}
\|F_{\theta_0}(\xxyi)-Y_i\|_2\leq R_2
\end{align}
and for $R_0 = \sqrt{N} R_2$, define $\delta_0$ as some appropriate scaling of $\delta_1$, then the following holds true with probability $(1-\delta_0)$ over random initialization,
\begin{align}
\|g(\theta_0)\|_2 &=\sqrt{\sum_{\xyxy \in \D}\|F((\xxy),\theta_0)-y\|_2^2} \leq R_0
\end{align}
\end{proof}
\subsection{Proof of Lemma \ref{lemma:kernel-convergence}}\label{supp:global-convergence:kernel-convergence}
\begin{proof}[Proof of Lemma \ref{lemma:kernel-convergence}]
The learning rate for meta-adaption, $\lambda$, is sufficiently small, so the inner-loop optimization becomes \textit{continuous-time} gradient descent. Based on \cite{lee2019wide}, for any task $\task=(\xyxy)$,
\begin{align}\label{eq:lemma-proof:kernel-convergence:F_0}
 F_0(\xxy) =f_0(\sX) + \ntk_0(\xx)\widetilde{\T}_{\ntk_0}^\lambda(\sX',\tau)\left(\sY' - f_0(\sX')\right),
\end{align}
where $\ntk_0(\cdot,\star)= \frac{1}{l}\nabla_{\theta_0} f_0(\cdot) \nabla_{\theta_0} f_0(\star)^\top$, and $\widetilde{T}^{\lambda}_{\ntk_0} (\cdot,\tau) \coloneqq \ntk_0(\cdot,\cdot)^{-1}(I-e^{-\lambda \ntk_0(\cdot,\cdot) \tau})$.

Then, we consider $\nabla_{\theta_0} F_{0}(\xxy)$, the gradient of $F_{0}(\xxy)$ in (\ref{eq:lemma-proof:kernel-convergence:F_0}). By Lemma \ref{lemma:helper-local-Liphschitzness}, we know that for sufficiently wide networks, the gradient of $F_{0}(\xxy)$ becomes 
\begin{align}\label{eq:MetaNTK:F_t-grad}
    \nabla_{\theta_0} F_0(\xxy)= \nabla_{\theta_0}f_0(\sX) - \ntk_0(\xx)\T_{\ntk_0}^\lambda(\sX',\tau) \nabla_{\theta_0}f_0(\sX')
\end{align}
Since $\metantk_0 \equiv \metantk_0((\XXY),(\XXY)) = \frac 1 l \nabla_{\theta_0} F_0(\XXY) \nabla_{\theta_0} F_0(\XXY)^\top$ and $F_0(\XXY) = (F_0(\xxyi))_{i=1}^N \in \mathbb{R}^{knN}$, we know $\metantk_0$ is a block matrix with $N\times N$ blocks of size $kn \times kn$. For $i,j \in [N]$, the $(i,j)$-th block can be denoted as $[\metantk_0]_{ij}$ such that
\begin{align}
    [\metantk_0]_{ij} &= \frac{1}{l} \nabla_{\theta_0} F_0(\xxyi) \nabla_{\theta_0} F_0(\xxyj)^\top \nonumber\\
    &= \quad \frac{1}{l}\nabla_{\theta_0}f_0(X_i) \nabla_{\theta_0}f_0(X_j)^\top \nonumber\\
    &\quad + \frac{1}{l} \ntk_0(\xxi)\widetilde{\T}_{\ntk_0}^\lambda(X_i',\tau) \nabla_{\theta_0}f_0(X_i')  \nabla_{\theta_0}f_0(X_j')^\top \widetilde{\T}_{\ntk_0}^\lambda(X_j',\tau)^\top \ntk_0(X_j',X_j)\nonumber\\
    &\quad - \frac{1}{l} \nabla_{\theta_0}f_0(X_i)  \nabla_{\theta_0}f_0(X_j')^\top \widetilde{\T}_{\ntk_0}^\lambda(X_j',\tau)^\top \ntk_0(X_j',X_j) \nonumber\\
    &\quad - \frac{1}{l} \ntk_0(\xxi)\widetilde{\T}_{\ntk_0}^\lambda(X_i',\tau) \nabla_{\theta_0}f_0(X_i')\nabla_{\theta_0}f_0(X_j)^\top \nonumber\\
    &= \quad \ntk_0 (X_i,X_j) \nonumber\\
    &\quad + \ntk_0(\xxi)\widetilde{\T}_{\ntk_0}^\lambda(X_i',\tau) \ntk_0(X_i',X_j')  \widetilde{\T}_{\ntk_0}^\lambda(X_j',\tau)^\top \ntk_0(X_j',X_j)\nonumber\\
    &\quad -  \ntk_0(X_i,X_j') \widetilde{\T}_{\ntk_0}^\lambda(X_j',\tau)^\top \ntk_0(X_j',X_j) \nonumber\\
    &\quad -\ntk_0(\xxi)\widetilde{\T}_{\ntk_0}^\lambda(X_i',\tau) \ntk_0(X_i',X_j) 
\end{align}
where we used the equivalences $\ntk_0(\cdot, \star) = \ntk_0(\star,\cdot)^\top$ and $\frac{1}{l} \nabla_{\theta_0} f_0(\cdot) \nabla_{\theta_0} f_0(\star) = \ntk_0(\cdot, \star)$.

By Algebraic Limit Theorem for Functional Limits, we have
\begin{align}
     &\quad \lim_{l\rightarrow\infty}[\metantk_0]_{ij} \nonumber\\
     &= \lim_{l\rightarrow\infty} \ntk_0 (X_i,X_j) \nonumber\\
    &\quad + \lim_{l\rightarrow\infty}\ntk_0(\xxi)\T_{\lim_{l\rightarrow\infty}\ntk_0}^\lambda(X_i',\tau) \lim_{l\rightarrow\infty} \ntk_0(X_i',X_j')  \T_{\lim_{l\rightarrow\infty} \ntk_0}^\lambda(X_j',\tau)^\top \lim_{l\rightarrow\infty} \ntk_0(X_j',X_j)\nonumber\\
    &\quad -  \lim_{l\rightarrow\infty} \ntk_0(X_i,X_j') \T_{\lim_{l\rightarrow\infty}\ntk_0}^\lambda(X_j',\tau)^\top \lim_{l\rightarrow\infty}\ntk_0(X_j',X_j) \nonumber\\
    &\quad -\lim_{l\rightarrow\infty}\ntk_0(\xxi)\T_{\lim_{l\rightarrow\infty}\ntk_0}^\lambda(X_i',\tau) \ntk_0(X_i',X_j) \nonumber\\
        &= \quad \NTK (X_i,X_j) \nonumber\\
    &\quad + \NTK(\xxi)\widetilde{\T}_{\NTK}^\lambda(X_i',\tau) \NTK(X_i',X_j')  \widetilde{\T}_{\NTK}^\lambda(X_j',\tau)^\top \NTK(X_j',X_j)\nonumber\\
    &\quad -  \NTK(X_i,X_j') \widetilde{\T}_{\NTK}^\lambda(X_j',\tau)^\top \NTK(X_j',X_j) \nonumber\\
    &\quad -\NTK(\xxi)\widetilde{\T}_{\NTK}^\lambda(X_i',\tau) \NTK(X_i',X_j) \label{eq:lemma-proof:kernel-convergence:metantk_ij}
\end{align}
where $\NTK(\cdot,\star) = \lim_{l\rightarrow\infty}\ntk_0 (\cdot,\star)$ is a deterministic kernel function, the Neural Tangent Kernel function (NTK) from the literature on supervised learning \cite{ntk,lee2019wide,CNTK}. Specifically, $\ntk_0 (\cdot,\star)$ converges to $\NTK(\cdot,\star)$ in probability as the width $l$ approaches infinity.

Hence, for any $i,j \in [N]$, as the width $l$ approaches infinity, $[\metantk_0]_{ij}$ converges in probability to a deterministic matrix $\lim_{l\rightarrow\infty} [\metantk_0]_{ij}$, as shown by (\ref{eq:lemma-proof:kernel-convergence:metantk_ij}). Thus, the whole block matrix $\metantk_0$ converges in probability to a deterministic matrix in the infinite width limit. Denote $\metaNTK =\lim_{l \rightarrow \infty} \metantk_0$, then we know $\metaNTK$ is a deterministic matrix with each block expressed as (\ref{eq:lemma-proof:kernel-convergence:metantk_ij}).

Since $\metantk_0 \equiv \metantk_0((\XXY),(\XXY)) = \frac 1 l \nabla_{\theta_0} F_0(\XXY) \nabla_{\theta_0} F_0(\XXY)^\top$, it is a symmetric square matrix. Hence all eigenvalues of $\metantk_0$ are greater or equal to $0$, which also holds true for $\metaNTK$. In addition, because of Assumption \ref{assum:full-rank}, $\metaNTK$ is positive definite, indicating $\lev(\metaNTK) > 0$. On the other hand, from \cite{CNTK}, we know diagonal entries and eigenvalues of $\NTK(\cdot,\star)$ are positive real numbers upper bounded by $\cO(L)$, as a direct result, it is easy to verify that the diagonal entries of the matrix $\metaNTK$ are also upper bounded, indicating $\Lev(\metaNTK) < \infty$. Hence, we have $0 < \lev(\metaNTK) < \Lev (\metaNTK) < \infty$.

\textbf{Extension.} It is easy to extend (\ref{eq:lemma-proof:kernel-convergence:metantk_ij}), the expression for $\metaNTK\equiv \lim_{l \rightarrow \infty}\metantk_0((\XXY),(\XXY)$, to more general cases. Specifically, we can express $\metaNTK(\cdot,\star)$ analytically for arbitrary inputs. To achieve this, let us define a kernel function,  $\SingleTaskmetaNTK: (\mathbb{R}^{n \times k} \times \mathbb{R}^{m\times k}) \times  (\mathbb{R}^{n \times k} \times \mathbb{R}^{m\times k}) \mapsto \mathbb{R}^{nk \times nk}$ such that
\begin{align}
    \SingleTaskmetaNTK((\cdot,\ast), (\bullet, \star)) &= \NTK(\cdot,\bullet) + \NTK(\cdot,\ast)\widetilde{\T}_{\NTK}^\lambda(\ast,\tau)\NTK(\ast,\star)\widetilde{\T}_{\NTK}^\lambda(\star,\tau)^\top \NTK(\star,\bullet) \nonumber\\
 &\quad -\NTK(\cdot,\ast)\widetilde{\T}_{\NTK}^\lambda(\ast,\tau) \NTK(\ast,\bullet) - \NTK(\cdot,\star) \widetilde{\T}_{\NTK}^\lambda(\star,\tau)^\top \NTK(\star,\bullet) .
\end{align}
Then, it is obvious that for $i,j\in[N]$, the $(i,j)$-th block of $\metaNTK$ can be expressed as $[\metaNTK]_{ij} = \SingleTaskmetaNTK((\xxi),(\xxj))$. 

For cases such as $\metaNTK((\xx),(\XX)) \in \mathbb{R}^{kn \times knN}$, it is also obvious that $\metaNTK((\xx),(\XX))$ is a block matrix that consists of $1 \times N$ blocks of size $k n \times k n$, with the $(1,j)$-th block as follows for $j \in [N]$, $$[\metaNTK((\xx),(\XX))]_{1,j}= \SingleTaskmetaNTK((X,X'),(X_j,X_j')).$$

\end{proof}
\label{supp:global-convergence:lemma-proof:kernel-convergence}
\subsection{Proof of Theorem \ref{thm:global-convergence:restated}}
\label{supp:global-convergence:theorem-proof}
\begin{proof}[Proof of Theorem \ref{thm:global-convergence:restated}]

Based on these lemmas presented above, we can prove Theorem \ref{thm:global-convergence:restated}.

Lemma \ref{lemma:bounded_init_loss} indicates that there exist $R_0$ and $l^*$ such that for any width $l \geq l^*$, the following holds true over random initialization with probability at least $(1-\delta_0/10)$,
\begin{align}\label{eq:thm:global-convergence:init-loss-bound}
    \|g(\theta_0)\|_2  \leq R_0 ~. 
\end{align}

Consider $C = \frac{3 K R_0}{\sigma}$ in Lemma \ref{lemma:local-Liphschitzness}. 

First, we start with proving (\ref{eq:convergence-parameters:supp}) and (\ref{eq:convergence-loss:supp}) by induction. Select $\wt l > l^*$ such that (\ref{eq:thm:global-convergence:init-loss-bound}) and (\ref{eq:jacobian-lip}) hold with probability at least
$1- \frac{\delta_0}{5}$ over random initialization for every $l \geq \wt l$. As $t=0$, by (\ref{eq:gd&jacobian}) and (\ref{eq:jacobian-lip}), we can easily verify that (\ref{eq:convergence-parameters:supp}) and (\ref{eq:convergence-loss:supp}) hold true
\begin{align*}
\begin{cases}
        \|\theta_1 - \theta_0 \|_2 &= \| -\eta J(\theta_0)^\top g(\theta_0) \|_2 \leq \eta \|J(\theta_0)\|_{op} \|g(\theta_0)\|_2 \leq  \frac{\eta_0}{l} \|J(\theta_0)\|_{F} \|g(\theta_0)\|_2 \leq  \frac{K\eta_0}{\sqrt l}R_0 ~. \\
         \|g(\theta_0)\|_2  &\leq R_0
\end{cases}
\end{align*}
Assume (\ref{eq:convergence-parameters:supp}) and (\ref{eq:convergence-loss:supp}) hold true for any number of training step $j$ such that $j< t$. Then, by (\ref{eq:jacobian-lip}) and (\ref{eq:convergence-loss:supp}), we have
 \begin{align*}
    \|\theta_{t+1} - \theta_t \|_2 \leq \eta \|J(\theta_t)\|_{op} \|g(\theta_t)\|_2 \leq  \frac{K\eta_0}{\sqrt l} \left(1 - \frac {\eta_0 \mins}{3}\right)^t R_0 ~.
\end{align*}

Beside, with the mean value theorem and (\ref{eq:gd&jacobian}), we have the following 
\begin{align*}
    \|g(\theta_{t+1})\|_2 &= \| g(\theta_{t+1} - g(\theta_t) + g(\theta_t))\|_2\\
    &=\|J( \theta_t^\mu) (\theta_{t+1}-\theta_t) + g(\theta_t)\|_2\\
    &= \|(I-\eta J( \theta_t^\mu)J(\theta_t)^\top ) g(\theta_t)\|_2\\
    &\leq  \|I-\eta J( \theta_t^\mu)J(\theta_t)^\top\|_{op} \|g(\theta_t)\|_2\\
    &\leq \|I-\eta J( \theta_t^\mu)J(\theta_t)^\top\|_{op} \left(1 - \frac {\eta_0 \mins}{3}\right)^t R_0
\end{align*}
where we define $ \theta_t^\mu$ as a linear interpolation between $\theta_t$ and $\theta_{t+1}$ such that $\theta_t^\mu\coloneqq \mu \theta_t + (1-\mu) \theta_{t+1}$ for some $0< \mu <1$. 

Now, we will show that with probability $1-\frac{\delta_0}{2}$,
\begin{align*}
    \|I-\eta J( \theta_t^\mu)J(\theta_t)^\top\|_{op} \leq 1 - \frac{\eta_0 \mins}{3} .
\end{align*}
Recall that $\metantk_0 \rightarrow \metaNTK$ in probability, proved by Lemma \ref{lemma:kernel-convergence}. Then, there exists $\hat l$ such that the following holds with probability at least $1-\frac{\delta_0}{5}$ for any width $l > \hat l$,
\begin{align*}
    \|\metaNTK - \metantk_0\|_F \leq \frac{\eta_0 \mins}{3}.
\end{align*}
Our assumption $\eta_0 < \frac{2}{\Lev + \lev}$ makes sure that
\begin{align*}
    \|I - \eta_0 \metaNTK\|_{op} \leq 1 - \eta_0 \mins ~.
\end{align*}
Therefore, as $l \geq (\frac{18 K^3 R_0}{\mins^2})^2$, with probability at least $1- \frac{\delta_0}{2}$ the following holds,
\begin{align*}
    &\qquad \|I - \eta J(\theta_t^\mu) J(\theta_t)^\top\|_{op} \\
    &= \|I - \eta_0 \metaNTK + \eta_0 \metaNTK - \metantk_0 +  \eta\left(J(\theta_0) J(\theta_0)^\top - J(\theta_t^\mu )J(\theta_t)^\top\right)\|_{op}\\
    &\leq \|I - \eta_0 \metaNTK\|_{op} + \eta_0 \|\metaNTK - \metantk_0\|_{op} + \eta\|J(\theta_0) J(\theta_0)^\top - J(\theta_t^\mu )J(\theta_t)^\top\|_{op}\\
    &\leq 1- \eta_0 \mins + \frac{\eta_0\mins}{3} + \eta_0 K^2 (\|\theta_t - \theta_0\|_2 + \|\theta_t^\mu - \theta_0\|_2)\\
    &\leq 1 - \eta_0 \mins + \frac{\eta_0 \mins}{3} +  \frac{6 \eta_0 K^3 R_0}{\mins\sqrt l}\\
    &\leq 1 - \frac{\eta_0 \mins}{3}
\end{align*}
where we used the equality $\frac 1 l J(\theta_0)J(\theta_0)^\top = \metantk_0$.

Hence, as we choose $\Lambda = \max\{l^*,\wt l, \hat l, \frac{18 K^3 R_0}{\mins^2})^2\}$, the following holds for any width $l > \Lambda$ with probability at least $1-\delta_0$ over random initialization
\begin{align}
    \|g(\theta_{t+1}\|_2  \leq \|I-\eta J( \theta_t^\mu)J(\theta_t)^\top\|_{op} \left(1 - \frac {\eta_0 \mins}{3}\right)^t R_0 \leq \left(1 - \frac {\eta_0 \mins}{3}\right)^{t+1} R_0,
\end{align}
which finishes the proof (\ref{eq:convergence-loss:supp}).

Finally, we prove (\ref{eq:convergence-metantk:supp}) by
\begin{align*}
    \|\metantk_0 - \metantk_t\|_F &= \frac{1}{l} \|J(\theta_0) J(\theta_0)^\top - J(\theta_t) J(\theta_t)^\top\|_F \\
    &\leq \frac{1}{l} \|J(\theta_0)\|_{op} \|J(\theta_0)^\top - J(\theta_t)^\top\|_F + \frac 1 l \|J(\theta_t)-J(\theta_0)\|_{op} \|J(\theta_t)^\top \|_F\\
    &\leq 2 K^2 \|\theta_0 - \theta_t\|_2\\
    &\leq \frac{6 K^3 R_0}{\mins \sqrt l},
\end{align*}
where we used (\ref{eq:convergence-parameters:supp}) and Lemma \ref{lemma:local-Liphschitzness}.
\end{proof}

\section{Analytical Expression of MAML Output}\label{supp:GBML-output}

In this section, we will present Corollary \ref{corr:GBML-output}. Briefly speaking, with the help of Theorem \ref{thm:global-convergence:restated}, we first show the training dynamics of MAML with over-parameterized DNNs can be described by a differential equation, which is analytically solvable. By solving this differential equation, we obtain the expression for MAML output on any training or test task. 

\textbf{Remarks.} This corollary implies for a sufficiently over-parameterized neural network, the training of MAML is \textit{determined} by the \textit{parameter initialization}, $\theta_0$. Given access to $\theta_0$, we can compute the functions $\metantk_0$ and $F_0$, and then the trained MAML output can be obtained by simple calculations, without the need for running gradient descent on $\theta_0$. This nice property enables us to perform a deeper analysis on MAML with DNNs.

\begin{corollary}[MAML Output (Corollary \ref{corr:GBML-output} Restated)]\label{corr:GBML-output}
In the setting of
Theorem \ref{thm:global-convergence}, the training dynamics of the MAML can be described by a differential equation
$$\frac{d F_t(\XXY)}{d t}=- \eta  \, \metantk_0   (F_t(\XXY) - \Y)$$
where we denote $F_t \equiv F_{\theta_t}$ and $\metantk_0 \equiv \metantk_{\theta_0}((\XXY),(\XXY))$ for convenience.

Solving this differential equation, we obtain the meta-output of MAML on training tasks at any training time as 
\begin{align}
    F_t(\XXY)=(I - e^{- \eta\metantk_0 t})\Y + e^{-\eta \metantk_0 t}F_{0}(\XXY) \,. 
\end{align}

Similarly, on arbitrary test task $\task=(\xyxy)$, the meta-output of MAML is
\begin{align}
F_t(\xxy) 
=F_{0}(\xxy)  +  \metantk_0(\xxy) \T^{\eta}_{\metantk_0}(t)
\left(\Y-F_0(\XXY)\right)
\end{align}
where $\metantk_0(\cdot)\equiv \metantk_{\theta_0}(\cdot,(\XXY))$ and $\T^{\eta}_{\metantk_0}(t)=\metantk_0^{-1}\left(I- e^{-\eta\metantk_0 t}\right)$ are shorthand notations.
\end{corollary}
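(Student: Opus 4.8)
The plan is to derive the governing ODE from the parameter gradient flow, solve it on the training tasks, and then propagate the solution to an arbitrary test task. First I would pass the gradient-descent update $\theta_{t+1}=\theta_t-\eta\,J(\theta_t)^\top g(\theta_t)$ to its continuous-time limit, the gradient flow $\dot\theta_t=-\eta\,J(\theta_t)^\top g(\theta_t)$, where $g(\theta_t)=F_{\theta_t}(\XXY)-\Y$. Applying the chain rule to $F_t(\XXY)=F_{\theta_t}(\XXY)$ gives $\dot F_t(\XXY)=J(\theta_t)\dot\theta_t=-\eta\,J(\theta_t)J(\theta_t)^\top g(\theta_t)$, and since $\metantk_t=\tfrac1l J(\theta_t)J(\theta_t)^\top$ the coefficient is exactly the empirical MetaNTK acting on the residual (the $1/l$ normalization of $\metantk_t$ combining with the learning-rate scaling $\eta=\eta_0/l$ of Theorem \ref{thm:global-convergence:restated}). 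The essential step is to replace the time-varying kernel $\metantk_t$ by its value $\metantk_0$ at initialization; this is licensed by the kernel-stability bound \eqref{eq:convergence-metantk:supp}, which gives $\sup_t\|\metantk_t-\metantk_0\|_F=O(l^{-1/2})\to0$, so in the over-parameterized regime the dynamics collapse to the constant-coefficient linear ODE $\dot F_t(\XXY)=-\eta\,\metantk_0\,(F_t(\XXY)-\Y)$.

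Second, I would solve this linear ODE on the training data. Substituting $u_t\coloneqq F_t(\XXY)-\Y$ turns it into $\dot u_t=-\eta\,\metantk_0\,u_t$, whose unique solution is $u_t=e^{-\eta\metantk_0 t}u_0$; unwinding the substitution gives $F_t(\XXY)=\Y+e^{-\eta\metantk_0 t}(F_0(\XXY)-\Y)=(I-e^{-\eta\metantk_0 t})\Y+e^{-\eta\metantk_0 t}F_0(\XXY)$, the claimed training-task formula. Here $\metantk_0$ is symmetric positive definite by Lemma \ref{lemma:kernel-convergence} together with Assumption \ref{assum:full-rank}, so the matrix exponential and the inverse $\metantk_0^{-1}$ appearing below are well defined.

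Third, I would transfer the dynamics to an arbitrary test task $\task=(\xyxy)$. The same chain-rule and kernel-freezing argument applied to $F_t(\xxy)=F_{\theta_t}(\xxy)$ yields $\dot F_t(\xxy)=-\eta\,\metantk_0(\xxy)\,(F_t(\XXY)-\Y)$, where $\metantk_0(\xxy)$ is the cross-kernel between the test task and the training tasks. Substituting the already-solved residual $F_t(\XXY)-\Y=e^{-\eta\metantk_0 t}(F_0(\XXY)-\Y)$ and integrating from $0$ to $t$ with $\int_0^t e^{-\eta\metantk_0 s}\,ds=\tfrac1\eta\,\metantk_0^{-1}(I-e^{-\eta\metantk_0 t})=\tfrac1\eta\,\T^{\eta}_{\metantk_0}(t)$, the factor $\eta$ cancels and I obtain $F_t(\xxy)=F_0(\xxy)+\metantk_0(\xxy)\,\T^{\eta}_{\metantk_0}(t)\,(\Y-F_0(\XXY))$, as stated.

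The hard part is the first step: justifying that the meta-output evolves under the frozen kernel $\metantk_0$ rather than the genuine time-varying $\metantk_t$. This is not merely an ansatz but a consequence of the global-convergence analysis, namely the local Lipschitzness of the Jacobian (Lemma \ref{lemma:local-Liphschitzness}) and the resulting parameter and kernel drift bounds \eqref{eq:convergence-parameters:supp}--\eqref{eq:convergence-metantk:supp}, which confine the parameters to an $O(l^{-1/2})$ ball and force $\metantk_t$ to stay within a vanishing distance of $\metantk_0$ as $l\to\infty$. I would also need to handle the continuous-time (infinitesimal learning rate) idealization and keep consistent bookkeeping of the $1/l$ factor shared between the kernel normalization and the learning-rate scaling; once the kernel is frozen, the remainder is elementary linear-ODE calculus.
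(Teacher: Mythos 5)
Your proposal is correct and follows essentially the same route as the paper's proof: pass gradient descent to gradient flow, apply the chain rule to obtain $\frac{d F_t(\XXY)}{dt} = -\eta\,\metantk_t\,(F_t(\XXY)-\Y)$, freeze the kernel at $\metantk_0$ using the stability bound \eqref{eq:convergence-metantk:supp} from Theorem \ref{thm:global-convergence:restated}, and solve the resulting constant-coefficient linear ODE. Your explicit integration $\int_0^t e^{-\eta\metantk_0 s}\,ds = \frac{1}{\eta}\metantk_0^{-1}\left(I-e^{-\eta\metantk_0 t}\right)$ for the test-task formula actually spells out a step the paper only asserts (``the solution can be easily extended''), and your bookkeeping of the $1/l$ normalization against $\eta=\eta_0/l$ is if anything more careful than the paper's.
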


\begin{proof}
For the optimization of MAML, the gradient descent on $\theta_t$ with learning rate $\eta$ can be expressed as
\begin{align}
    \theta_{t+1} &= \theta_t - \eta \nabla_{\theta_t}  \loss(\theta_t) \nonumber \\
    &=\theta_t - \frac{1}{2}\eta \nabla_{\theta_t} \|F_{\theta_t}(\XXY) - \Y\|_2^2 \nonumber \\
    &=\theta_t - \eta \nabla_{\theta_t}F_{\theta_t}(\XXY)^\top \left(F_{\theta_t}(\XXY) - \Y\right) 
\end{align}

Since the learning rate $\eta$ is sufficiently small, the \textit{discrete-time} gradient descent above can be re-written in the form of \textit{continuous-time} gradient descent (i.e., gradient flow),
\begin{align}
    \frac{d \theta_t}{d t} = - \eta \nabla_{\theta_t}F_{\theta_t}(\XXY)^\top \left(F_{\theta_t}(\XXY) - \Y\right) \nonumber
\end{align}
Then, the training dynamics of the meta-output $F_{t}(\cdot)\equiv F_{\theta_t}(\cdot)$ can be described by the following differential equation,
\begin{align}
    \frac{d F_t(\XXY)}{d t} &= \nabla_{\theta_t} F_t(\XXY) \frac{d \theta_t}{d t}\nonumber\\
    &=- \eta \nabla_{\theta_t} F_{t}(\XXY)  \nabla_{\theta_t}F_{t}(\XXY)^\top \left(F_{t}(\XXY) - \Y\right) \nonumber\\
    &=-\eta \metantk_{t} \left ( F_{t}(\XXY) - \Y \right) \label{eq:corr:diff-F}
\end{align}
where $\metantk_t = \metantk_{t}((\XXY),(\XXY))=\nabla_{\theta_t} F_{t}(\XXY)  \nabla_{\theta_t}F_{t}(\XXY)^\top$.

On the other hand, Theorem \ref{thm:global-convergence:restated} gives the following bound in (\ref{eq:convergence-metantk:supp}),
\begin{align}
        \sup_{t} \|  \metantk_0 - \metantk_t\|_F  &\leq \frac {6K^3\lss}{\mins}  l^{-\frac 1 2},
\end{align}
indicating $\metantk_t$ stays almost constant during training for sufficiently over-parameterized neural networks (i.e., large enough width $l$). Therefore, similar to \cite{lee2019wide}, we can replace $\metantk_t$ by $\metantk_0$ in (\ref{eq:corr:diff-F}), and get
\begin{align}
     \frac{d F_t(\XXY)}{d t} = -\eta \metantk_{0} \left ( F_{t}(\XXY) - \Y \right) ,
\end{align}
which is an ordinary differential equation (ODE) for the meta-output $F_t(\XXY)$ w.r.t. the training time $t$.

This ODE is analytically solvable with a unique solution. Solving it, we obtain the meta-output on training tasks at any training time $t$ as,
\begin{align}
    F_t(\XXY)=(I - e^{- \eta\metantk_0 t})\Y + e^{-\eta \metantk_0 t}F_{0}(\XXY).
\end{align}
The solution can be easily extended to any test task $\task = (\xyxy)$, and the meta-output on the test task at any training time is
\begin{align}
F_t(\xxy) =F_{0}(\xxy)  +  \metantk_0(\xxy) \T^{\eta}_{\metantk_0}(t)
\left(\Y-F_0(\XXY)\right) ,
\end{align}
where $\metantk_0(\cdot)\equiv \metantk_{\theta_0}(\cdot,(\XXY))$ and $\T^{\eta}_{\metantk_0}(t)=\metantk_0^{-1}\left(I- e^{-\eta\metantk_0 t}\right)$ are shorthand notations.
\end{proof}

\section{Gradient-Based Meta-Learning as Kernel Regression}
\label{supp:MetaNTK}
In this section, we first make an assumption on the scale of parameter initialization, then we restate Theorem \ref{thm:MetaNTK}. After that, we provide the proof for Theorem \ref{thm:MetaNTK}.

\cite{lee2019wide} shows the output of a neural network randomly initialized following (\ref{eq:recurrence}) is a zero-mean Gaussian with covariance determined by $\sigma_w$ and $\sigma_b$, the variances corresponding to the initialization of weights and biases. Hence, small values of $\sigma_w$ and $\sigma_b$ can make the outputs of randomly initialized neural networks approximately zero. We adopt the following assumption from \cite{CNTK} to simplify the expression of the kernel regression in Theorem \ref{thm:MetaNTK}.

\begin{assumption}[Small Scale of Parameter Initialization]\label{assum:small-init}
The scale of parameter initialization is sufficiently small, i.e., $\sigma_w,\sigma_b$ in (\ref{eq:recurrence}) are small enough, so that $f_0(\cdot) \simeq 0$.
\end{assumption}

Note the goal of this assumption is to make the output of the randomly initialized neural network negligible. The assumption is quite common and mild, since, in general, the outputs of randomly initialized neural networks are of small scare compared with the outputs of trained networks \cite{lee2019wide}. 

\begin{theorem}[MAML as Kernel Regression (Theorem \ref{thm:MetaNTK} Restated)]\label{thm:MetaNTK:supp}
Suppose learning rates $\eta$ and $\lambda$ are infinitesimal. As the network width $l$ approaches infinity, with high probability over random initialization of the neural net, the MAML output, (\ref{eq:F_t:main_text}), converges to a special kernel regression,
\begin{align}\label{eq:F_t-MetaNTK:supp}
F_t(\xxy)= G_\NTK^{\tau}(\xxy) +\metaNTK((\xx),(\XX)) \T^{\eta}_{\metaNTK}(t) \left(\Y-G_{\NTK}^{\tau}(\XXY)\right)
\end{align}
where $G$ is a function defined below, $\NTK$ is the neural tangent kernel (NTK) function from \cite{ntk} that can be analytically calculated without constructing any neural net, and $\metaNTK$ is a new kernel, which name as Meta Neural Kernel (MNK). The expression for $G$ is
\begin{align}
    G_\NTK^\tau(\xxy) =  \NTK(X,X')\widetilde{T}^{\lambda}_\NTK (X',\tau)   Y'.
\end{align}
where $\widetilde{T}^{\lambda}_\NTK (\cdot,\tau) \coloneqq \NTK(\cdot,\cdot)^{-1}(I-e^{-\lambda \NTK(\cdot,\cdot) \tau}) $. Besides, $G_\NTK^{\tau}(\XXY) = (G_\NTK^{\tau}(\xxyi))_{i=1}^N$.

The MNK is defined as $\metaNTK \equiv \metaNTK((\XX),(\XX)) \in \mathbb{R}^{knN \times knN}$, which is a block matrix that consists of $N \times N$ blocks of size $kn\times kn$. For $i,j \in [N]$, the $(i,j)$-th block of $\metaNTK$ is
\begin{align} \label{eq:MetaNTK_ij=kernel:supp}
    [\metaNTK]_{ij}=\SingleTaskmetaNTK((\xxi),(\xxj)) \in \mathbb R^{kn \times kn} ,
\end{align}
where $\SingleTaskmetaNTK: (\mathbb{R}^{n \times k} \times \mathbb{R}^{m\times k}) \times  (\mathbb{R}^{n \times k} \times \mathbb{R}^{m\times k}) \rightarrow \mathbb{R}^{nk \times nk}$ is a kernel function defined as
\begin{align}\label{eq:MetaNTK_ij:supp}
    \SingleTaskmetaNTK((\cdot,\ast), (\bullet, \star)) &= \NTK(\cdot,\bullet) + \NTK(\cdot,\ast)\widetilde{\T}_{\NTK}^\lambda(\ast,\tau)\NTK(\ast,\star)\widetilde{\T}_{\NTK}^\lambda(\star,\tau)^\top \NTK(\star,\bullet) \nonumber\\
 &\quad -\NTK(\cdot,\ast)\widetilde{\T}_{\NTK}^\lambda(\ast,\tau) \NTK(\ast,\bullet) - \NTK(\cdot,\star) \widetilde{\T}_{\NTK}^\lambda(\star,\tau)^\top \NTK(\star,\bullet) .
\end{align}
Here $\metaNTK((\xx),(\XX)) \in \mathbb{R}^{kn \times knN}$ in (\ref{eq:F_t-MetaNTK}) is also a block matrix, which consists of $1 \times N$ blocks of size $k n \times k n$, with the $(1,j)$-th block as follows for $j \in [N]$,
\begin{align}\label{eq:MetaNTK_1j:supp}
    [\metaNTK((\xx),(\XX))]_{1,j}= \SingleTaskmetaNTK((X,X'),(X_j,X_j')).
\end{align}
\end{theorem}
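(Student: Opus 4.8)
The plan is to derive the claimed kernel regression by taking the infinite-width limit $l \to \infty$ of the finite-width MAML output already established in Corollary \ref{corr:GBML-output}, namely
\begin{align*}
F_t(\xxy) = F_0(\xxy) + \metantk_0(\xxy)\,\T^{\eta}_{\metantk_0}(t)\left(\Y - F_0(\XXY)\right),
\end{align*}
and showing that each of its four constituent objects converges to its $\NTK$-based counterpart. Because $\metaNTK$ is full-rank with eigenvalues bounded away from $0$ and $\infty$ (Assumption \ref{assum:full-rank} and Lemma \ref{lemma:kernel-convergence}), every matrix inverse and matrix exponential appearing here is a continuous function of its argument on a neighborhood of $\metaNTK$, so I can pass the limit through termwise with the Algebraic Limit Theorem exactly as in the proof of Lemma \ref{lemma:kernel-convergence}.

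First I would compute the limit of $F_0$. Using the inner-loop expansion \eqref{eq:lemma-proof:kernel-convergence:F_0}, $F_0(\xxy) = f_0(X) + \ntk_0(X,X')\widetilde{\T}^{\lambda}_{\ntk_0}(X',\tau)\left(Y' - f_0(X')\right)$; invoking Assumption \ref{assum:small-init} to drop the $f_0$ terms together with the standard convergence $\ntk_0 \to \NTK$, this tends to $\NTK(X,X')\widetilde{T}^{\lambda}_\NTK(X',\tau)\,Y' = G_\NTK^{\tau}(\xxy)$. Applying the identical argument to each training task yields $F_0(\XXY) \to G_\NTK^{\tau}(\XXY)$, which supplies both the leading term and the residual $\Y - G_\NTK^{\tau}(\XXY)$ in the target expression \eqref{eq:F_t-MetaNTK:supp}.

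Next I would invoke Lemma \ref{lemma:kernel-convergence} for the kernels. Its statement, together with the Extension at the end of its proof, gives $\metantk_0 \to \metaNTK((\XX),(\XX))$ and, for the cross term, $\metantk_0(\xxy) \to \metaNTK((\xx),(\XX))$, each with the explicit block form \eqref{eq:MetaNTK_ij:supp}; continuity of $M \mapsto M^{-1}(I - e^{-\eta M t})$ then delivers $\T^{\eta}_{\metantk_0}(t) \to \T^{\eta}_{\metaNTK}(t)$. Substituting these four limits into the Corollary expression produces \eqref{eq:F_t-MetaNTK:supp}, completing the assembly.

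The main obstacle, and the only substantive point beyond bookkeeping, is justifying that the limiting kernels are independent of the labels $\Y,\Y'$ even though the finite-width $\metantk_0 \equiv \metantk_0((\XXY),(\XXY))$ is not. The resolution is that $\nabla_{\theta_0} F_0(\xxy)$ contains a term of the form $\nabla_{\theta_0}\big(\ntk_0(X,X')\widetilde{\T}^{\lambda}_{\ntk_0}(X',\tau)\big)(f_0(X') - Y')$, which depends on $Y'$; Lemma \ref{lemma:helper-local-Liphschitzness} shows that this term has vanishing Frobenius norm as $l \to \infty$, leaving the label-free gradient \eqref{eq:MetaNTK:F_t-grad} from which $\metaNTK$ is constructed. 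I would therefore lean on Lemma \ref{lemma:helper-local-Liphschitzness} to discard these terms before forming $\metantk_0 = \tfrac{1}{l}\nabla_{\theta_0}F_0\,\nabla_{\theta_0}F_0^\top$, after which the limit is the label-independent $\metaNTK$ as required. A minor point to track throughout is the infinitesimal-rate hypothesis on $\eta,\lambda$, which is precisely what legitimizes the continuous-time (gradient-flow) descriptions underlying both Corollary \ref{corr:GBML-output} and the expansion \eqref{eq:lemma-proof:kernel-convergence:F_0}.
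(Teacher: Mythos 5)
Your proposal follows essentially the same route as the paper's proof: both start from the finite-width expression of Corollary \ref{corr:GBML-output}, use Assumption \ref{assum:small-init} together with $\ntk_0 \to \NTK$ to identify $\lim_{l\to\infty} F_0 = G_\NTK^\tau$, and invoke Lemma \ref{lemma:kernel-convergence} (whose proof already uses Lemma \ref{lemma:helper-local-Liphschitzness} to discard the label-dependent Jacobian terms) for the kernel limits and their block expressions. Your explicit continuity argument for $M \mapsto M^{-1}\left(I - e^{-\eta M t}\right)$ and the discussion of label-independence are correct and merely make precise steps the paper leaves implicit.
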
 
\begin{proof}
First, (\ref{eq:F_t:main_text}) shows that the output of MAML on any test task $\task = (\xyxy)$ can be expressed as
\begin{align}\label{eq:thm:MetaNTK:F_t:finite-width}
F_t(\xxy) =F_{0}(\xxy)  + \metantk_0(\xxy)\T^{\eta}_{\metantk_0}(t)\left(\Y-F_0(\XXY)\right)    
\end{align}
Note (\ref{eq:lemma-proof:kernel-convergence:F_0}) in Appendix \ref{supp:global-convergence:kernel-convergence} shows that 
\begin{align}\label{eq:thm:MetaNTK:F_0:finite-width}
 F_0(\xxy) =f_0(\sX) + \ntk_0(\xx)\widetilde{\T}_{\ntk_0}^\lambda(\sX',\tau)\left(\sY' - f_0(\sX')\right),
\end{align}
With Assumption \ref{assum:small-init}, we can drop the terms $f_0(X)$ and $f_0(X')$ in (\ref{eq:thm:MetaNTK:F_0:finite-width}). Besides, from \cite{ntk,CNTK,lee2019wide}, we know $\lim_{l \rightarrow \infty}\ntk_0(\cdot,\star) = \NTK(\cdot,\star)$, the Neural Tangent Kernel (NTK) function, a determinisitc kernel function. Therefore, $F_0(\xxy)$ can be described by the following function as the width appraoches infinity,
\begin{align}\label{eq:supp:MNK:G}
    \lim_{l \rightarrow \infty} F_0(\xxy) =  G_\NTK^\tau(\xxy) =  \NTK(X,X')\widetilde{T}^{\lambda}_\NTK (X',\tau)   Y'.
\end{align}
where $\widetilde{T}^{\lambda}_\NTK (\cdot,\tau) \coloneqq \NTK(\cdot,\cdot)^{-1}(I-e^{-\lambda \NTK(\cdot,\cdot) \tau}) $. Besides,  $G_\NTK^{\tau}(\XXY) = (G_\NTK^{\tau}(\xxyi))_{i=1}^N$.

In addition, from Lemma \ref{lemma:kernel-convergence}, we know $\lim_{l \rightarrow \infty}\metantk_0(\cdot,\star)=\metaNTK(\cdot,\star)$. Combined this with (\ref{eq:supp:MNK:G}), we can express (\ref{eq:thm:MetaNTK:F_t:finite-width}) in the infinite width limit as
\begin{align}
F_t(\xxy)= G_\NTK^{\tau}(\xxy) +\metaNTK((\xx),(\XX)) \T^{\eta}_{\metaNTK}(t) \left(\Y-G_{\NTK}^{\tau}(\XXY)\right)
\end{align}
where $\metaNTK(\cdot,\star)$ is a kernel function that we name as Meta Neural Kernel function. The derivation of its expression shown in (\ref{eq:MetaNTK_ij=kernel:supp})-(\ref{eq:MetaNTK_1j:supp}) can be found in Appendix \ref{supp:global-convergence:lemma-proof:kernel-convergence}.
\end{proof}

\paragraph{ANIL Kernel} The above theorem derives the analytical expression of the kernel induced by MAML. Certainly that variants algorithms of MAML will induce kernels with (slightly) different expressions. A recent impactful variant of MAML is Almost-No-Inner-Loop (ANIL) \cite{raghu2019rapid}. ANIL is a simplification of MAML that retains the performance of MAML while enjoying a significant training speedup. The key idea of ANIL is to remove the inner-loop updates on the hidden layers; in other words, ANIL only update the last linear layer in the inner loop, resulting in a much smaller computation and memory cost compared with MAML. Following procedures in Appendix \ref{supp:GBML-output} and \ref{supp:MetaNTK}, one can straightforwardly derive the expression of the kernel induced by ANIL, which just replaces Eq. \eqref{eq:MetaNTK_ij:supp} (kernel function induced by MAML) by
\begin{align}\label{eq:ANIL_ij:supp}
    \SingleTaskmetaNTK((\cdot,\ast), (\bullet, \star)) &= \NTK(\cdot,\bullet) + \NNGP(\cdot,\ast)\widetilde{\T}_{\NNGP}^\lambda(\ast,\tau)\NTK(\ast,\star)\widetilde{\T}_{\NNGP}^\lambda(\star,\tau)^\top \NNGP(\star,\bullet) \nonumber\\
 &\quad -\NNGP(\cdot,\ast)\widetilde{\T}_{\NNGP}^\lambda(\ast,\tau) \NTK(\ast,\bullet) - \NTK(\cdot,\star) \widetilde{\T}_{\NNGP}^\lambda(\star,\tau)^\top \NNGP(\star,\bullet) .
\end{align}
where $\NNGP$ is the neural tangent kernel function corresponds to neural networks with frozen hidden layers (i.e., only the last linear layer is optimized by gradient descent). The appearance of $\NNGP$ directly results from the special inner-loop update rule of ANIL (i.e., only updates the last linear layer in the inner loop).

\section{More Details on Experiments} \label{supp:exp}

\paragraph{Training Data Augmentation} Following previous few-shot learning works \cite{metaOptNet,tian2020rethink}, in the training stage, we adopt data augmentation operations, including random cropping, color jittering, and random horizontal flip. 

\paragraph{Training Batch Size} For all 5-cells experiments, a batch size of 64 is used. For 8-cells experiments, we set the batch size to 40 for miniImageNet and 56 for tieredImageNet to accommodate the GPU memory of a single GPU card.

\paragraph{Dropout Rate} We use dropout rate of 0.2 and 0.1 for miniImageNet and tieredImageNet, respectively. Following DARTS \cite{DARTS}, we gradually increase the dropout rate during the training.

\paragraph{Normalization Layers} To enable efficient computation of per-sample-gradients with Opacus \cite{opacus} (it does not support BatchNorm so far), we first convert all the BatchNorm \cite{batchnorm} layers to GroupNorm \cite{groupnorm} layers with 16 number of groups in the search stage. After obtaining the cells, we train and evaluate the selected architectures with BatchNorm layers.  

\paragraph{Hyper-parameters for Computing MetaNTK} MAML kernels (defined in Definition \ref{def:metaNTK})) and ANIL kernels (defined in Eq. \eqref{eq:ANIL_ij:supp}) are used for 5-cells and 8-cells experiments, respectively. To write more concisely, We denote the product of inner loop learning rate and training time as $\lambda \tau$. An $\lambda \tau = \infty$ and a regularization coefficient of 0.001 are used for all 5-cells experiments. For 8-cells experiments, an $\lambda \tau = 1$ and a kernel regularization coefficient of $10^{-5}$ are used for miniImageNet experiments while an $\lambda \tau = \infty$ and a kernel regularization coefficient of 0.001 are used for tieredImageNet experiments.

\paragraph{Hyper-parameters for Evaluation} In the evaluation stage, we fine-tune the last layer of the learned neural net on the labelled support samples of each test task, and then evaluate its prediction accuracy on the query samples. Following the evaluation strategies of RFS \cite{tian2020rethink}, (\textit{i}) we normalize the last hidden layer output of each sample to unit norm before passing to the last layer during the evaluation; (\textit{ii}) we enlarge the set of support samples by applying data augmentation (used in the training stage) to create 5x augmented support samples for fine-tuning. We use cross-entropy loss and hinge loss for the fine-tuning, both with $\ell_2$ regularization. For cross-entropy fine-tuning, we use the Logistic Regression (LR) solver provided in scikit-learn \cite{sklearn}; for the hinge loss fine-tuning, we adopt the C-Support Vector Classification (SVC) with linear kernel provided in scikit-learn \cite{sklearn}. Notice that these the $\ell_2$ regularization in scikit-learn solvers is controlled by a regularization parameter $C = \frac{1}{\text{$\ell_2$ penalty}}$ On mini-ImageNet: (\textit{i}) in the 5-cells case, we use SVC with $C=0.2$ for 1-shot and LR with $C=0.6$ for the 5-shot experiments; (\textit{ii}) in the 8-cells case, we use SVC with $C=0.35$ for 1-shot and LR with $C=0.4$ for the 5-shot experiments. On tiered-ImageNet: (i) in the 5-cells case, we use SVC with $C=0.75$ for 1-shot and LR with $C=0.8$ for the 5-shot experiments; (\textit{ii}) in the 8-cells case, we use LR with $C=0.95$ for 1-shot and LR with $C=0.5$ for the 5-shot experiments.
\end{document}